\documentclass{article}

\usepackage[accepted]{icml2026}

\usepackage{xurl}
\usepackage{amsmath}
\usepackage{amssymb}
\usepackage{amsthm}
\usepackage{mathtools}
\usepackage{bm,bbm}
\usepackage{multirow} 
\usepackage{makecell}
\usepackage{graphicx}
\usepackage{subcaption}
\usepackage{booktabs}
\usepackage{tabularx,longtable}
\usepackage{xcolor}
\usepackage{colortbl}
\definecolor{better}{RGB}{34,139,34}  
\definecolor{worse}{RGB}{178,34,34}
\usepackage{algorithm}
\usepackage{algorithmic}
\usepackage{microtype}
\usepackage{siunitx}
\usepackage{enumitem}
\usepackage[textsize=tiny]{todonotes}
\usepackage{caption}
\usepackage{hyperref}
\usepackage[capitalize,noabbrev]{cleveref}

\theoremstyle{plain}
\newtheorem{theorem}{Theorem}[section]
\newtheorem{lemma}[theorem]{Lemma}
\newtheorem{proposition}[theorem]{Proposition}
\newtheorem{corollary}[theorem]{Corollary}

\theoremstyle{definition}
\newtheorem{definition}[theorem]{Definition}

\theoremstyle{remark}

\newcommand{\KL}{D_{\mathrm{KL}}{}}

\newcommand{\TV}{D_{\mathrm{TV}}{}}

\usepackage{mathtools}
\mathtoolsset{showonlyrefs}

\icmltitlerunning{TeamTR: Trust-Region Fine-Tuning for Multi-Agent LLM Coordination}

\begin{document}

\twocolumn[
  \icmltitle{TeamTR: Trust-Region Fine-Tuning for Multi-Agent LLM Coordination}

  \begin{icmlauthorlist}
    \icmlauthor{Yi Xie}{yyy}
    \icmlauthor{Siao Liu}{xxx}
    \icmlauthor{Falong Fan}{yyy}
    \icmlauthor{Yuanqi Yao}{zzz}
    \icmlauthor{Siyang Cao}{yyy}
    \icmlauthor{Yue Zhao}{www}
    \icmlauthor{Bo Liu}{yyy}

  \end{icmlauthorlist}

  \icmlaffiliation{yyy}{Department of Electrical \& Computer Engineering, University of Arizona}
  \icmlaffiliation{xxx}{Future Science and Engineering College, Soochow University}
  \icmlaffiliation{zzz}{INSAIT, Sofia University "St. Kliment Ohridski"}
    \icmlaffiliation{www}{Department of Electrical and Computer Engineering,
Stony Brook University}

  \icmlcorrespondingauthor{Bo Liu}{boliu@arizona.edu}

  \icmlkeywords{Machine Learning, ICML}

  \vskip 0.3in
]

\printAffiliationsAndNotice{} 
\begin{abstract}
Multi-agent LLM systems have shown promise for complex reasoning, yet recent evaluations reveal they often underperform single-model baselines.
We identify a structural failure mode in sequential fine-tuning of shared-context teams: updating one agent shifts the team's context distribution, and when subsequent updates are evaluated on cached rollouts, this mismatch compounds.
We formalize this as the \emph{compounding occupancy shift} and prove that stale-occupancy evaluation incurs a penalty that scales quadratically with the number of agents. In contrast, intermediate-occupancy evaluation reduces this to linear scaling.
We propose \textit{TeamTR}, a trust-region framework that resamples trajectories after each component update and enforces per-agent divergence control, yielding rigorous per-update and per-stage improvement lower bounds.
Experiments show that TeamTR outperforms single-agent and sequential baselines with 7.1\% on average, mitigates coordination regressions, and supports plug-and-play component replacement.
Code is available at \url{https://github.com/Yydc/TeamTR}.
\end{abstract}

\section{Introduction}
\label{sec:intro}

Multi-agent LLM systems coordinate role-specialized components for complex reasoning and task execution~\citep{yao2022react,wu2023autogen,hong2023metagpt,du2023improving}. Despite their success, recent evaluations reveal that such teams often underperform single strong models with best-of-$N$ sampling~\citep{kim2025scalingagents}. These failures are attributed to suboptimal coordination protocols~\citep{cemri2025fail}. We identify that the training process of the MA-LLMs system can introduce this bias and undermine coordination. 

Most multi-agent LLM systems are \emph{shared-context} teams, in which agents interact turn by turn over a common textual state~\citep{wu2023autogen,hong2023metagpt}. Fine-tuning such teams may proceed via joint updates (all agents simultaneously) or sequential updates (one agent at a time). The instability issues of joint updates are known in MARL, where coupled policy changes make optimization difficult to control~\citep{foerster2017stabilising,kuba2021trust} and also MA-LLMs~\citep{liu2025llm} (Fig.~\ref{fig:overview}, left). Sequential training offers a more stable alternative and is increasingly adopted for modular optimization~\citep{subramaniam2025multiagent}. However, sequential methods introduce a failure mode: each update shifts the context distribution seen by subsequent agents, and when rollouts are cached at stage start to reduce sampling cost, this mismatch compounds (Fig.~\ref{fig:overview}, middle).

We formalize this as \emph{compounding occupancy shift}: when later agents are updated using rollouts collected before earlier agents were updated, the resulting distribution mismatch accumulates across the update sequence.
Under per-step trust regions of radii $\{\delta_i\}_{i=1}^n$, stale-occupancy evaluation incurs a penalty scaling as $O(n^2\sqrt{\bar\delta})$, whereas intermediate-occupancy evaluation reduces this to $O(n\sqrt{\bar\delta})$.
This gap explains why naive sequential fine-tuning can regress coordination even when each update appears locally beneficial.

To address this, we propose \emph{TeamTR}, a stage-wise trust-region framework that resamples trajectories after each component update.
This ensures that each agent is trained under the distribution induced by the partially updated team, thereby eliminating the stale-occupancy penalty.
Per-agent trust regions control the occupancy drift introduced by each update, keeping the distribution shift bounded.
TeamTR yields rigorous lower bounds on improvement: surrogate gain minus explicit penalties for occupancy shift and estimation error.
The certificate applies to any update order, and penalties can be tracked empirically in the training process.

\begin{figure*}[htbp]
  \centering
  \includegraphics[width=0.95\textwidth]{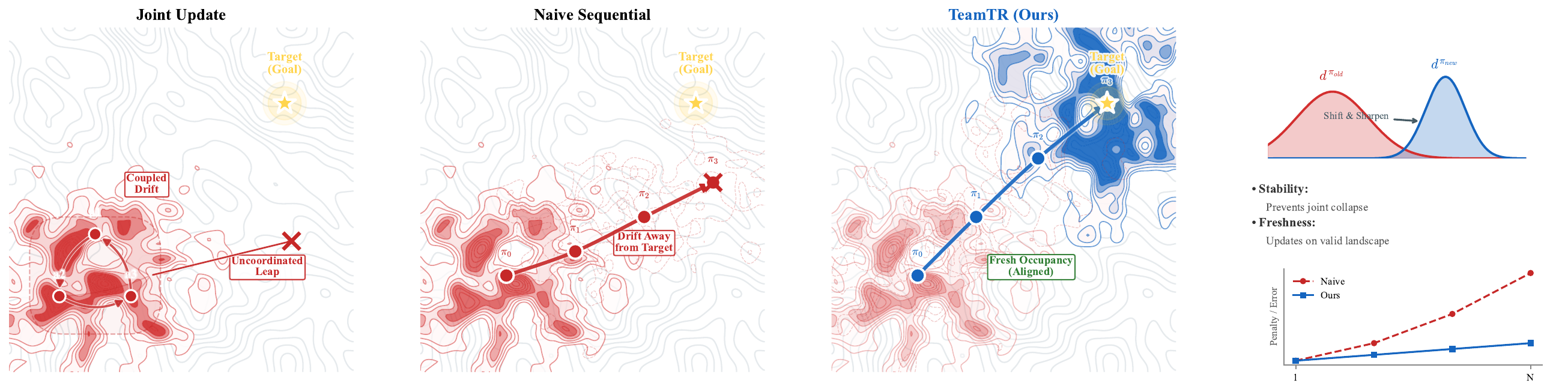}
  \vspace{-8pt}
\caption{
Update trajectories on the team objective landscape.
\textbf{Left}: Joint updates suffer from coupled drift and uncoordinated leaps.
\textbf{Middle}: Naive sequential updates with cached rollouts drift away from the target due to stale occupancy.
\textbf{Right}: TeamTR resamples under fresh occupancy after each update, reaching the target stably.
\textbf{Inset}: Occupancy distributions (top),  penalty term scaling (bottom).
}
  \vspace{-10pt}
  \label{fig:overview}
\end{figure*}

We instantiate TeamTR under a turn-taking protocol where one agent is active per step.
This yields a factorization: team-level divergence reduces to single-agent divergence on visited states, enabling tractable trust regions.
A token decomposable behavior-to-updated KL yields a monitorable trust region constraint, estimated from on-policy rollouts via sampled log-probability differences.
Empirically, TeamTR improves performance relative to sequential baselines, stabilizes coordination, and supports plug-and-play of components. Our contribution can be concluded as follows:

\begin{itemize}[leftmargin=*]
\vspace{-10pt}
\item 
We formalize the \emph{compounding occupancy shift} in shared-context multi-LLM fine-tuning and prove that stale-occupancy evaluation incurs an $O(n^2)$ penalty (Sec.\ref{sec:theory}).
\vspace{-15pt}
\item 
We propose TeamTR, which reduces this penalty to $O(n)$ via intermediate-occupancy evaluation and provides rigorous per-update improvement lower bounds (Sec.\ref{sec:algorithm}).
\vspace{-5pt}
\item 
We empirically validate that TeamTR outperforms existing single and multi-agent baselines by stabilizing training and enabling components to be plug-and-play (Sec.\ref{sec:experiments}).
\end{itemize}
\section{Related Work}
\label{sec:related}

\paragraph{Multi-agent LLM systems.}
Multi-agent LLM systems have evolved along two axes: inference-time orchestration and training-time coordination.
Inference-time approaches deploy frozen models under hand-crafted protocols, including debate and consensus frameworks \citep{du2023improving,liang2024encouraging}, structured role-play pipelines \citep{hong2023metagpt,qian2024chatdev}, and general-purpose orchestration libraries \citep{wu2024autogen}.
Training-time approaches aim to internalize coordination through supervised fine-tuning on interaction trajectories \citep{xie2026sat,zeng2024agenttuning} or reinforcement learning with social or preference signals \citep{liu2023training,lee2023rlaif}.
These methods typically assume frozen components or joint/independent training; none explicitly addresses the occupancy shift that arises under sequential fine-tuning.

\paragraph{Trust regions in multi-agent RL.}
Trust-region methods guarantee monotonic improvement by constraining policy divergence \citep{schulman2015trust}.
Extending these guarantees to multi-agent LLMs settings is nontrivial due to non-stationarity from simultaneous updates.
HATRPO and HAPPO \citep{kuba2021trust} derive a multi-agent advantage decomposition showing that sequential updates preserve monotonic improvement in heterogeneous teams, but operate on low-dimensional continuous control and do not address the autoregressive, token-level structure of LLM message spaces.
TeamTR adapts insights to shared-context LLM teams by defining trust regions based on token-level KL divergences computed from on-policy rollouts.

\paragraph{Distribution shift and modular evolution.}
Distribution shift under sequential updates is a known challenge in both MARL \citep{foerster2017stabilising} and RLHF \citep{casper2023open}, often addressed via importance sampling or replay buffers.
For modular system evolution, model-merging techniques \citep{ilharco2022editing,wortsman2022model} compose capabilities by manipulating weight vectors but assume independent training distributions and static composition.
TeamTR provides a dynamic alternative: it bounds the distribution shift induced by each sequential update via intermediate-occupancy trust regions, yielding theoretically guaranteed improvement that extends to agent plug-and-play. Extended discussions are provided in Appendix~\ref{sec:extended_related}.
\section{Theoretical Framework}
\label{sec:theory}

We develop a framework for fine-tuning multi-agent LLM teams under sequential component updates.
The central challenge is \emph{compounding occupancy shift}: each update changes the team's state distribution, and reusing for next agent with stale rollouts incurs an additional certificate penalty term that scales as $O(n^2\sqrt{\bar\delta})$ (for fixed trust region radii), where $n$ is the number of agents.
Our analysis identifies this failure mode and shows intermediate-occupancy evaluation reduces the dominant penalty to $O(n\sqrt{\bar\delta})$.

\subsection{Shared-context team as a message-action MDP}
\label{sec:theory:setup}

We model team execution as a discounted MDP $\mathcal{M}=(\mathcal{S},\{\mathcal{A}_j\}_{j=1}^n,P,r,\gamma)$ with $\gamma\in(0,1)$ and bounded reward $|r|\le R_{\max}$.
The state $s\in\mathcal{S}$ is the shared textual context (the prompt plus accumulated messages) and can optionally include the active agent's ID selected by a router.
We treat the router as fixed and fold it into the environment dynamics.
Each agent $j$ chooses a macro-action $a_j\in\mathcal{A}_j$ corresponding to a message (token sequence), and the team policy factorizes as
$
\pi(\mathbf{a}\mid s)=\prod_{j=1}^n \pi^{(j)}(a_j\mid s).
$
We focus on the turn-taking protocol: at each decision step, exactly one agent is active and emits a message that augments the shared context; all other agents take a fixed no-op action.

\begin{lemma}[Active-factor reduction in turn taking protocol]
\label{lem:activefactor}
Assume turn-taking protocol: for each state $s$, there is an active agent index $j(s)$ such that all inactive agents deterministically take $\text{noop}$ under both $\pi$ and $\pi'$.
Then for any two team policies $\pi',\pi$,
\[
\KL\!\big(\pi'(\cdot\mid s)\Vert \pi(\cdot\mid s)\big)
=
\KL\!\big(\pi'^{(j(s))}(\cdot\mid s)\Vert \pi^{(j(s))}(\cdot\mid s)\big),
\]
and expectations about functions that depend only on the active action reduce to expectations about the active agent.
\end{lemma}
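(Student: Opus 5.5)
The plan is to compute the team KL directly from the product factorization $\pi(\mathbf{a}\mid s)=\prod_{j=1}^n \pi^{(j)}(a_j\mid s)$ and use the fact that, under the turn-taking assumption, every inactive factor is a point mass at $\text{noop}$ under both policies. Fix a state $s$ and write $k=j(s)$. For every $j\neq k$ we have $\pi^{(j)}(\cdot\mid s)=\pi'^{(j)}(\cdot\mid s)=\mathbbm{1}\{\cdot=\text{noop}\}$. Hence the support of $\pi'(\cdot\mid s)$ is contained in $\{\mathbf{a}: a_j=\text{noop}\ \forall j\neq k\}$, and on this set
\[
\pi'(\mathbf{a}\mid s)=\pi'^{(k)}(a_k\mid s),\qquad \pi(\mathbf{a}\mid s)=\pi^{(k)}(a_k\mid s).
\]
The map $\mathbf{a}\mapsto a_k$ is therefore a bijection between this support set and $\mathcal{A}_k$ that carries both team distributions onto the active marginals.

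Next I substitute into the KL sum, or integral if $\mathcal{A}_k$ is a countable sequence space; either way this is just a pushforward. Terms with $a_j\neq\text{noop}$ for some $j\neq k$ have $\pi'(\mathbf{a}\mid s)=0$ and contribute $0$ by the convention $0\log(0/\cdot)=0$. The remaining terms are exactly $\sum_{a_k}\pi'^{(k)}(a_k\mid s)\log\frac{\pi'^{(k)}(a_k\mid s)}{\pi^{(k)}(a_k\mid s)}$. Equivalently, I can invoke the chain rule (additivity) of KL for product measures: $\KL(\prod_j p_j\Vert\prod_j q_j)=\sum_j\KL(p_j\Vert q_j)$, and every inactive term vanishes because $p_j=q_j=\delta_{\text{noop}}$. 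The one point to handle carefully is absolute continuity. If $\pi'^{(k)}\not\ll\pi^{(k)}$, both sides equal $+\infty$, since the inactive factors agree exactly and so cannot create or remove any support mismatch. Thus the identity holds in the extended reals.

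For the second claim, let $f(s,\mathbf{a})=g(s,a_{j(s)})$. Then $\mathbb{E}_{\mathbf{a}\sim\pi(\cdot\mid s)}[f]=\sum_{\mathbf{a}}\prod_j\pi^{(j)}(a_j\mid s)\,g(s,a_k)$. Summing out each inactive coordinate contributes a factor of $1$, which leaves $\mathbb{E}_{a_k\sim\pi^{(k)}(\cdot\mid s)}[g(s,a_k)]$; this is just marginalization. The same argument applies to $\pi'$ and extends to any expectation over states, such as under an occupancy measure, by applying it pointwise in $s$.

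I expect no real obstacle. The only subtle point is measure-theoretic hygiene: the message spaces are countable sets of token sequences, the no-op convention must be identical under both policies (which the hypothesis guarantees), and the infinite-KL case must be checked. I would state the lemma pointwise in $s$ and note that the state-averaged versions follow by taking expectations.
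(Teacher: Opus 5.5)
Your proof is correct and takes essentially the same route as the paper. You fix $s$, observe that the inactive factors are identical point masses at noop, restrict to the support set on which the joint log-ratio collapses to the active factor's log-ratio, and marginalize to get the expectation claim. Your explicit treatment of the $+\infty$ case (when $\pi'^{(j(s))}(\cdot\mid s)\not\ll\pi^{(j(s))}(\cdot\mid s)$) is a small addition that the paper's proof of this lemma leaves implicit.
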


Lemma~\ref{lem:activefactor} addresses a practical obstacle: a team-level trust region over the joint action space $\prod_j\mathcal{A}_j$ is intractable for LLM message distributions, while the turn-taking protocol reduces the divergence constraint to a single-agent quantity.
The detailed proof can be found in Appendix~\ref{app:activefactor}. We study within-stage sequential updates.
Let $\pi_{\mathrm{cur}}$ be the team at stage start and $\sigma$ an update order (any permutation).
Denote by $\pi[j\leftarrow \pi']$ the team obtained by replacing agent $j$ in $\pi$ with $\pi'$.
We define the intermediate policies as follows:
\begin{equation}
\label{eq:intermediate_policies}
    \pi_{\mathrm{cur}} = \hat\pi^{0} 
    \xrightarrow[\text{Update}]{\substack{\sigma(1)\leftarrow \\ \pi_{\mathrm{tar}}^{\sigma(1)}}} 
    \hat\pi^{1} 
    \to \cdots \to 
    \hat\pi^{n-1} 
    \xrightarrow[\text{Update}]{\substack{\sigma(n)\leftarrow \\ \pi_{\mathrm{tar}}^{\sigma(n)}}} 
    \hat\pi^{n} = \bar\pi
\end{equation}
where $\pi_{\mathrm{tar}}^{\sigma(i)}$ denotes the updated policy for agent $\sigma(i)$ obtained by (approximately) maximizing the surrogate objective in Section~\ref{sec:theory:surrogate}.
Let $d^\pi$ be the discounted occupancy induced by $\pi$, and $J(\pi)$ its discounted return.
Within each stage, each agent is updated at most once; thus at step $i$ the pre-update policy for agent $j=\sigma(i)$ is the corresponding factor in $\hat\pi^{\,i-1}$, which we denote by $\pi_{\mathrm{cur}}^{\sigma(i)}$ for brevity.

\subsection{Token-decomposed trust regions}
\label{sec:theory:kl}

To control occupancy shift, we constrain the \emph{behavior-to-updated} KL divergence, which can be directly estimated from trajectories generated by the pre-update policy, without requiring policy sampling from the updated one.

For a reference policy $\rho$ and policies $\pi,\pi'$, we define:
\begin{equation}
\label{eq:kltokdef}
\KL_{\mathrm{tok}}^{\rho}(\pi\Vert\pi')
\;\coloneqq\;
\mathbb{E}_{s\sim d^\rho}\KL\!\big(\pi(\cdot\mid s)\Vert \pi'(\cdot\mid s)\big).
\end{equation}
Here $\mathrm{tok}$ emphasizes \emph{token-decomposability} for autoregressive messages: the message-level KL decomposes exactly into token-level KLs via the chain rule. Let $\pi_u \coloneqq \pi^{(j)}(\cdot\mid s,x_{<u})$ and $\pi'_u \coloneqq \pi'^{(j)}(\cdot\mid s,x_{<u})$. Then we have:
\begin{equation}
\label{eq:token_chain_rule}
\KL\big(\pi^{(j)}(\cdot|s) \,\|\, \pi'^{(j)}(\cdot|s)\big) 
= \mathbb{E}_{\substack{m\sim \\ \pi^{(j)}(\cdot|s)}} \sum_{u=1}^{T(m)} \KL(\pi_u \,\|\, \pi'_u),
\end{equation}
where $T(m)$ is the (random) message length.
Computing each token-level KL term exactly requires a full-vocabulary sum; in practice, we therefore use a sampled estimator based on token log-probability differences along rollouts from the behavior policy (Details in Appendix~\ref{app:tokenkl} and~\ref{app:impl}).

At update step $i$, TeamTR constrains the updated factor:
\begin{equation}
\label{eq:tr}
\KL_{\mathrm{tok}}^{\hat\pi^{\,i-1}}\!\big(\pi_{\mathrm{cur}}^{\sigma(i)}\Vert \pi_{\mathrm{tar}}^{\sigma(i)}\big)\le \delta_i.
\end{equation}
The radius $\delta_i$ controls the allowed per-step policy change: smaller $\delta_i$ reduces occupancy-shift penalties (which scale as $\sqrt{\delta_i}$) but restricts policy movement.
Since each within-stage step updates a single factor under the turn-taking protocol, Eq.~\eqref{eq:tr} also implies a team-level step bound $\KL_{\mathrm{tok}}^{\hat\pi^{\,i-1}}(\hat\pi^{\,i-1}\Vert \hat\pi^{\,i})\le \delta_i$. (formalized in Lemma~\ref{lem:klstep}).

\subsection{Intermediate-occupancy surrogates and error}
\label{sec:theory:surrogate}

Let $A^\pi(s,\mathbf{a})$ be the macro-action advantage.
When updating agent $\sigma(i)$, we consider the population surrogate evaluated under the intermediate occupancy:
\begin{equation}
\label{eq:surrogate}
L_i^{\mathrm{seq}}
\;=\;
\frac{1}{1-\gamma}\;
\mathbb{E}_{s\sim d^{\hat\pi^{\,i-1}},\,\mathbf{a}\sim \hat\pi^{\,i}(\cdot\mid s)}
\Big[\widehat A_{i-1}(s,\mathbf{a})\Big],
\end{equation}
where $\widehat A_{i-1}$ is an estimator constructed from rollout data collected under $\hat\pi^{\,i-1}$.
In practice, TeamTR approximately maximizes a clipped objective (see Appendix~\ref{app:groupadv} for a concrete instantiation); the certificate statements below hold for any realized update that satisfies the trust-region constraint. We track the mismatch between the estimator and the true advantage.
Let $\mathcal{D}_{i-1}$ denote the rollout data used to construct $\widehat A_{i-1}$ under $\hat\pi^{\,i-1}$. Assume $|A^\pi(s,\mathbf{a})|\le A_{\max}$ and enforce $|\widehat A_{i-1}|\le A_{\max}$ via clipping, we define:
\begin{equation}
\label{eq:bias}
\zeta_i
\coloneqq
\bigg|
\mathbb{E}_{\substack{s\sim d^{\hat\pi^{\,i-1}} \\ \mathbf{a}\sim \hat\pi^{\,i}(\cdot\mid s)}}
\Big[
\mathbb{E}_{\mathcal{D}_{i-1}}\big[\widehat A_{i-1}(s,\mathbf{a})\big] - A^{\hat\pi^{\,i-1}}(s,\mathbf{a})
\Big]
\bigg|.
\end{equation}

\subsection{Occupancy-shift and improvement lower bounds}
\label{sec:theory:mi}

The following bound connects expected divergence to occupancy shift, explaining why trust regions yield explicit control over distribution mismatch.
Proofs for this subsection are provided in Appendix~\ref{app:occshift}, \ref{app:staleocc}, \ref{app:single}, \ref{app:joint}, and \ref{app:stage-stale}.

\begin{lemma}[Occupancy shift under expected divergence]
\label{lem:occ}
For any policies $\pi',\pi$ and any bounded measurable $f$,
\begin{equation}
\label{eq:occ_shift_bound}
\big|\mathbb{E}_{d^{\pi'}}[f]-\mathbb{E}_{d^{\pi}}[f]\big|
\le
\frac{\sqrt{2}\gamma}{1-\gamma}\;
\sqrt{\KL_{\mathrm{tok}}^{\pi}(\pi\Vert\pi')}\;\|f\|_{\infty}.
\end{equation}
\end{lemma}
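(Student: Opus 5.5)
The plan is the standard occupancy-perturbation argument from TRPO/CPO-style analyses: express $d^{\pi'}-d^{\pi}$ through a resolvent identity, bound it in total variation by the expected per-state policy TV, and finish with Pinsker and Jensen. Write $P_\pi$ for the state transition kernel under $\pi$, $P_\pi(s'\mid s)=\sum_{\mathbf a}\pi(\mathbf a\mid s)P(s'\mid s,\mathbf a)$, and $\mu$ for the initial distribution. The normalized discounted occupancy is
\[
d^\pi=(1-\gamma)\,\mu^\top (I-\gamma P_\pi)^{-1}.
\]
Setting $G=(I-\gamma P_\pi)^{-1}$ and $G'=(I-\gamma P_{\pi'})^{-1}$, the identity $G'-G=\gamma\, G'(P_{\pi'}-P_\pi)G$ gives
\[
d^{\pi'}-d^{\pi}=\gamma\, (d^{\pi})^\top (P_{\pi'}-P_\pi)\,G'.
\]
Alternatively one expands the other way, with $d^{\pi'}$ on the left. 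I will choose the ordering so that the expectation lands on $d^\pi$, since the right-hand side of \eqref{eq:occ_shift_bound} averages the divergence under $d^\pi$.

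Next, bound this in $\ell_1$. The operator $G'$ has $\ell_1\to\ell_1$ norm at most $1/(1-\gamma)$, because $P_{\pi'}$ is stochastic. For each $s$,
\[
\big\|(P_{\pi'}-P_\pi)(\cdot\mid s)\big\|_1\le \big\|\pi'(\cdot\mid s)-\pi(\cdot\mid s)\big\|_1 = 2\,\TV\big(\pi'(\cdot\mid s),\pi(\cdot\mid s)\big).
\]
Combining these yields
\[
\|d^{\pi'}-d^\pi\|_1\le \frac{2\gamma}{1-\gamma}\,\mathbb E_{s\sim d^\pi}\,\TV\big(\pi'(\cdot\mid s),\pi(\cdot\mid s)\big).
\]
It then remains to apply Pinsker per state, $\TV\le\sqrt{\tfrac12\KL(\pi\Vert\pi')}$, using the $\pi\Vert\pi'$ direction since TV is symmetric. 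Jensen (concavity of $\sqrt{\cdot}$) then moves the expectation inside the root. Since $|\mathbb E_{d^{\pi'}}f-\mathbb E_{d^\pi}f|\le \|d^{\pi'}-d^\pi\|_1\|f\|_\infty$, we obtain
\[
\frac{2\gamma}{1-\gamma}\sqrt{\tfrac12}\sqrt{\KL_{\mathrm{tok}}^{\pi}(\pi\Vert\pi')}\,\|f\|_\infty=\frac{\sqrt2\,\gamma}{1-\gamma}\sqrt{\KL_{\mathrm{tok}}^{\pi}(\pi\Vert\pi')}\,\|f\|_\infty,
\]
which matches \eqref{eq:occ_shift_bound} exactly.

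The main obstacle is getting the orientation of the resolvent identity right, so that the policy discrepancy is weighted by $d^\pi$ rather than $d^{\pi'}$. Writing $d^{\pi'}-d^\pi=\gamma (d^\pi)^\top(P_{\pi'}-P_\pi)G'$ requires multiplying $G'-G=\gamma G'(P_{\pi'}-P_\pi)G$ in the correct order. I would double-check this using the row-vector convention: $G'-G=G'\big((I-\gamma P_\pi)-(I-\gamma P_{\pi'})\big)G$ and also $=G\big(\cdots\big)G'$, so both orderings are valid. Choosing $G(\cdots)G'$ puts $\mu^\top G\propto d^\pi$ on the left, as needed.

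A minor technical point concerns general measurable state spaces, such as countable text contexts. There the matrix identities are replaced by the Neumann series of kernels, and the $\ell_1$ norms by total-variation norms of signed measures. Everything above goes through verbatim because it only uses that stochastic kernels are $\ell_1$-contractions. Lemma~\ref{lem:activefactor} is not needed here, although it later lets the team-level KL be replaced by the active agent's KL.
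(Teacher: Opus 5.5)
Your proposal is correct and is essentially the paper's argument (Lemma~\ref{lem:occexp}, which feeds Lemma~\ref{lem:expectedklpenalty}). The paper subtracts the fixed-point equations $d^\pi=(1-\gamma)\mu+\gamma P_\pi^\top d^\pi$ to get $(I-\gamma P_{\pi'}^\top)(d^{\pi'}-d^\pi)=\gamma(P_{\pi'}^\top-P_\pi^\top)d^\pi$, which is your $\gamma\, G(P_{\pi'}-P_\pi)G'$ identity in transposed form; it then uses the same $1/(1-\gamma)$ resolvent bound, the $d^\pi$-weighted $2\,\TV$ kernel bound, and Pinsker plus Jensen. The ordering you first wrote, $G'-G=\gamma G'(P_{\pi'}-P_\pi)G$, would put $d^{\pi'}$ rather than $d^\pi$ on the left, so the $\gamma\, G(P_{\pi'}-P_\pi)G'$ ordering you settle on is the one the bound actually needs.
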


\begin{proposition}[Quadratic-to-linear reduction]
\label{rem:quad2lin}
Let $\hat\pi^0$ be the stage-start team and $\hat\pi^{i-1}$ the intermediate team before step $i$.
Assume the per-step trust regions in Eq.~\eqref{eq:tr}.
Then for any bounded measurable $f$ and any $i\ge 1$,
\begin{equation}
\label{eq:stale_occ_general}
\big|\mathbb{E}_{d^{\hat\pi^{\,i-1}}}[f]-\mathbb{E}_{d^{\hat\pi^{0}}}[f]\big|
\;\le\;
\frac{\sqrt{2}\gamma}{1-\gamma}\;
\|f\|_{\infty}\sum_{k<i}\sqrt{\delta_k}.
\end{equation}

Then we can find the stale-occupancy surrogate by evaluating on stage-start occupancies:
\begin{equation}
\label{eq:surrogate_stale}
L_i^{\mathrm{stale}}
\;=\;
\frac{1}{1-\gamma}\;
\mathbb{E}_{s\sim d^{\hat\pi^{0}},\,\mathbf{a}\sim \hat\pi^{\,i}(\cdot\mid s)}
\Big[\widehat A_{i-1}(s,\mathbf{a})\Big].
\end{equation}
\begin{equation}
\label{eq:stale_surrogate_gap}
\big|L_i^{\mathrm{seq}}-L_i^{\mathrm{stale}}\big|
\;\le\;
\frac{\sqrt{2}\gamma}{(1-\gamma)^2}\;
A_{\max}\sum_{k<i}\sqrt{\delta_k}.
\end{equation}
Thus, stale-occupancy evaluation incurs a cumulative penalty of $O(n^2\sqrt{\bar\delta})$ with $\bar\delta=\max_k\delta_k$, whereas intermediate-occupancy evaluation achieves $O(n\sqrt{\bar\delta})$.
\end{proposition}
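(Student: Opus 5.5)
The plan is a telescoping argument built on Lemma~\ref{lem:occ}, followed by a direct bound on the surrogate gap. For the occupancy bound in Eq.~\eqref{eq:stale_occ_general}, I write
\[
\mathbb{E}_{d^{\hat\pi^{\,i-1}}}[f]-\mathbb{E}_{d^{\hat\pi^{0}}}[f]=\sum_{k=1}^{i-1}\Big(\mathbb{E}_{d^{\hat\pi^{\,k}}}[f]-\mathbb{E}_{d^{\hat\pi^{\,k-1}}}[f]\Big)
\]
and apply the triangle inequality. Each summand compares two consecutive intermediate teams. I would apply Lemma~\ref{lem:occ} with $\pi=\hat\pi^{\,k-1}$ and $\pi'=\hat\pi^{\,k}$, which bounds the $k$-th summand by $\frac{\sqrt2\gamma}{1-\gamma}\sqrt{\KL_{\mathrm{tok}}^{\hat\pi^{\,k-1}}(\hat\pi^{\,k-1}\Vert\hat\pi^{\,k})}\,\|f\|_\infty$.

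The remaining step is to convert this team-level divergence into $\delta_k$. Since $\hat\pi^{\,k}=\hat\pi^{\,k-1}[\sigma(k)\leftarrow\pi_{\mathrm{tar}}^{\sigma(k)}]$, only one factor differs. On states where $\sigma(k)$ is inactive, both teams play noop, so the KL is zero. On states where $\sigma(k)$ is active, Lemma~\ref{lem:activefactor} reduces the team KL to the KL of factor $\sigma(k)$. In either case the team KL at $s$ is at most the KL of factor $\sigma(k)$, whose pre-update version is $\pi_{\mathrm{cur}}^{\sigma(k)}$. Integrating against $d^{\hat\pi^{\,k-1}}$ and invoking the trust region in Eq.~\eqref{eq:tr} gives $\KL_{\mathrm{tok}}^{\hat\pi^{\,k-1}}(\hat\pi^{\,k-1}\Vert\hat\pi^{\,k})\le\delta_k$. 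This is the step-bound fact attributed to Lemma~\ref{lem:klstep}, which I would reprove inline. Summing over $k<i$ yields Eq.~\eqref{eq:stale_occ_general}.

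For the surrogate gap in Eq.~\eqref{eq:stale_surrogate_gap}, I define $f_i(s):=\mathbb{E}_{\mathbf a\sim\hat\pi^{\,i}(\cdot\mid s)}[\widehat A_{i-1}(s,\mathbf a)]$, conditioning on the data $\mathcal D_{i-1}$ so that $f_i$ is a fixed function. Then $L_i^{\mathrm{seq}}-L_i^{\mathrm{stale}}=\frac{1}{1-\gamma}\big(\mathbb{E}_{d^{\hat\pi^{\,i-1}}}f_i-\mathbb{E}_{d^{\hat\pi^0}}f_i\big)$. Clipping gives $\|f_i\|_\infty\le A_{\max}$, and the first part then delivers the stated bound with the extra factor $1/(1-\gamma)$.

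For the scaling claim, I note that $\sum_{k<i}\sqrt{\delta_k}\le (i-1)\sqrt{\bar\delta}$, so summing the per-step stale gaps over $i=1,\dots,n$ gives $\sum_i (i-1)\sqrt{\bar\delta}=\frac{n(n-1)}{2}\sqrt{\bar\delta}=O(n^2\sqrt{\bar\delta})$. Intermediate-occupancy evaluation incurs no such cross-step term. Its only penalty at step $i$ is the single-step occupancy shift between $\hat\pi^{\,i-1}$ and $\hat\pi^{\,i}$, which is $O(\sqrt{\delta_i})$, and this sums to $O(n\sqrt{\bar\delta})$.

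The main obstacle is the step bound. Lemma~\ref{lem:occ} must be applied with the reference occupancy being the pre-update team, and one must check that the direction of the KL there, $\KL(\pi\Vert\pi')$ under $d^\pi$, matches the behavior-to-updated direction constrained in Eq.~\eqref{eq:tr}. It does, and this agreement is exactly why the trust region is defined in that direction. A secondary subtlety is treating $\widehat A_{i-1}$ as fixed given the data, so that Lemma~\ref{lem:occ} applies to a deterministic bounded $f$.
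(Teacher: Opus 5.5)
Your proposal is correct and follows the paper's proof. The occupancy bound is exactly the telescoping argument of Lemma~\ref{lem:cumocc}: Lemma~\ref{lem:occ} is applied to each adjacent pair with reference $\hat\pi^{\,k-1}$. The surrogate gap then follows by applying it to $f_i(s)=\mathbb{E}_{\mathbf{a}\sim\hat\pi^{\,i}(\cdot\mid s)}[\widehat A_{i-1}(s,\mathbf{a})]$, using $\|f_i\|_\infty\le A_{\max}$, and dividing by $1-\gamma$.

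The only minor difference is how you get the team-level step bound $\KL_{\mathrm{tok}}^{\hat\pi^{\,k-1}}(\hat\pi^{\,k-1}\Vert\hat\pi^{\,k})\le\delta_k$. You route it through the turn-taking Lemma~\ref{lem:activefactor}, whereas the paper's Lemma~\ref{lem:klstep} obtains it, with equality, from product-factor cancellation alone and so does not need the turn-taking assumption.
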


\begin{theorem}[Single-step improvement lower bound]
\label{thm:single}
For step $i\in\{1,\ldots,n\}$, assume the trust region in Eq.~\eqref{eq:tr}.
Then
\begin{equation}
\label{eq:single_step_cert}
J(\hat\pi^{\,i})-J(\hat\pi^{\,i-1})
\ge
L_i^{\mathrm{seq}}
-
\frac{\sqrt{2}\gamma}{(1-\gamma)^2}\,A_{\max}\,\sqrt{\delta_i}
-
\frac{\zeta_i}{1-\gamma}.
\end{equation}
\end{theorem}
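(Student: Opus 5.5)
The plan is to start from the performance difference lemma applied to the consecutive teams $\hat\pi^{\,i}$ and $\hat\pi^{\,i-1}$:
\[
J(\hat\pi^{\,i})-J(\hat\pi^{\,i-1})=\frac{1}{1-\gamma}\,\mathbb{E}_{s\sim d^{\hat\pi^{\,i}},\,\mathbf{a}\sim\hat\pi^{\,i}(\cdot\mid s)}\big[A^{\hat\pi^{\,i-1}}(s,\mathbf{a})\big].
\]
Define $g(s):=\mathbb{E}_{\mathbf{a}\sim\hat\pi^{\,i}(\cdot\mid s)}[A^{\hat\pi^{\,i-1}}(s,\mathbf{a})]$. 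Since $|A|\le A_{\max}$, we have $\|g\|_\infty\le A_{\max}$. The right-hand side is then $\frac{1}{1-\gamma}\mathbb{E}_{d^{\hat\pi^{\,i}}}[g]$. I would split it into the same expectation under the intermediate occupancy $d^{\hat\pi^{\,i-1}}$ plus an occupancy-shift remainder.

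For the remainder, I will apply Lemma~\ref{lem:occ} with $\pi=\hat\pi^{\,i-1}$, $\pi'=\hat\pi^{\,i}$ and $f=g$. This gives
\[
\big|\mathbb{E}_{d^{\hat\pi^{\,i}}}[g]-\mathbb{E}_{d^{\hat\pi^{\,i-1}}}[g]\big|\le\frac{\sqrt2\gamma}{1-\gamma}\sqrt{\KL_{\mathrm{tok}}^{\hat\pi^{\,i-1}}(\hat\pi^{\,i-1}\Vert\hat\pi^{\,i})}\,A_{\max}.
\]
Next I convert the team-level KL into the per-agent constraint. By Lemma~\ref{lem:activefactor}, at every state the team KL equals the KL of the active agent's factor. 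That factor differs only when $j(s)=\sigma(i)$, and in that case it equals $\KL(\pi_{\mathrm{cur}}^{\sigma(i)}(\cdot\mid s)\Vert\pi_{\mathrm{tar}}^{\sigma(i)}(\cdot\mid s))$. Taking $d^{\hat\pi^{\,i-1}}$-expectations, the team KL is bounded by the left side of Eq.~\eqref{eq:tr}, hence by $\delta_i$; this is the team-level step bound referred to as Lemma~\ref{lem:klstep}. Dividing by $(1-\gamma)$ then yields the penalty $\frac{\sqrt2\gamma}{(1-\gamma)^2}A_{\max}\sqrt{\delta_i}$.

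It remains to relate $\frac{1}{1-\gamma}\mathbb{E}_{s\sim d^{\hat\pi^{\,i-1}},\mathbf{a}\sim\hat\pi^{\,i}}[A^{\hat\pi^{\,i-1}}]$ to $L_i^{\mathrm{seq}}$. I would add and subtract $\mathbb{E}_{\mathcal D_{i-1}}[\widehat A_{i-1}]$ inside the expectation, interpreting $L_i^{\mathrm{seq}}$ as its population (data-averaged) version, as the definition of $\zeta_i$ suggests. By Eq.~\eqref{eq:bias}, the difference is at most $\zeta_i$ in absolute value, so the difference after scaling is at most $\zeta_i/(1-\gamma)$. Chaining the three inequalities gives Eq.~\eqref{eq:single_step_cert}.

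The main obstacle is the bookkeeping in the second step. The trust region in Eq.~\eqref{eq:tr} uses the behavior-to-updated direction and the pre-update occupancy $d^{\hat\pi^{\,i-1}}$. Lemma~\ref{lem:occ} therefore has to be invoked with the roles $\pi=\hat\pi^{\,i-1}$, $\pi'=\hat\pi^{\,i}$, so that the KL is taken under $d^\pi$ in the direction $\pi\Vert\pi'$; swapping the roles would require an unavailable reverse-KL bound. A second point is that $\zeta_i$ controls only the expected estimator bias. Accordingly, the certificate should be read with $L_i^{\mathrm{seq}}$ averaged over $\mathcal D_{i-1}$, or with the statement conditioned on that convention, and I would state this explicitly.
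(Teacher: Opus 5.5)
Your proposal is correct and follows the paper's proof essentially step for step. Both arguments use the performance difference lemma, then add and subtract the same expectation under $d^{\hat\pi^{\,i-1}}$, controlled by Lemma~\ref{lem:occ} with $\pi=\hat\pi^{\,i-1}$ and $\pi'=\hat\pi^{\,i}$. Both then reduce the team KL to the per-agent trust region and finish with an add-and-subtract of the estimator that is absorbed into $\zeta_i$. The differences are cosmetic. You reduce the team KL via the turn-taking Lemma~\ref{lem:activefactor}, whereas the paper uses the product-factorization identity of Lemma~\ref{lem:klstep}, which does not need turn-taking. Your explicit remark that $L_i^{\mathrm{seq}}$ must be read as averaged over $\mathcal{D}_{i-1}$ states a convention the paper's proof uses without saying so.
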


\begin{lemma}[Per-step KL reduction under factorized updates]
\label{lem:klstep}
The per-step reduction satisfies:
\begin{equation}
\label{eq:kl_step_reduction}
\KL_{\mathrm{tok}}^{\hat\pi^{\,i-1}}\!\big(\hat\pi^{\,i-1}\Vert\hat\pi^{\,i}\big)
=
\KL_{\mathrm{tok}}^{\hat\pi^{\,i-1}}\!\big(\pi_{\mathrm{cur}}^{\sigma(i)}\Vert \pi_{\mathrm{tar}}^{\sigma(i)}\big)
\le \delta_i, \quad \forall i.
\end{equation}
Consequently, summing Theorem~\ref{thm:single} over $i=1,\ldots,n$ yields the stage-wise certificate in Theorem~\ref{thm:stage}.
\end{lemma}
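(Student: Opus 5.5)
The argument is a pointwise identity followed by averaging. We show that for every state $s$ the team-level divergence between $\hat\pi^{\,i-1}(\cdot\mid s)$ and $\hat\pi^{\,i}(\cdot\mid s)$ equals the divergence of the $\sigma(i)$-th factor alone. Integrating this against $d^{\hat\pi^{\,i-1}}$ then gives the equality in Eq.~\eqref{eq:kl_step_reduction}, and the inequality is exactly the trust-region constraint Eq.~\eqref{eq:tr}.

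\textbf{Pointwise identity.} Write $j=\sigma(i)$. By Eq.~\eqref{eq:intermediate_policies}, $\hat\pi^{\,i}=\hat\pi^{\,i-1}[j\leftarrow\pi_{\mathrm{tar}}^{j}]$. The two team policies therefore share every factor $k\neq j$, and their $j$-th factors are $\pi_{\mathrm{cur}}^{j}$ and $\pi_{\mathrm{tar}}^{j}$.

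Since both policies factorize as $\prod_k \pi^{(k)}(a_k\mid s)$, the chain rule (additivity) of KL for product measures gives
\[
\KL\big(\hat\pi^{\,i-1}(\cdot\mid s)\Vert\hat\pi^{\,i}(\cdot\mid s)\big)=\sum_{k}\KL\big(\hat\pi^{\,i-1,(k)}(\cdot\mid s)\Vert\hat\pi^{\,i,(k)}(\cdot\mid s)\big).
\]
Every summand with $k\neq j$ has identical arguments and so vanishes. What remains is $\KL\big(\pi_{\mathrm{cur}}^{j}(\cdot\mid s)\Vert\pi_{\mathrm{tar}}^{j}(\cdot\mid s)\big)$.

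The identity holds at every state, including states where $j$ is inactive. There both factors are the deterministic no-op, so the term is $0$; this is consistent with Lemma~\ref{lem:activefactor}, which can be invoked as an alternative route under turn-taking.

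\textbf{Averaging.} Take expectation over $s\sim d^{\hat\pi^{\,i-1}}$ and use definition Eq.~\eqref{eq:kltokdef} with $\rho=\hat\pi^{\,i-1}$. This gives
\[
\KL_{\mathrm{tok}}^{\hat\pi^{\,i-1}}(\hat\pi^{\,i-1}\Vert\hat\pi^{\,i})=\KL_{\mathrm{tok}}^{\hat\pi^{\,i-1}}(\pi_{\mathrm{cur}}^{\sigma(i)}\Vert\pi_{\mathrm{tar}}^{\sigma(i)}).
\]
The right-hand side is $\le\delta_i$ by Eq.~\eqref{eq:tr}, which proves the displayed claim for every $i$.

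\textbf{Consequence.} Theorem~\ref{thm:single} is stated directly under Eq.~\eqref{eq:tr}. Its proof invokes Lemma~\ref{lem:occ} with $\pi=\hat\pi^{\,i-1}$, $\pi'=\hat\pi^{\,i}$, and that step needs the team-level quantity just bounded. Summing the single-step bounds over $i=1,\dots,n$ telescopes the left-hand sides to $J(\bar\pi)-J(\pi_{\mathrm{cur}})$. The right-hand sides combine into $\sum_i L_i^{\mathrm{seq}}-\frac{\sqrt2\gamma A_{\max}}{(1-\gamma)^2}\sum_i\sqrt{\delta_i}-\frac{1}{1-\gamma}\sum_i\zeta_i$, which is the stage-wise certificate.

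\textbf{Main obstacle.} The only delicate point is justifying KL additivity and finiteness when factors may be degenerate point masses, as with the no-op action. This needs absolute continuity of $\hat\pi^{\,i-1}(\cdot\mid s)$ with respect to $\hat\pi^{\,i}(\cdot\mid s)$. That holds because the identical factors coincide and the $j$-th factor has finite KL wherever Eq.~\eqref{eq:tr} is finite. For states where this fails, both sides are $+\infty$, so the equality still holds in the extended sense.
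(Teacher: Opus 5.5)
Your proposal is correct and follows essentially the same route as the paper. The paper expands the log-ratio and sums out the unchanged factors, where you invoke additivity of KL over product measures; both are the same pointwise cancellation. You then average under $d^{\hat\pi^{\,i-1}}$ and invoke Eq.~\eqref{eq:tr}, and obtain the stage-wise consequence by summing Theorem~\ref{thm:single} and telescoping, exactly as the paper does. Your extended-real handling of non-absolutely-continuous states mirrors the paper's appendix version, Lemma~\ref{lem:kltokfactor}.
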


\begin{theorem}[Stage-wise improvement lower bound]
\label{thm:stage}
The improvement is bounded as follows:
\begin{equation}
\label{eq:stage_cert}
\begin{split}
J(\bar\pi)-J(\pi_{\mathrm{cur}})
\;\ge\;&
\sum_{i=1}^{n} L_i^{\mathrm{seq}} \quad (\text{under } \eqref{eq:tr} \;\forall i) \\
& - \frac{\sqrt{2}\gamma A_{\max}}{(1-\gamma)^2} \sum_{i=1}^{n}\sqrt{\delta_i}
- \frac{1}{1-\gamma}\sum_{i=1}^{n}\zeta_i.
\end{split}
\end{equation}
\end{theorem}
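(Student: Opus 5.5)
The plan is to telescope the stage-level improvement into per-step improvements along the intermediate chain in Eq.~\eqref{eq:intermediate_policies}. Since $\hat\pi^{0}=\pi_{\mathrm{cur}}$ and $\hat\pi^{n}=\bar\pi$, we can write
\[
J(\bar\pi)-J(\pi_{\mathrm{cur}})=\sum_{i=1}^{n}\big(J(\hat\pi^{\,i})-J(\hat\pi^{\,i-1})\big).
\]
This identity is exact and holds for any permutation $\sigma$. The whole argument then reduces to lower-bounding each summand and adding the bounds.

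For each $i$, we apply Theorem~\ref{thm:single} to the pair $(\hat\pi^{\,i-1},\hat\pi^{\,i})$. Its hypothesis is the trust region Eq.~\eqref{eq:tr} at step $i$, which is assumed for every $i$. The bound it delivers is $L_i^{\mathrm{seq}}-\frac{\sqrt2\gamma}{(1-\gamma)^2}A_{\max}\sqrt{\delta_i}-\frac{\zeta_i}{1-\gamma}$.

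Before invoking it, we must check that the divergence Theorem~\ref{thm:single} actually uses is the team-level one. Internally, that theorem relies on Lemma~\ref{lem:occ} with $\pi=\hat\pi^{\,i-1}$ and $\pi'=\hat\pi^{\,i}$, so it needs $\KL_{\mathrm{tok}}^{\hat\pi^{\,i-1}}(\hat\pi^{\,i-1}\Vert\hat\pi^{\,i})\le\delta_i$. Lemma~\ref{lem:klstep} supplies exactly this. The two teams differ only in the factor $\sigma(i)$. By Lemma~\ref{lem:activefactor}, at each state the team KL collapses to the active agent's KL. That KL is zero when $j(s)\neq\sigma(i)$, and it equals the $\sigma(i)$-factor KL otherwise. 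This gives equality with the per-agent constrained quantity, since the pre-update factor of $\sigma(i)$ in $\hat\pi^{\,i-1}$ is $\pi_{\mathrm{cur}}^{\sigma(i)}$, because each agent is updated at most once per stage.

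Summing the $n$ per-step inequalities then gives Eq.~\eqref{eq:stage_cert} directly. The three terms separate linearly: the surrogate sum $\sum_i L_i^{\mathrm{seq}}$, the occupancy penalty $\frac{\sqrt2\gamma A_{\max}}{(1-\gamma)^2}\sum_i\sqrt{\delta_i}$, and the estimation term $\frac1{1-\gamma}\sum_i\zeta_i$.

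The step that needs the most care is not the summation but the consistency of the reference occupancies. Each $L_i^{\mathrm{seq}}$, $\zeta_i$ and trust region must be taken under the fresh occupancy $d^{\hat\pi^{\,i-1}}$ and the fresh advantage $A^{\hat\pi^{\,i-1}}$, never under $d^{\hat\pi^{0}}$. Otherwise the telescoping would incur the additional cross terms of Proposition~\ref{rem:quad2lin}, which produce the cumulative $O(n^2\sqrt{\bar\delta})$ penalty. Because every quantity in Theorem~\ref{thm:single} is already defined relative to the intermediate team, no such cross terms arise, and the penalty remains linear in $n$.
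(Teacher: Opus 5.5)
Your proposal is correct and follows the paper's proof. Like the paper, you telescope $J(\bar\pi)-J(\pi_{\mathrm{cur}})$ along the intermediate chain, apply Theorem~\ref{thm:single} at each step (with Lemma~\ref{lem:klstep} converting the per-agent trust region into the team-level KL bound), and sum; your turn-taking route to that KL identity is valid, though the paper's proof of Lemma~\ref{lem:klstep} gets it from the product factorization alone, without assuming turn-taking.
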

A high-probability empirical version (accounting for minibatch estimation and ratio clipping) is in Appendix~\ref{app:groupconc}.

\begin{theorem}[Stage-wise bound under stale-occupancy surrogates]
\label{thm:stage_stale}
Using stale surrogates, we have:
\begin{equation}
\label{eq:stage_cert_stale_factored}
\begin{split}
J(\bar\pi)-J(\pi_{\mathrm{cur}})
\ge\;&
\sum_{i=1}^{n} \left( L_i^{\mathrm{stale}} - \frac{\zeta_i}{1-\gamma} \right) \;\; (\text{under } \eqref{eq:tr}) \\
& -
\frac{\sqrt{2}\gamma A_{\max}}{(1-\gamma)^2} 
\sum_{i=1}^{n} \left( \sqrt{\delta_i} + \sum_{k<i}\sqrt{\delta_k} \right).
\end{split}
\end{equation}

For $\delta_i\equiv \bar\delta$, the stale-occupancy penalty scales as $\Theta(n^2\sqrt{\bar\delta})$, versus $\Theta(n\sqrt{\bar\delta})$ for intermediate-occupancy evaluation.
\end{theorem}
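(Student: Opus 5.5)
The natural route is to combine Theorem~\ref{thm:stage} with the surrogate-gap estimate of Proposition~\ref{rem:quad2lin}. First, telescope. Write $J(\bar\pi)-J(\pi_{\mathrm{cur}})=\sum_{i=1}^n\big(J(\hat\pi^{\,i})-J(\hat\pi^{\,i-1})\big)$, using $\hat\pi^0=\pi_{\mathrm{cur}}$ and $\hat\pi^n=\bar\pi$. Then apply Theorem~\ref{thm:single} to each summand. This is legitimate because Eq.~\eqref{eq:tr} holds for every $i$, and by Lemma~\ref{lem:klstep} it controls the team-level divergence $\KL_{\mathrm{tok}}^{\hat\pi^{\,i-1}}(\hat\pi^{\,i-1}\Vert\hat\pi^{\,i})\le\delta_i$. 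The result is exactly the intermediate-occupancy certificate Eq.~\eqref{eq:stage_cert}, with $\sum_i L_i^{\mathrm{seq}}$ as the gain term.

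Second, swap each $L_i^{\mathrm{seq}}$ for its stale counterpart. By Eq.~\eqref{eq:stale_surrogate_gap},
\[
L_i^{\mathrm{seq}}\ge L_i^{\mathrm{stale}}-\frac{\sqrt2\gamma}{(1-\gamma)^2}A_{\max}\sum_{k<i}\sqrt{\delta_k}.
\]
This inequality is the main point to check. I would derive it from Eq.~\eqref{eq:stale_occ_general} applied to
\[
f(s)=\mathbb{E}_{\mathbf a\sim\hat\pi^{\,i}(\cdot\mid s)}\widehat A_{i-1}(s,\mathbf a).
\]
Because $\widehat A_{i-1}$ is clipped, $\|f\|_\infty\le A_{\max}$, and the prefactor $1/(1-\gamma)$ in the surrogate supplies the second power of $(1-\gamma)$. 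The estimator $\widehat A_{i-1}$ is random in $\mathcal D_{i-1}$, so $f$ must be treated as a fixed bounded function once the data is fixed. The bound still holds pointwise in the data, so it also holds after taking expectations, matching how $\zeta_i$ is defined.

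Eq.~\eqref{eq:stale_occ_general} is itself proved by chaining Lemma~\ref{lem:occ} along $\hat\pi^0\to\cdots\to\hat\pi^{i-1}$ with the triangle inequality, together with Lemma~\ref{lem:klstep} for each link. I expect this step to be the main obstacle. Each link's KL must be measured under the occupancy of the earlier policy, $d^{\hat\pi^{\,k-1}}$, which is the direction Lemma~\ref{lem:occ} requires. Eq.~\eqref{eq:tr} provides precisely that direction, since it is the behavior-to-updated divergence under the pre-update occupancy. One indexing detail needs care: the link from $\hat\pi^{k-1}$ to $\hat\pi^{k}$ carries radius $\delta_k$, so reaching $\hat\pi^{\,i-1}$ uses only $k\le i-1$, which gives the $\sum_{k<i}$.

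Finally, substitute the stale bound into the telescoped inequality and group terms. For each $i$ the penalties combine to
\[
\frac{\sqrt2\gamma A_{\max}}{(1-\gamma)^2}\Big(\sqrt{\delta_i}+\sum_{k<i}\sqrt{\delta_k}\Big),
\]
and the $\zeta_i/(1-\gamma)$ terms carry over unchanged, which yields Eq.~\eqref{eq:stage_cert_stale_factored}. For the scaling claim, take $\delta_i\equiv\bar\delta$. The stale penalty is then proportional to
\[
\sum_{i=1}^n i\sqrt{\bar\delta}=\tfrac{n(n+1)}{2}\sqrt{\bar\delta}=\Theta(n^2\sqrt{\bar\delta}),
\]
while the intermediate-occupancy penalty is proportional to $n\sqrt{\bar\delta}=\Theta(n\sqrt{\bar\delta})$. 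Here $\Theta$ refers to the size of the certificate's penalty term, not to a matching lower bound on the true gap.
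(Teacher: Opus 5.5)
Your proposal is correct and follows the paper's proof. You start from the intermediate-occupancy certificate of Theorem~\ref{thm:stage}, lower-bound each $L_i^{\mathrm{seq}}$ by $L_i^{\mathrm{stale}}-\frac{\sqrt{2}\gamma A_{\max}}{(1-\gamma)^2}\sum_{k<i}\sqrt{\delta_k}$ via Eq.~\eqref{eq:stale_surrogate_gap}, then sum over $i$ and regroup the penalties, exactly as the paper does; the only differences are that you re-derive Theorem~\ref{thm:stage} (telescoping plus Theorem~\ref{thm:single}) and Eq.~\eqref{eq:stale_surrogate_gap} (chaining Lemma~\ref{lem:occ}) inline rather than citing them, and you make explicit the $\sum_i i=\tfrac{n(n+1)}{2}$ computation behind the scaling claim, which the paper's proof leaves implicit.
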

\vspace{-4pt}
\subsection{Plug-and-play upgrades}
\label{sec:theory:pnp}

The trust region is defined per agent and evaluated on the team's occupancy, plug-and-play replacement can be handled by aligning the new component within a trust region around the replaced agent, then resuming intermediate occupancy updates.
The proofs are detailed in Appendix~\ref{app:align}.
\vspace{-4pt}
\begin{proposition}[Certified resumability after replacement]
\label{prop:pnp}
Replacing agent $j$ by a new parameterization and then performing TeamTR updates that satisfy Eq.~\eqref{eq:tr}.
Theorems~\ref{thm:single}--\ref{thm:stage} continue to hold for subsequent updates.
This does not guarantee that the replacement step itself is non-decreasing in $J$; it only states that subsequent TeamTR updates preserve the same lower bound. Detailed in Appendix~\ref{app:plugin}.
\end{proposition}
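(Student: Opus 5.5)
Proposition~\ref{prop:pnp} follows by noting that Theorems~\ref{thm:single} and \ref{thm:stage} are \emph{stateless}: their hypotheses refer only to the current intermediate team, the trust region \eqref{eq:tr}, and the boundedness assumptions. Nothing in them depends on how the current team was obtained. The plan is therefore to isolate exactly what those proofs use and check that a replacement preserves it.

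\textbf{Step 1: what the earlier proofs assume.} The proofs of Theorem~\ref{thm:single} and Lemma~\ref{lem:klstep} use four ingredients.
\begin{enumerate}
\item[(a)] The turn-taking structure, which gives Lemma~\ref{lem:activefactor}.
\item[(b)] The bounds $|A^{\pi}|\le A_{\max}$ and $|\widehat A_{i-1}|\le A_{\max}$, the latter enforced by clipping.
\item[(c)] The per-step constraint \eqref{eq:tr}, measured under the occupancy of the \emph{current} intermediate team.
\item[(d)] The occupancy-shift bound of Lemma~\ref{lem:occ}.
\end{enumerate}
None of these refers to the parameterization or history of any factor. Lemma~\ref{lem:occ} holds for arbitrary pairs of policies, and the performance-difference argument behind Theorem~\ref{thm:single} uses only $\hat\pi^{i-1}$ and $\hat\pi^{i}$.

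\textbf{Step 2: the replaced team is a valid stage start.} Let $\tilde\pi=\pi[j\leftarrow\pi^{\mathrm{new}}]$ be the team after replacement. Because the router is folded into $P$ and agent $j$ still emits noop when inactive, $\tilde\pi$ is again a turn-taking team policy on the same MDP $\mathcal{M}$. So (a) holds. The reward and discount are unchanged, so $A^{\tilde\pi}$ is bounded by the same $A_{\max}$ whenever the bound is the generic $2R_{\max}/(1-\gamma)$. This gives (b).

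Now set $\hat\pi^0:=\tilde\pi$ and run TeamTR with fresh rollouts from $\tilde\pi$. Each subsequent step satisfies \eqref{eq:tr} by hypothesis, and $\zeta_i$ is defined relative to $\hat\pi^{i-1}$ as before. Invoking Theorem~\ref{thm:single} step by step and summing via Lemma~\ref{lem:klstep} then reproduces Theorem~\ref{thm:stage} verbatim, with $\pi_{\mathrm{cur}}=\tilde\pi$.

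\textbf{Step 3: scope.} I will then note that $J(\tilde\pi)-J(\pi_{\mathrm{cur}})$ is not controlled by these results. If desired, an alignment trust region $\KL_{\mathrm{tok}}^{\pi_{\mathrm{cur}}}(\pi_{\mathrm{cur}}^{(j)}\Vert\pi^{\mathrm{new}})\le\delta_0$ together with the performance-difference lemma and Lemma~\ref{lem:occ} gives a bound on this term. That bound is optional and outside the proposition's claim.

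\textbf{Main obstacle.} The delicate point is ensuring that no stale quantity from before the replacement leaks into the first post-replacement step. Examples would be cached rollouts or advantage estimates built under the old agent $j$. If the first update after replacement used such data, the penalty of Proposition~\ref{rem:quad2lin} would reappear, with an extra unbounded divergence between the old and new agents. I would therefore make explicit that TeamTR resamples under $\tilde\pi$ before the first update, which makes $\mathcal{D}_0$ and the reference occupancy in \eqref{eq:tr} both those of $\tilde\pi$. I would also state $A_{\max}$ as a uniform bound over all team policies, so the constant does not change across the replacement.
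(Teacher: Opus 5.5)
Your proposal is correct and follows the paper's own argument: the proofs of Theorems~\ref{thm:single}--\ref{thm:stage} involve only $L_i^{\mathrm{seq}}$, $\delta_i$, and $\zeta_i$, all defined relative to the current intermediate team and never to the history or parameterization of a factor, so they re-apply verbatim once the post-replacement team is taken as the starting point (your Step~2 just spells this out in more detail than the paper does). One minor correction to your ``main obstacle'': resampling under $\tilde\pi$ is sensible for tightness and for monitoring Eq.~\eqref{eq:tr}, but the formal bound does not require it, because an estimator built from stale data would only inflate $\zeta_i$, and the bound already charges for $\zeta_i$.
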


\begin{proposition}[Certificate tightening]
\label{prop:tight}
If an upgraded agent achieves a higher surrogate value within the same radius, or the same surrogate value with a smaller radius, then the lower bound in Theorem~\ref{thm:stage} does not decrease.
\end{proposition}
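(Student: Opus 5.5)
The argument is a monotonicity check on the right-hand side of Theorem~\ref{thm:stage}, read as a function of the per-step data $(L_i^{\mathrm{seq}},\delta_i,\zeta_i)_{i=1}^n$. Write
\[
B \;=\; \sum_{i=1}^{n} L_i^{\mathrm{seq}} \;-\; \frac{\sqrt{2}\gamma A_{\max}}{(1-\gamma)^2}\sum_{i=1}^{n}\sqrt{\delta_i} \;-\; \frac{1}{1-\gamma}\sum_{i=1}^{n}\zeta_i .
\]
The constants $\gamma$, $A_{\max}$ and $(1-\gamma)^{-1}$ do not depend on which component fills a slot. So $B$ is additively separable across update steps, nondecreasing in each $L_i^{\mathrm{seq}}$, and nonincreasing in each $\delta_i$, because $\sqrt{\cdot}$ is monotone on $[0,\infty)$.

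First I fix what ``upgraded agent'' means. At some step $i_0$, the updated factor $\pi_{\mathrm{tar}}^{\sigma(i_0)}$ is replaced by an upgraded factor $\tilde\pi_{\mathrm{tar}}^{\sigma(i_0)}$. The old factor satisfies Eq.~\eqref{eq:tr} with radius $\delta_{i_0}$ and attains surrogate $L_{i_0}^{\mathrm{seq}}$. The upgraded factor satisfies Eq.~\eqref{eq:tr} with radius $\tilde\delta_{i_0}$ and attains $\tilde L_{i_0}^{\mathrm{seq}}$. The two hypotheses are then:
\begin{itemize}
\item[(a)] $\tilde L_{i_0}^{\mathrm{seq}}\ge L_{i_0}^{\mathrm{seq}}$ with $\tilde\delta_{i_0}=\delta_{i_0}$;
\item[(b)] $\tilde L_{i_0}^{\mathrm{seq}}=L_{i_0}^{\mathrm{seq}}$ with $\tilde\delta_{i_0}\le\delta_{i_0}$.
\end{itemize}
By Proposition~\ref{prop:pnp}, Theorem~\ref{thm:stage} still applies to the modified sequence, since every step still satisfies its trust region. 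So the new certificate is again of the form $B$, evaluated at the modified data.

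Next I compare the two certificates term by term. In case (a) only the $i_0$-th surrogate term changes, and $\tilde B-B=\tilde L_{i_0}^{\mathrm{seq}}-L_{i_0}^{\mathrm{seq}}\ge 0$. In case (b) only the radius term changes, and
\[
\tilde B-B=\frac{\sqrt{2}\gamma A_{\max}}{(1-\gamma)^2}\Big(\sqrt{\delta_{i_0}}-\sqrt{\tilde\delta_{i_0}}\Big)\ge 0 .
\]

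The main obstacle is that the upgrade changes $\hat\pi^{i_0}$, and this could also change the later terms $L_i^{\mathrm{seq}}$ and $\zeta_i$ for $i>i_0$. The statement is therefore only literally true when those terms are held fixed. I would resolve this in one of two ways.

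The primary route reads the proposition as a statement about the certificate's dependence on the upgraded agent's own contribution, with all other per-step quantities held fixed. This includes the estimation error $\zeta_{i_0}$, which I would state as an explicit assumption. The result is then exactly the monotonicity shown above.

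The alternative route is to place the upgraded agent last in the order $\sigma$, which Theorem~\ref{thm:stage} permits for any permutation. Then no later steps exist, and the only other quantity that could move is $\zeta_{i_0}$. This is because the data $\mathcal{D}_{i_0-1}$ are collected under the unchanged $\hat\pi^{i_0-1}$. Its definition in Eq.~\eqref{eq:bias}, however, still involves $\mathbf{a}\sim\hat\pi^{i_0}$, so I would still have to assume it is unchanged or nonincreasing.

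Either way, the last step is to state this ceteris-paribus condition explicitly and conclude that the lower bound does not decrease.
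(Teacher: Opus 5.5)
Your proposal is correct and matches the paper's proof: both read the bound of Theorem~\ref{thm:stage} as a sum of per-step terms and use that it is nondecreasing in each $L_i^{\mathrm{seq}}$ and nonincreasing in each $\sqrt{\delta_i}$. Your explicit ceteris-paribus assumption, including the observations that $\zeta_{i_0}$ and the later $L_i^{\mathrm{seq}},\zeta_i$ could move with $\hat\pi^{\,i_0}$, is a sharper statement of the paper's parenthetical ``all other terms unchanged''; the last-in-order variant is not needed.
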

\vspace{-8pt}
\section{Algorithm}
\label{sec:algorithm}

We instantiate TeamTR as a stage-wise training loop that implements the theoretical framework in Sec.~\ref{sec:theory}.
Each within-stage update resamples rollouts within the current intermediate team, optimizes a clipped surrogate objective, and enforces a per-agent trust region via a decomposable KL constraint.
Algorithm~\ref{alg:teamtr} summarizes the procedure.

\begin{algorithm}[t]
\caption{TeamTR: Stage-wise Sequential Fine-tuning}
\label{alg:teamtr}
\begin{algorithmic}[1]
\REQUIRE Team $\pi_{\mathrm{cur}}=\{\pi^{(j)}\}_{j=1}^{n}$, trust-region radii $\{\delta_i\}$, prompt distribution $\mathcal{D}$, group size $G$, router $\mathcal{R}$
\ENSURE Updated team $\pi_{\mathrm{cur}}$
\FOR{stage $k=1,2,\ldots$}
  \STATE Sample batch $\mathcal{B}\subset\mathcal{D}$; choose update order $\sigma$
  \STATE $\hat\pi^{0}\leftarrow \pi_{\mathrm{cur}}$
  \FOR{$i=1$ to $n$}
    \STATE Collect rollouts under $\hat\pi^{\,i-1}$ with router $\mathcal{R}$
    \STATE Compute advantages $\tilde A$ via Eq.~\eqref{eq:group_adv}
    \STATE Update $\pi^{(\sigma(i))}$ via Eq.~\eqref{eq:ppo_obj} until $\widehat{\KL}_{\mathrm{tok}}\le \delta_i$
    \STATE $\hat\pi^{\,i}\leftarrow \hat\pi^{\,i-1}[\sigma(i)\leftarrow \pi^{(\sigma(i))}_{\mathrm{new}}]$
  \ENDFOR
  \STATE $\pi_{\mathrm{cur}}\leftarrow \hat\pi^{\,n}$
\ENDFOR
\end{algorithmic}
\end{algorithm}
\vspace{-6pt}
\paragraph{Group-normalized advantages.}
We use a sequence-level REINFORCE signal with group normalization.
For each prompt, we sample $G$ rollouts and compute
\begin{equation}
\label{eq:group_adv}
\begin{split}
\tilde A_g &\coloneqq \mathrm{clip}\!\left(\frac{R_g-\mu}{\sigma},-A_{\mathrm{clip}},A_{\mathrm{clip}}\right), \\
\mu &= \frac{1}{G}\sum_{g=1}^{G} R_g,
\quad
\sigma = \sqrt{\frac{1}{G}\sum_{g=1}^{G}(R_g-\mu)^2+\epsilon_{\mathrm{norm}}},
\end{split}
\end{equation}
where $R_g$ is the discounted episode return.
The scalar $\tilde A_g$ is applied to all message log-probabilities within the trajectory.
Bias from normalization and clipping is absorbed into $\zeta_i$;.

\paragraph{Per-agent update with trust-region control.}
At step $i$ update agent $j=\sigma(i)$ using rollouts from $\hat\pi^{\,i-1}$, likelihood ratio for message $m$ passed by agent $j$ at state $s$ is:
\begin{equation}
w(m;s)
\coloneqq
\frac{\pi^{(j)}_{\mathrm{new}}(m\mid s)}{\pi^{(j)}_{\mathrm{cur}}(m\mid s)}
=
\prod_{u=1}^{|m|}
\frac{\pi^{(j)}_{\mathrm{new}}(x_u\mid s,x_{<u})}{\pi^{(j)}_{\mathrm{cur}}(x_u\mid s,x_{<u})}.
\end{equation}
We optimize the objective with an adaptive KL penalty, where $\operatorname{clip}_\epsilon(w) \triangleq \operatorname{clip}(w, 1{-}\epsilon, 1{+}\epsilon)$:
\begin{equation}
\label{eq:ppo_obj}
\mathcal{L}^{(j)} =
\mathbb{E}_{m\sim \hat\pi^{\,i-1}} \Big[
\min \big( w \tilde A, \; \operatorname{clip}_\epsilon(w) \tilde A \big)
\Big]
- \beta \widehat{\KL}_{\mathrm{tok}}.
\end{equation}
the expectation is taken over messages passed by agent $j$ in the rollout batch under $\hat\pi^{\,i-1}$, $\epsilon$ is the ratio clip threshold, and $\beta$ is adjusted to satisfy the trust-region constraint.

\paragraph{Token-decomposed KL monitoring.}
Let $\mathcal{M}_j$ denote the messages passed by agent $j$ in the rollout batch.
We monitor the sampled forward KL, and tune $\beta$ adaptively or early-stop when $\widehat{\KL}_{\mathrm{tok}}$ approaches $\delta_i$:
\begin{equation}
\label{eq:emp_tok_kl}
\widehat{\KL}_{\mathrm{tok}}
\coloneqq
\frac{1}{|\mathcal{M}_j|}\sum_{m\in \mathcal{M}_j}
\sum_{u=1}^{|m|}
\log \frac{\pi^{(j)}_{\mathrm{cur}}(x_u\mid s,x_{<u})}{\pi^{(j)}_{\mathrm{new}}(x_u\mid s,x_{<u})}
\le \delta_i.
\end{equation}

\paragraph{Agent plug and play replacement.}
For agent replacement (Sec.~\ref{sec:theory:pnp}), we first align the new agent to satisfy the trust-region constraint on a probe set sampled from the current team's occupancy, then resume standard TeamTR updates.
Details for this are provided in Appendix~\ref{app:pluginalign}.

\section{Experiments}
\label{sec:experiments}

We evaluate TeamTR to validate the theory in Sec.~\ref{sec:theory}. We first report end-task performance under matched budgets, then diagnose the within-stage staleness predicted by Proposition~\ref{rem:quad2lin}, analyze training stability and certificate tracking under the monitored token-KL trust region, and finally test plug-and-play component replacement. Additional experiments and analyses are deferred to Appendix~\ref{app:exp}, including scaling with team size, compute, and wall-clock time, comparisons with parameter-matched single models, inference-time sampling, and selected ablation studies.

\subsection{Setup}
\label{sec:exp:setup}

\subsubsection{Models and Datasets.}
We fine-tune LLMs with 1.7B-8B parameters, including Qwen2.5 (1.5B/3B/7B-Instruct)~\citep{qwen2025qwen25technicalreport}, Qwen3 (1.7B/4B/8B-Instruct)~\citep{yang2025qwen3}, and LLaMA-3.3 (3B-Instruct)~\citep{grattafiori2024llama}.
For reference, we also compare against larger models in the 30B-72B range, including Qwen2.5 (32B/72B-Instruct), Qwen3 (30B-A3B, 32B), LLaMA-3.3 (70B-Instruct), and QwQ~\citep{qwq32b}, DeepSeek-R1~\citep{guo2025deepseek}, GPT-o4-mini.
We evaluate on six benchmarks spanning mathematical reasoning (AIME 2024, AIME 2025, MATH-500~\citep{hendrycks2021math}), logical reasoning (ZebraLogic~\citep{lin2025zebralogic}, AutoLogi~\citep{zhu2025autologi}), and active reasoning (ARBench~\citep{zhou2025active} and PlanBench~\citep{valmeekam2023planbench}).
Details of benchmarks are provided in Appendix~\ref{app:benchmarks}.

\subsubsection{Training and Evaluation Protocol.}
We implement training with VeRL~\citep{sheng2025hybridflow}, using a temperature of 0.8, a top-p of 1.0, and a maximum output length of 32,768 tokens, unless otherwise specified.
For math reasoning, we use training data from DeepScaleR and DAPO~\citep{deepscaler2025,yu2025dapo}; for ARBench and PlanBench, we use their official training sets. 
For ZebraLogic, which is comparably challenging to MATH and lacks dedicated training or development sets, we train for each iteration using the same number of epochs that achieved the best performance on MATH \citep{prasad2024self}.
Following standard practice for high-variance reasoning tasks, we sample multiple solutions per instance and report both pass@K (the proportion of instances with at least one correct solution among K samples) and avg@K (the average correctness across K samples).
We set K=64 for AIME and ZebraLogic, K=4 for MATH-500, K=25 for ARBench, and K=8 for PlanBench.
Mathematical verification uses the DeepScaleR verifier; external baseline numbers are reported as citations when verifiers may differ.
Within each benchmark, we match the rollout and sampling budgets.

\subsubsection{Compared Methods.}
We compare TeamTR against baselines spanning inference-time prompting, single-agent fine-tuning, multi-agent coordination, and strong reasoning systems. Inference-time baselines include Chain-of-Thought prompting~\citep{wei2022chain}, Self-Consistency with majority voting~\citep{wang2022self}, and Tree-of-Thoughts~\citep{yao2023tree}.
Single-agent fine-tuning baselines include PPO~\citep{schulman2017proximal}, GRPO~\citep{shao2024deepseekmath}, and DAPO~\citep{yu2025dapo}, all trained under the same budget as the per-agent allocation in TeamTR.
Multi-agent baselines include debate with and without a judge~\citep{du2023improving}, and naive sequential fine-tuning reuses stage-start rollouts for all within-stage updates, testing whether intermediate-occupancy evaluation mitigates the compounding degradation in Proposition~\ref{rem:quad2lin}.

\begin{table*}[t]
\centering
\caption{Performance comparison across reasoning and planning benchmarks. \textcolor{better}{Green}/\textcolor{worse}{red} indicate improvement/decline relative to TeamTR (3$\times$8B). Best results are \textbf{bold}; second-best are \underline{underlined}.}
\label{tab:main-results}
\resizebox{\textwidth}{!}{%
\begin{tabular}{llccccccccc}
\toprule
\multirow{2}{*}{Method} & \multirow{2}{*}{Size} & \multicolumn{5}{c}{General Reasoning (\%)} & \multicolumn{3}{c}{Active Reasoning (\%)} & Planning (\%) \\
\cmidrule(lr){3-7}\cmidrule(lr){8-10}\cmidrule(lr){11-11}
& & AIME24 & AIME25 & MATH-500 & ZebraLogic & AutoLogi & DC & SP & GN & PlanBench \\
\midrule
\multicolumn{11}{l}{\textit{Proprietary Models}} \\
GPT-4o-mini & -- & 8.1{\scriptsize\textcolor{worse}{$_{-80.0}$}} & 8.8{\scriptsize\textcolor{worse}{$_{-69.5}$}} & 78.2{\scriptsize\textcolor{worse}{$_{-21.1}$}} & 20.1{\scriptsize\textcolor{worse}{$_{-74.6}$}} & 62.5{\scriptsize\textcolor{worse}{$_{-28.0}$}} & 44.0{\scriptsize\textcolor{worse}{$_{-2.7}$}} & 40.8{\scriptsize\textcolor{worse}{$_{-2.3}$}} & 43.6{\scriptsize\textcolor{worse}{$_{-1.7}$}} & 34.7{\scriptsize\textcolor{worse}{$_{-9.4}$}} \\
o4-mini & -- & 79.6{\scriptsize\textcolor{worse}{$_{-8.5}$}} & 74.8{\scriptsize\textcolor{worse}{$_{-3.5}$}} & 98.0{\scriptsize\textcolor{worse}{$_{-1.3}$}} & 88.9{\scriptsize\textcolor{worse}{$_{-5.8}$}} & 86.3{\scriptsize\textcolor{worse}{$_{-4.2}$}} & \underline{46.7} & \underline{43.1} & 45.1{\scriptsize\textcolor{worse}{$_{-0.2}$}} & 41.2{\scriptsize\textcolor{worse}{$_{-2.9}$}} \\
\midrule
\multicolumn{11}{l}{\textit{Open-Source Thinking Models}} \\
Qwen3-A3B & 30B & 80.4{\scriptsize\textcolor{worse}{$_{-7.7}$}} & 70.9{\scriptsize\textcolor{worse}{$_{-7.4}$}} & 98.0{\scriptsize\textcolor{worse}{$_{-1.3}$}} & 89.5{\scriptsize\textcolor{worse}{$_{-5.2}$}} & 88.1{\scriptsize\textcolor{worse}{$_{-2.4}$}} & 43.1{\scriptsize\textcolor{worse}{$_{-3.6}$}} & 38.4{\scriptsize\textcolor{worse}{$_{-4.7}$}} & 42.3{\scriptsize\textcolor{worse}{$_{-3.0}$}} & 39.1{\scriptsize\textcolor{worse}{$_{-5.0}$}} \\
QwQ & 32B & 79.5{\scriptsize\textcolor{worse}{$_{-8.6}$}} & 69.5{\scriptsize\textcolor{worse}{$_{-8.8}$}} & 98.0{\scriptsize\textcolor{worse}{$_{-1.3}$}} & 76.8{\scriptsize\textcolor{worse}{$_{-17.9}$}} & 86.3{\scriptsize\textcolor{worse}{$_{-4.2}$}} & 41.9{\scriptsize\textcolor{worse}{$_{-4.8}$}} & 36.1{\scriptsize\textcolor{worse}{$_{-7.0}$}} & 39.5{\scriptsize\textcolor{worse}{$_{-5.8}$}} & 37.9{\scriptsize\textcolor{worse}{$_{-6.2}$}} \\
Qwen3 (thinking) & 32B & 81.4{\scriptsize\textcolor{worse}{$_{-6.7}$}} & 72.9{\scriptsize\textcolor{worse}{$_{-5.4}$}} & 97.2{\scriptsize\textcolor{worse}{$_{-2.1}$}} & 88.8{\scriptsize\textcolor{worse}{$_{-5.9}$}} & 87.3{\scriptsize\textcolor{worse}{$_{-3.2}$}} & 42.7{\scriptsize\textcolor{worse}{$_{-4.0}$}} & 38.9{\scriptsize\textcolor{worse}{$_{-4.2}$}} & 41.1{\scriptsize\textcolor{worse}{$_{-4.2}$}} & 40.3{\scriptsize\textcolor{worse}{$_{-3.8}$}} \\
DeepSeek-R1 Distill & 70B & 70.0{\scriptsize\textcolor{worse}{$_{-18.1}$}} & 56.3{\scriptsize\textcolor{worse}{$_{-22.0}$}} & 94.5{\scriptsize\textcolor{worse}{$_{-4.8}$}} & 71.3{\scriptsize\textcolor{worse}{$_{-23.4}$}} & 83.5{\scriptsize\textcolor{worse}{$_{-7.0}$}} & 42.1{\scriptsize\textcolor{worse}{$_{-4.6}$}} & 35.4{\scriptsize\textcolor{worse}{$_{-7.7}$}} & 40.7{\scriptsize\textcolor{worse}{$_{-4.6}$}} & 39.1{\scriptsize\textcolor{worse}{$_{-5.0}$}} \\
\midrule
\multicolumn{11}{l}{\textit{Single-Agent Fine-tuning (Qwen3-8B)}} \\
GRPO & 8B & 39.1{\scriptsize\textcolor{worse}{$_{-49.0}$}} & 27.9{\scriptsize\textcolor{worse}{$_{-50.4}$}} & 90.7{\scriptsize\textcolor{worse}{$_{-8.6}$}} & 35.4{\scriptsize\textcolor{worse}{$_{-59.3}$}} & 80.3{\scriptsize\textcolor{worse}{$_{-10.2}$}} & 40.1{\scriptsize\textcolor{worse}{$_{-6.6}$}} & 35.2{\scriptsize\textcolor{worse}{$_{-7.9}$}} & 38.9{\scriptsize\textcolor{worse}{$_{-6.4}$}} & 33.1{\scriptsize\textcolor{worse}{$_{-11.0}$}} \\
DAPO & 8B & 41.3{\scriptsize\textcolor{worse}{$_{-46.8}$}} & 30.1{\scriptsize\textcolor{worse}{$_{-48.2}$}} & 91.5{\scriptsize\textcolor{worse}{$_{-7.8}$}} & 36.7{\scriptsize\textcolor{worse}{$_{-58.0}$}} & 80.9{\scriptsize\textcolor{worse}{$_{-9.6}$}} & 40.7{\scriptsize\textcolor{worse}{$_{-6.0}$}} & 35.9{\scriptsize\textcolor{worse}{$_{-7.2}$}} & 39.3{\scriptsize\textcolor{worse}{$_{-6.0}$}} & 33.5{\scriptsize\textcolor{worse}{$_{-10.6}$}} \\
\midrule
\multicolumn{11}{l}{\textit{Multi-Agent Fine-tuning (3$\times$Qwen3-8B)}} \\
Debate & 3$\times$8B & 68.7{\scriptsize\textcolor{worse}{$_{-19.4}$}} & 59.1{\scriptsize\textcolor{worse}{$_{-19.2}$}} & 95.2{\scriptsize\textcolor{worse}{$_{-4.1}$}} & 84.1{\scriptsize\textcolor{worse}{$_{-10.6}$}} & 85.1{\scriptsize\textcolor{worse}{$_{-5.4}$}} & 43.1{\scriptsize\textcolor{worse}{$_{-3.6}$}} & 38.0{\scriptsize\textcolor{worse}{$_{-5.1}$}} & 42.3{\scriptsize\textcolor{worse}{$_{-3.0}$}} & 38.3{\scriptsize\textcolor{worse}{$_{-5.8}$}} \\
Debate + Judge & 3$\times$8B & 71.1{\scriptsize\textcolor{worse}{$_{-17.0}$}} & 61.7{\scriptsize\textcolor{worse}{$_{-16.6}$}} & 95.9{\scriptsize\textcolor{worse}{$_{-3.4}$}} & 86.3{\scriptsize\textcolor{worse}{$_{-8.4}$}} & 85.9{\scriptsize\textcolor{worse}{$_{-4.6}$}} & 44.3{\scriptsize\textcolor{worse}{$_{-2.4}$}} & 39.4{\scriptsize\textcolor{worse}{$_{-3.7}$}} & 43.5{\scriptsize\textcolor{worse}{$_{-1.8}$}} & 39.1{\scriptsize\textcolor{worse}{$_{-5.0}$}} \\
Role-play & 3$\times$8B & 69.9{\scriptsize\textcolor{worse}{$_{-18.2}$}} & 60.5{\scriptsize\textcolor{worse}{$_{-17.8}$}} & 95.7{\scriptsize\textcolor{worse}{$_{-3.6}$}} & 85.1{\scriptsize\textcolor{worse}{$_{-9.6}$}} & 85.5{\scriptsize\textcolor{worse}{$_{-5.0}$}} & 43.7{\scriptsize\textcolor{worse}{$_{-3.0}$}} & 38.7{\scriptsize\textcolor{worse}{$_{-4.4}$}} & 43.1{\scriptsize\textcolor{worse}{$_{-2.2}$}} & 38.7{\scriptsize\textcolor{worse}{$_{-5.4}$}} \\
\midrule
\multicolumn{11}{l}{\textit{TeamTR (Ours)}} \\
\rowcolor{blue!8}
Homogeneous & 3$\times$8B & \underline{88.1} & \underline{78.3} & \textbf{99.3} & \underline{94.7} & \textbf{90.5} & \textbf{46.7} & \textbf{43.1} & \textbf{45.3} & \underline{44.1} \\
\rowcolor{blue!8}
Heterogeneous & 1.7B+8B+14B & \textbf{89.7}{\scriptsize\textcolor{better}{$_{+1.6}$}} & \textbf{80.1}{\scriptsize\textcolor{better}{$_{+1.8}$}} & 98.1{\scriptsize\textcolor{worse}{$_{-1.2}$}} & \textbf{96.3}{\scriptsize\textcolor{better}{$_{+1.6}$}} & 88.7{\scriptsize\textcolor{worse}{$_{-1.8}$}} & 45.5{\scriptsize\textcolor{worse}{$_{-1.2}$}} & 42.3{\scriptsize\textcolor{worse}{$_{-0.8}$}} & 44.5{\scriptsize\textcolor{worse}{$_{-0.8}$}} & \textbf{44.5}{\scriptsize\textcolor{better}{$_{+0.4}$}} \\
\bottomrule
\end{tabular}%
}
\end{table*}
\subsubsection{TeamTR Setup.}
We use a team of three agents with a fixed round-robin router.
The trust-region radius $\delta_i$ controls the trade-off between update magnitude and certificate tightness (Theorem~\ref{thm:single}): smaller values yield tighter bounds but slower learning.
We set $\delta_i = 0.01$ based on preliminary experiments; sensitivity analysis in Sec.~\ref{sec:exp:trcert} shows that performance is stable across $\delta_i \in [0.005, 0.02]$.
The KL penalty coefficient $\beta$ in Eq.~\eqref{eq:ppo_obj} is tuned adaptively to satisfy the constraint.
All agents share the same base architecture but have separate parameter sets.
For plug-and-play experiments, Stage-0 alignment uses 500 probe contexts from the current team's rollout distribution.

\subsection{Main Results}
\label{sec:exp:main}

Table~\ref{tab:main-results} reports performance across general reasoning, active reasoning, and planning benchmarks with single-agent, multi-agent fine-tuning and strong thinking LLM baselines.
Across benchmarks, TeamTR outperforms sequential fine-tuning baselines instantiated with PPO/GRPO/DAPO and multi-agent fine-tuning baselines (debate and role-play) under the same team budget.
The heterogeneous configuration (1.7B+8B+14B) excels on challenging benchmarks (AIME, ZebraLogic), and the homogeneous configuration (3×8B) performs better on simpler tasks (MATH-500, AutoLogi), where consistent capacity across agents is beneficial.

\subsection{Verifying Compounding Occupancy Shift}
\label{sec:exp:occshift}

Section~\ref{sec:theory} predicts that, under within-stage sequential updates, evaluating later updates on stage-start rollouts can introduce a growing mismatch between the stage-start occupancy and the current intermediate occupancy.
We quantify this effect by measuring the surrogate disagreement between rollouts collected at stage start and rollouts collected under the current intermediate team within the same training stage.

For a training stage $k$ and within-stage update index $i$, let $\widehat{L}^{\mathrm{stale}}_{i}$ denote the surrogate evaluated using rollouts from the stage-start team ($d^{\hat\pi^0}$), and let $\widehat{L}^{\mathrm{inter}}_{i}$ denote the surrogate evaluated using rollouts from the intermediate team before the $i$-th update ($d^{\hat\pi^{i-1}}$).
We define the stale-occupancy gap as $\mathrm{Gap}_i \coloneqq \big|\widehat{L}^{\mathrm{stale}}_{i}-\widehat{L}^{\mathrm{inter}}_{i}\big|$. Figure~\ref{fig:occ-gap} reports $\mathrm{Gap}_i$ across training stages for $i=2$ and $i=3$.
For baselines that reuse stage-start rollouts, the gap is systematically larger for the later within-stage update ($i=3$) than for $i=2$, indicating that stage-start rollouts become increasingly misaligned after earlier updates are applied.
This trend is consistent with the compounding mismatch implied by Remark~\ref{rem:quad2lin}.
TeamTR avoids optimizing on stale rollouts by resampling under the intermediate team before each update, and empirically maintains a much smaller surrogate disagreement.

\begin{figure*}[t!]
  \centering
  \includegraphics[width=0.95\textwidth]{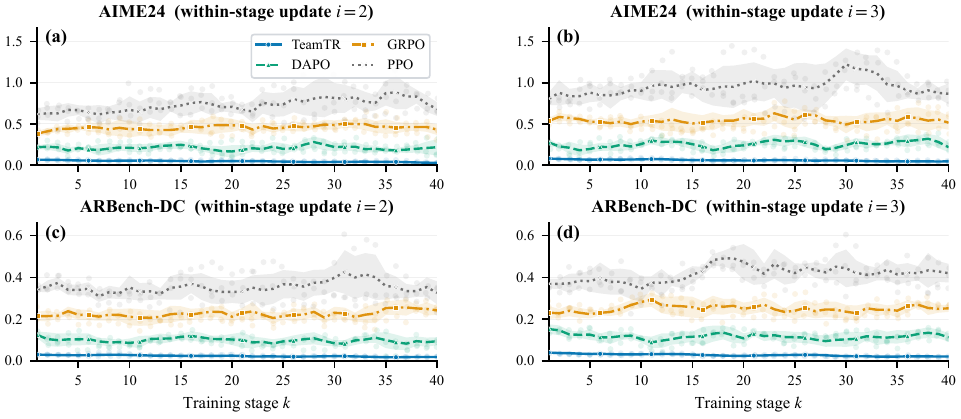}
  \vspace{-8pt}
\caption{
\textbf{Stale-occupancy gap within a training stage.}
We plot the occupancy gap across training stages $k$ for within-stage update indices $i=2$ and $i=3$.
For the baselines-reuse stage, the gap is larger for the later update ($i=3$), consistent with Remark~\ref{rem:quad2lin}.
}
  \vspace{-8pt}
  \label{fig:occ-gap}
\end{figure*}

\subsection{Scaling, Ablations, and Compute Trade-offs}
\label{sec:exp:scaling_compute}

Proposition~\ref{rem:quad2lin} predicts a gap between stale and intermediate-occupancy evaluation that grows with team size.
We examine this prediction beyond the default three-agent setting and summarize the results in Table~\ref{tab:scale_control_compute}.
On MATH-500 with homogeneous Qwen3-1.7B teams, stale sequential updates exhibit near-quadratic growth in both the stale surrogate gap and empirical occupancy drift, with fitted exponents $1.94$ and $1.91$, respectively.
Under TeamTR, the corresponding exponents are $1.07$ and $1.05$, matching the expected near-linear behavior.
The scaling difference is also reflected in end-task performance: at $n=8$, TeamTR reaches $87.9\%$ accuracy, while naive sequential training drops to $58.7\%$.

\begin{table*}[t]
\centering
\caption{
\textbf{Scaling behavior, ablations, and overhead.}
Panel (a) reports team-size scaling and larger-model results. For model-scale rows, $\Delta_{\mathrm{stale}}$ is reported as AIME24 / MATH-500.
Panel (b) isolates the effects of intermediate-occupancy resampling and trust-region control.
}
\label{tab:scale_control_compute}
\scriptsize
\setlength{\tabcolsep}{3.5pt}
\renewcommand{\arraystretch}{1.08}
\begin{minipage}{0.49\textwidth}
\centering
\textbf{(a) Team-size and model-scale behavior}
\vspace{2pt}

\resizebox{\linewidth}{!}{
\begin{tabular}{llccc}
\toprule
Setting & Method & AIME24 & MATH-500 & Drift summary \\
\midrule
$n$-scaling & Naive Seq. & -- & 58.7 $(n{=}8)$
& $\alpha_{\Delta}{=}1.94,\ \alpha_D{=}1.91$ \\
$n$-scaling & TeamTR & -- & 87.9 $(n{=}8)$
& $\alpha_{\Delta}{=}1.07,\ \alpha_D{=}1.05$ \\
\addlinespace[2pt]
$3{\times}8$B & Naive Seq. & 71.1 & 95.1 & $0.31 / 0.18$ \\
$3{\times}8$B & TeamTR & 88.1 & 99.3 & $0.08 / 0.03$ \\
\addlinespace[2pt]
$1.7$B+$8$B+$14$B & TeamTR & 89.7 & 98.1 & $0.07 / 0.04$ \\
\addlinespace[2pt]
$8$B+$14$B+$32$B & Naive Seq. & 77.8 & 97.1 & $0.29 / 0.16$ \\
$8$B+$14$B+$32$B & TeamTR & 92.5 & 99.4 & $0.06 / 0.02$ \\
\bottomrule
\end{tabular}
}
\end{minipage}
\hfill
\begin{minipage}{0.49\textwidth}
\centering
\textbf{(b) Ablations on AIME24}
\vspace{2pt}

\resizebox{\linewidth}{!}{
\begin{tabular}{lcccc}
\toprule
Variant & Acc. (\%) & $\Delta_{\mathrm{stale}}\downarrow$ & Stab.$\downarrow$ & Coord. (\%) \\
\midrule
TeamTR (full) & \textbf{88.1}{\tiny$\pm$1.2} & \textbf{0.08} & \textbf{1.9} & \textbf{89.1} \\
KL-penalty only, adaptive $\beta$ & 84.9{\tiny$\pm$1.6} & 0.14 & 2.7 & 83.1 \\
KL-penalty only, fixed $\beta$ & 80.7{\tiny$\pm$2.0} & 0.20 & 3.5 & 77.2 \\
Resample every 2 updates & 79.3{\tiny$\pm$1.8} & 0.18 & 2.8 & 79.2 \\
No resampling (= Naive Seq.) & 71.1{\tiny$\pm$2.8} & 0.31 & 4.2 & 71.5 \\
No trust region ($\delta{\to}\infty$) & 68.3{\tiny$\pm$3.5} & 0.42 & 6.1 & 62.3 \\
\bottomrule
\end{tabular}
}
\end{minipage}
\vspace{-4pt}
\end{table*}

The same occupancy-shift pattern appears at larger model scales.
For $3{\times}$Qwen3-8B teams, TeamTR improves AIME24 accuracy from $71.1\%$ to $88.1\%$ and reduces $\Delta_{\mathrm{stale}}$ from $0.31$ to $0.08$.
For heterogeneous $8$B+$14$B+$32$B teams, TeamTR improves AIME24 accuracy from $77.8\%$ to $92.5\%$ and reduces $\Delta_{\mathrm{stale}}$ from $0.29$ to $0.06$.
Thus, stronger individual models do not remove the stale-occupancy mismatch; controlling intermediate occupancy remains beneficial for larger and heterogeneous teams.

Table~\ref{tab:scale_control_compute}(b) further separates the two components of TeamTR.
Removing intermediate-occupancy resampling recovers the naive sequential baseline and leads to a large stale gap.
Resampling every two updates reduces this mismatch, but remains $8.8$ points below full TeamTR on AIME24, indicating that stale drift still accumulates within a stage.
Trust-region control is also necessary.
An adaptive KL-penalty-only variant improves over naive sequential training, but it trails full TeamTR by $3.2$ points and exhibits larger stale gaps and higher instability.
Without any trust-region constraint, the update becomes the least stable and produces the weakest coordination score.
These ablations suggest that TeamTR's gains come from the combination of fresh intermediate rollouts and explicit token-level trust-region enforcement.

The additional sampling cost is modest relative to the stability gain.
On AIME24 with $3{\times}$Qwen3-8B over 40 stages, TeamTR uses $158.3$M tokens and $49.2$K rollouts, compared with $142.5$M tokens and $44.3$K rollouts for naive sequential training.
This corresponds to about $+11.1\%$ overhead in both token and rollout accounting.
The measured per-update wall-clock cost is $192.8$s for TeamTR and $172.0$s for naive sequential training, giving a $1.12{\times}$ overhead.
KL monitoring contributes $6.2$s per update, so the dominant cost remains rollout generation rather than trust-region monitoring.

The turn-taking assumption remains the formal scope of the guarantee.
Lemma~\ref{lem:activefactor} relies on a single active agent at each decision step, and does not directly cover concurrent generation, overlapping tool calls, or branching blackboard-style interaction.
The stale-rollout diagnosis, however, is not restricted to text-only reasoning.
In a sequential tool-mediated setting on $\tau$-bench, TeamTR improves average pass@1 / pass@4 from $45.6 / 13.8$ under naive sequential training to $52.9 / 19.7$ with the same $3{\times}$8B team.
This suggests that intermediate-occupancy control remains useful when the shared context includes tool-mediated state changes, even though the formal theorem is stated for turn-taking interaction.
\begin{figure*}[htbp]
  \centering
  \includegraphics[width=0.95\textwidth]{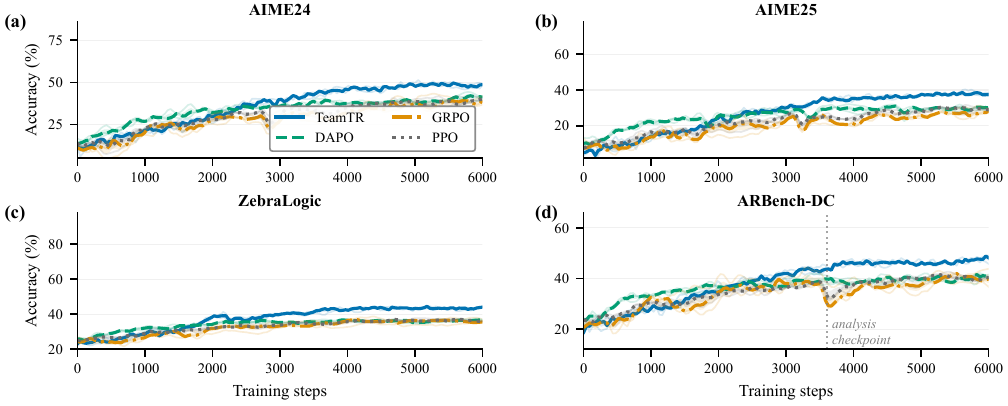}
  \vspace{-8pt}
\caption{
\textbf{Training dynamics under matched rollout budgets.}
We compare TeamTR against sequential baselines instantiated with PPO/GRPO/DAPO.
Baselines exhibit occasional regressions during training, whereas TeamTR maintains more stable improvement by evaluating each update on intermediate rollouts and enforcing per-update trust regions.
Shaded regions indicate variation across seeds.
}

  \vspace{-10pt}
  \label{fig:training-curves}
\end{figure*}
The empirical certificate in Sec.~\ref{sec:exp:trcert} should be interpreted as a training diagnostic rather than a literal per-stage guarantee.
It combines logged surrogate gains, monitored KL terms, and a proxy for the estimation-error penalty $\zeta_i$.
In our logs, the $\widehat{\zeta}$ proxy contributes $18\%$ of the total penalty on average, and the near-threshold flip rate of the sampled token-KL monitor is $2.4\%$ under $50\%$ token-position subsampling.
The certificate therefore provides a useful conservative signal during training, while its tightness depends on the proxy quality and on the reliability of sampled token-KL monitoring.
\subsection{Training Dynamics}
\label{sec:exp:dynamics}

Figure~\ref{fig:training-curves} shows held-out accuracy as a function of training steps under matched rollout budgets.
Across benchmarks, TeamTR improves steadily and achieves higher final performance than sequential baselines trained with PPO/GRPO/DAPO.
The baselines exhibit non-monotonic trajectories with occasional regressions, which are accompanied by large stale-occupancy gaps measured in Sec.~\ref{sec:exp:occshift}.
These trends align with Remark~\ref{rem:quad2lin}: reusing stage-start rollouts can lead to compounding mismatch, while intermediate-occupancy evaluation localizes the occupancy-shift penalty.

\subsection{Trust-Region Enforcement and Certificate Tracking}
\label{sec:exp:trcert}

Theorems~\ref{thm:single}--\ref{thm:stage} lower bound improvement under the per-update trust-region condition
$\widehat{\KL}_{\mathrm{tok}} \le \delta_i$.
We examine whether updates stay within the prescribed KL region and whether the resulting empirical certificate is informative.

Figure~\ref{fig:tr-cert}(a) shows the distribution of per-update token-level KL divergences on AIME25, with the trust-region threshold $\delta$ marked by the red dashed line.
The percentages above each method report the out-of-region rate, i.e., the fraction of updates with $\widehat{\KL}_{\mathrm{tok}}>\delta$.
TeamTR keeps most updates in-region, whereas PPO/GRPO/DAPO variants exhibit substantially higher out-of-region rates. Figure~\ref{fig:tr-cert}(b) compares the cumulative measured improvement across training stages with the cumulative certificate lower bound obtained by plugging logged surrogates and KL terms into Theorem~\ref{thm:stage}.
Figure~\ref{fig:tr-cert}(c) further plots the per-stage certificate value against the corresponding empirical improvement, reporting rank correlation and the violation rate (the fraction of points where the certificate exceeds the measured improvement).
On AIME25, the certificate remains conservative while tracking progress with a consistent gap.

\begin{figure*}[t!]
  \centering
  \includegraphics[width=0.95\textwidth]{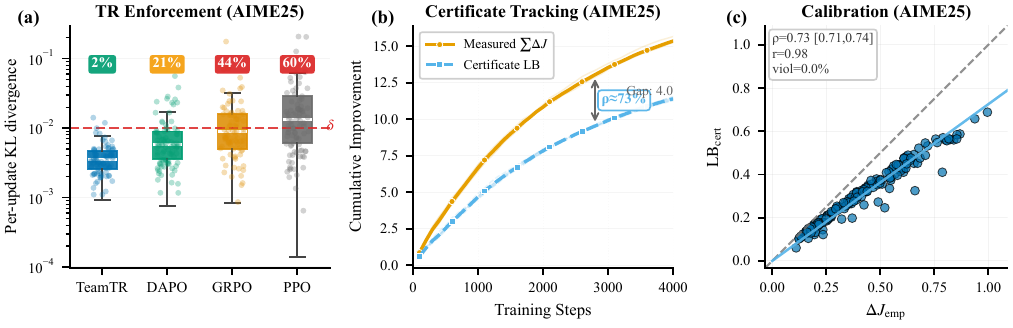}

\caption{
\textbf{Trust-region enforcement and certificate tracking (AIME25).}
(a) Distribution of per-update token-level KL divergence; the red dashed line marks the threshold $\delta$, and percentages indicate out-of-region rates ($\widehat{\KL}_{\mathrm{tok}} > \delta$).
(b) Cumulative measured improvement versus certificate lower bound (Theorem~\ref{thm:stage}); $\rho$ denotes rank correlation.
(c) Per-stage calibration of certificate values against empirical improvements; ``viol'' indicates the fraction of stages where the bound is violated.
}

  \vspace{-10pt}
  \label{fig:tr-cert}
\end{figure*}

\subsection{Token-Level Analysis of Shift}
\label{sec:exp:logit}

Our per-update trust region constrains behavior-to-updated divergence through the token-level decomposition in Eq.~\eqref{eq:token_chain_rule}.
To localize where probability mass moves, Fig.~\ref{fig:logit-vocab} compares token logits before and after an update at a representative ARBench checkpoint, with tokens ordered by their pre-update probabilities (remaining vocabulary aggregated as \texttt{other}).
In-region updates (monitored by $\widehat{\KL}_{\mathrm{tok}}\le\delta$) primarily modify a small set of high-probability tokens, resulting in controlled, localized changes.
Out-of-region updates ($\widehat{\KL}_{\mathrm{tok}}>\delta$) induce larger reshuffling among close alternatives and can flip the dominant token, consistent with the instabilities observed in Sec.~\ref{sec:exp:dynamics}.

\begin{figure}[t!]
  \centering
  \includegraphics[width=0.95\columnwidth]
{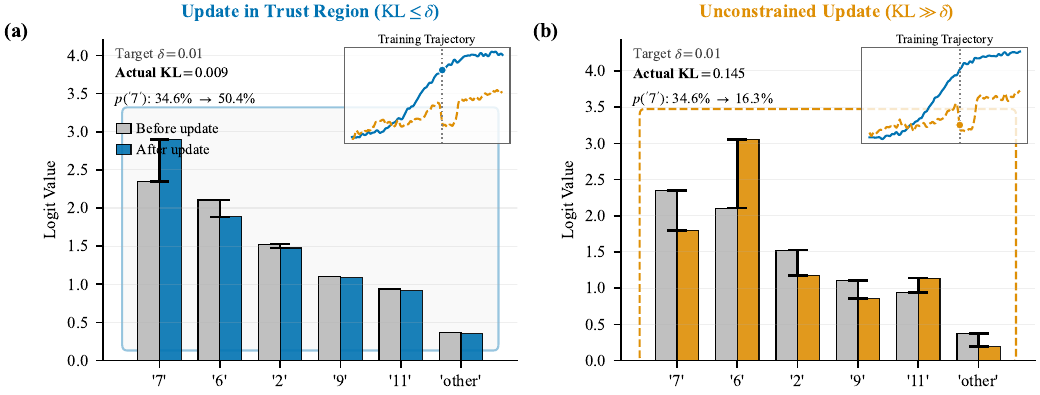}

\caption{
\textbf{Token-level logit shifts by pre-update probability rank.}
Left: an in-region update ($\widehat{\KL}_{\mathrm{tok}}\le\delta$) produces localized changes on top tokens.
Right: an out-of-region update ($\widehat{\KL}_{\mathrm{tok}}>\delta$) reshuffles probability mass among alternatives.
}

\label{fig:logit-vocab}

  \vspace{-20pt}
\end{figure}

\subsection{Plug-and-Play Component Replacement}
\label{sec:exp:plugin}

Proposition~\ref{prop:pnp} implies that after replacing an agent, once the new component is aligned to satisfy the per-agent trust-region interface, subsequent TeamTR updates continue to admit the same improvement-certificate form (the replacement step itself is not certified).
We evaluate this plug-and-play by initializing a team with Qwen2.5-Instruct (1.5B/3B/7B) and, at stage $k_{\mathrm{swap}}=20$, replacing the 1.5B agent with Qwen3-8B, which requires protocol alignment (see Appendix~\ref{app:pnp-protocol} for details).
We compare three strategies:
\emph{Direct swap}, which replaces the agent and immediately resumes training;
\emph{Stage-0 aligned swap}, which first aligns the new agent on 500 probe contexts so that the monitored token-KL satisfies $\widehat{\KL}_{\mathrm{tok}}\le \delta$ on the probe distribution before resuming training; and
\emph{Retrain-from-scratch}, which resets the team after the swap and retrains for the remaining budget.
Figure~\ref{fig:plugin-upgrade} reports scores on AIME24 and ARBench-DC.
A direct swap causes an immediate performance drop at $k_{\mathrm{swap}}$ ($-$18\% on AIME24) and a gradual recovery thereafter.
In contrast, Stage-0 alignment largely eliminates the swap shock and enables post-swap improvement, achieving substantially higher final performance under the same post-swap budget (+27\% and +24\%  on AIME24 and ARBench-DC, respectively).
Retraining from scratch avoids reusing pre-swap components but is less efficient: within the constrained budget, it remains below the aligned-swap trajectory.

\begin{figure}[htbp]
  \centering
  \includegraphics[width=0.49\textwidth]{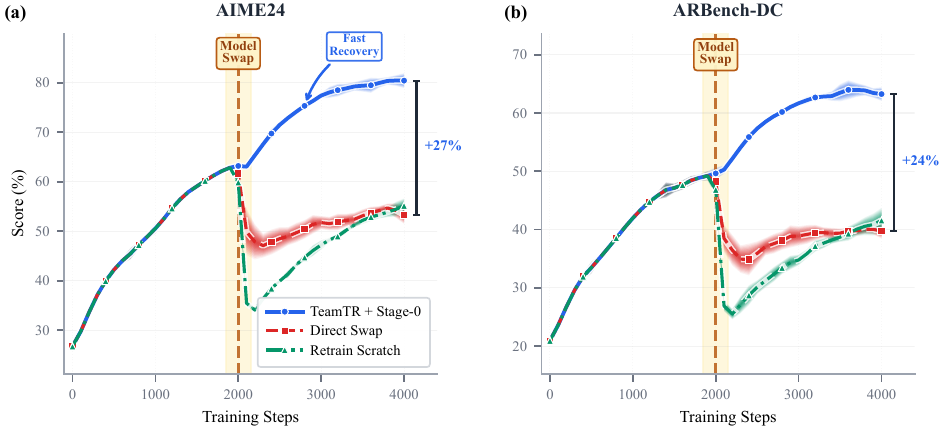}
  \vspace{-8pt}
\caption{
\textbf{Plug-and-play component replacement.}
A Qwen2.5-Instruct (1.5B/3B/7B) team is trained over 20 stages; and the 1.5B agent is replaced with Qwen3-8B.
Stage-0 alignment mitigates the swap shock and achieves the best performance.
}
\label{fig:plugin-upgrade}
  \vspace{-10pt}
\end{figure}
\section{Conclusion and Limitations}
\label{sec:conclusion}

We analyzed sequential fine-tuning of shared-context LLM teams and identified \emph{compounding occupancy shift}: later component updates can be optimized or evaluated under stale rollouts, yielding a certificate penalty that scales quadratically with team size.
TeamTR addresses this by evaluating each update under the partially updated team's occupancy and enforcing per-agent trust regions via a token-decomposable behavior-to-updated divergence monitored by a sampled estimator.
This yields lower bounds on per-update and per-stage improvement and improves coordination stability and accuracy in practice, including a plug-and-play component with a brief alignment step. \textbf{Limitations:} Our current analysis targets a single-active-agent protocol; We rely on per-update resampling to avoid long-horizon importance weighting.

\newpage
\section*{Impact Statement}
This paper presents work aimed at advancing the field of machine learning. There are many potential societal consequences of our work, none of which we consider necessary to highlight here.

\bibliography{example_paper}
\bibliographystyle{icml2026}
\newpage
\appendix
\onecolumn
\section*{Appendix contents}

\begin{enumerate}[label=\textbf{\Alph*.},leftmargin=8mm]
\item \textbf{Extended related work} \dotfill \ref{sec:extended_related}

\item \textbf{Occupancy shift and token-level trust regions} \dotfill \ref{app:prelims}
\item \textbf{Certificates: full proofs and order invariance} \dotfill \ref{app:certificates}
\item \textbf{Plug-and-play upgrades and Stage-0 alignment} \dotfill \ref{app:pluginalign}
\item \textbf{Group-based advantages and finite-sample concentration} \dotfill \ref{app:group}
\item \textbf{Additional theory and practical notes (tightness, implementation, diagnostics)} \dotfill \ref{app:extras}
\item \textbf{Additional Experiments } \dotfill \ref{app:exp}
\item \textbf{Benchmarks} \dotfill \ref{app:benchmarks}
\end{enumerate}
\newpage

\section{Extended Related Work}
\label{sec:extended_related}

\paragraph{Inference-time multi-agent LLM systems.}
Recent advancements have shifted the deployment of LLMs from monolithic predictors to collaborative multi-agent systems (MAS) that operate entirely at inference time. In this paradigm, frozen LLMs serve as cognitive controllers, interacting via natural language without parameter updates. Unlike traditional MARL, which relies on training task-specific policies from scratch \cite{rashid2020monotonic, yu2022surprising, xie2026dice}, these systems orchestrate collaboration solely through prompt engineering and context management. Pioneering works such as CAMEL \cite{li2023camel} and Generative Agents \cite{park2023generative} have demonstrated that, by using role-playing prompts and memory streams, agents can autonomously decompose tasks and simulate complex social behaviors using only their pre-trained knowledge. Beyond simulation, training-free collaborative strategies have been deployed to enhance reasoning. \cite{du2023improving} and \cite{liang2024encouraging} propose test-time debate and consensus-building frameworks, which reduce hallucinations by aggregating diverse perspectives from fixed models. To address complex tasks such as software engineering, systems like MetaGPT \cite{hong2023metagpt} and ChatDev \cite{qian2024chatdev} incorporate hard-coded Standard Operating Procedures (SOPs) into the context. By enforcing structured sequential handoffs via system prompts, these frameworks stabilize the output of heterogeneous teams without requiring gradient-based optimization. General-purpose libraries like AutoGen \cite{wu2024autogen} further formalize these inference-only interaction topologies. 
Despite these empirical successes, these deployments remain constrained by the capabilities of the base models and fixed interaction rules, thereby forfeiting opportunities for training-time optimization that could yield more intrinsic and stable collaboration.

\paragraph{Training multi-agent LLM systems and learning to coordinate.}

Beyond inference-time orchestration, recent research has pivoted towards updating model parameters to internalize coordination capabilities. Early efforts largely used Supervised Fine-Tuning (SFT) to specialize agents for distinct roles or to enhance general agentic capabilities. For instance, large-scale interaction trajectories are constructed to fine-tune Llama-based models, which significantly outperform prompting baselines on complex planning and tool-use tasks \cite{ zeng2024agenttuning, yi2025debate}. Other works focus on distilling specific functional roles; for example, Shepherd \cite{wang2023shepherd} and Gorilla \cite{patil2024gorilla} demonstrate that fine-tuning dedicated modules can establish effective heterogeneous teams via static supervision. Moving toward reinforcement learning and preference optimization, researchers aim to teach agents to coordinate dynamically. Frameworks such as LLM-Blender \cite{jiang2023llm} and RLAIF \cite{lee2023rlaif} employ ranking mechanisms and AI-generated feedback to fuse outputs or refine behaviors, thereby aligning agents with collective goals.
Furthermore, \cite{liu2023training} explores training socially aligned agents within simulated societies using reinforcement signals derived from social interactions. However, these training paradigms pose significant challenges due to non-stationarity. Sequential updates in a shared context often lead to pathological behaviors such as sycophancy—where agents agree with users or teammates regardless of correctness to maximize rewards \cite{wei2023simple, perez2023discovering, sharma2023towards}. Existing methods typically rely on naive joint training or independent fine-tuning, lacking a unified framework to explicitly control these distribution shifts and guarantee monotonic improvement in collaboration \cite{zhang2026beyond}.

\paragraph{Trust regions and block-coordinate optimization in multi-agent RL.}

Trust-region methods establish the theoretical foundation for stable reinforcement learning by guaranteeing monotonic policy improvement. Seminal works like TRPO \cite{schulman2015trust} and PPO \cite{schulman2017proximal} rely on the monotonic improvement bounds originally derived by Kakade and Langford \cite{kakade2002approximately}, utilizing KL-divergence constraints to prevent destructive parameter updates. In the multi-agent domain, extending these guarantees is non-trivial due to the non-stationarity introduced by simultaneous updates. While methods like MAPPO \cite{yu2022surprising}, ACORN \cite{xie2025acorn} empirically demonstrate the effectiveness of PPO in cooperative games, they often lack strict convergence guarantees in heterogeneous settings. To address this, recent theoretical advancements have formulated multi-agent learning as a block-coordinate ascent problem. Most notably, HATRPO and HAPPO \cite{kuba2021trust} derive a multi-agent advantage decomposition lemma, proving that sequential updates of agents—rather than simultaneous joint optimization—are necessary to preserve the monotonic improvement property in heterogeneous teams. This sequential paradigm has been successfully applied to continuous control benchmarks like StarCraft II \cite{rashid2020monotonic, kuba2021trust}. However, these algorithms operate on low-dimensional state spaces and have not yet been adapted to the high-dimensional, discrete, and autoregressive nature of Large Language Models, where ``trust regions'' must be defined over token-level probability distributions within an evolving textual context.

\paragraph{Distribution shift under sequential updates and modular system evolution.}

Training collaborative systems introduces non-stationarity, where sequential updates create a moving target for optimization. While classical MARL addressed this with importance sampling \cite{foerster2017stabilising, espeholt2018impala,xie2025heuristics, liu2025improving, liu2023improving}, recent analyses in RLHF \cite{casper2023open} reveal that such distribution shifts in LLMs often lead to mode collapse and reward hacking. Controlling this shift is crucial not only for stability but also for enabling modular system evolution—combining or upgrading capabilities without retraining. In the era of LLMs, modularity has evolved from simple adapters to model merging techniques. Seminal works like Task Arithmetic \cite{ilharco2022editing} and Model Soups \cite{wortsman2022model} demonstrate that agent capabilities can be composed by linearly manipulating weight vectors.
Furthermore, LoRAHub \cite{huang2023lorahub} extends this to the dynamic composition of parameter-efficient modules. However, these static merging techniques assume independent training distributions and often require heuristic tuning of merging coefficients. TeamTR provides a complementary dynamic framework: it leverages trust regions to certify that the distribution shift induced by updating or swapping one module (agent) remains within a safe bound, effectively enabling "certified model arithmetic" during active collaboration.

\newpage
\section*{Correspondence Between Main Text and Appendix}
\begin{table*}[h!]
\centering
\small
\setlength{\tabcolsep}{4pt}
\caption{Main theoretical results and their detailed proofs.}
\begin{tabular}{llp{0.68\textwidth}}
\toprule
\textbf{Main Text} & \textbf{Appendix} & \textbf{Description} \\
\midrule
Lemma~\ref{lem:occ} & \ref{app:occshift} & Occupancy shift bounds (expected divergence form; reverse-KL compatible) \\
Proposition~\ref{rem:quad2lin} & \ref{app:staleocc} & Stale-occupancy surrogate mismatch and quadratic-in-$n$ compounding term \\
Theorem~\ref{thm:single} & \ref{app:single} & Single-step improvement lower bound under sequential trust-region update \\
Lemma~\ref{lem:klstep} & \ref{app:joint} & KL identity for single-factor update; token-KL reduction (reverse KL) \\
Theorem~\ref{thm:stage} & \ref{app:joint} & Joint-stage improvement lower bound (order-robust form) \\
Theorem~\ref{thm:stage_stale} & \ref{app:stage-stale} & Stage-wise improvement lower bound under stale-occupancy surrogates (explicit double-sum penalty) \\
Proposition~\ref{prop:pnp} & \ref{app:plugin} & Certified resumability after replacement (post-alignment) \\
Proposition~\ref{prop:tight} & \ref{app:plugin} & Certificate tightening \\
\midrule
\multicolumn{3}{l}{\textbf{Supporting / Implementation Material}} \\
\midrule
Token-level KL trust region & \ref{app:tokenkl} & Reverse token-level KL functional and exact autoregressive decomposition (rollout-estimable) \\
Stage-0 alignment & \ref{app:align} & Reverse-KL projection / distillation objective for heterogeneous upgrades \\
Group-based advantages & \ref{app:groupadv} & Group-standardized, hard-clipped sequence-level advantages and bias proxy \\
Concentration bounds & \ref{app:groupconc} & Finite-sample concentration for group-based surrogates, and ratio-clipping bias control \\
Implementation notes & \ref{app:impl} & Token-KL token-sum, std normalization, hard clip, ratio clip \\
Diagnostics & \ref{app:diagnostics} & Certificate tightness plots and training monitors \\
Information-theoretic bounds & \ref{app:infoupper}, \ref{app:info-lower} & Oracle upper envelope and budget-aware lower envelope (auxiliary) \\
Smoothness \& optimization & \ref{app:pgd} & Smoothness and projected-gradient ascent convergence (auxiliary) \\
\bottomrule
\end{tabular}
\end{table*}

\subsection*{Notation}

We work with a discounted MDP $\mathcal{M}=(\mathcal{S},\mathcal{A},P,r,\gamma)$ with $\gamma\in(0,1)$ and bounded rewards $|r|\le R_{\max}$.
The analysis also applies to finite-horizon sequence generation by setting an effective horizon $H\approx (1-\gamma)^{-1}$.

\begin{description}[leftmargin=18mm,style=nextline]
\item[$\pi$] joint (factorized) team policy. At each state $s$, $\pi(\cdot\mid s)=\prod_{k=1}^n \pi^{(k)}(\cdot\mid s)$.
\item[$\mu$] initial-state distribution at $t=0$ (used in the discounted visitation fixed-point equation).
\item[$d^\pi$] discounted visitation distribution: $d^\pi(s)=(1-\gamma)\sum_{t\ge0}\gamma^t \Pr_\pi(s_t=s)$.
\item[$J(\pi)$] discounted return: $J(\pi)=\mathbb{E}_{\tau\sim\pi}\big[\sum_{t\ge0}\gamma^t r_t\big]$.
\item[$Q^\pi,V^\pi,A^\pi$] action-value, value, and advantage under $\pi$.
\item[$\sigma$] update order (a permutation of $\{1,\ldots,n\}$) within a stage.
\item[$\hat\pi^i$] intermediate joint policy after $i$ factor updates in a stage:
$\hat\pi^{0}=\pi_{\mathrm{cur}}$ and $\hat\pi^{i}=\hat\pi^{\,i-1}[\sigma(i)\leftarrow \pi_{\mathrm{tar}}^{\sigma(i)}]$; $\bar\pi=\hat\pi^{n}$.
\item[$\TV^{\max}(\pi'\Vert\pi)$] $\sup_{s\in\mathcal{S}}\TV(\pi'(\cdot\mid s),\pi(\cdot\mid s))$.
\item[$\KL_{\mathrm{tok}}^{\rho}(\pi\Vert\pi')$] token-level (expected statewise) \emph{reverse} KL with reference policy $\rho$ (Definition~\ref{def:kltok}).
\item[$\delta_i$] trust-region radius for step $i$:
$\KL_{\mathrm{tok}}^{{\hat\pi^{\,i-1}}}(\pi_{\mathrm{cur}}^{\sigma(i)}\Vert\pi_{\mathrm{tar}}^{\sigma(i)})\le \delta_i$ (Eq.~\eqref{eq:tr} in the main text).
\item[$\widehat{A}_{\textsc{grp}}^{i-1}$] group-standardized, hard-clipped sequence-level advantage (Appendix~\ref{app:groupadv}).
\item[$\zeta_i$] surrogate-estimation error term at step $i$: any quantity satisfying
\[
\zeta_i \ \ge\
\left|
\mathbb{E}_{s\sim d^{{\hat\pi^{\,i-1}}},\,\mathbf{a}\sim \hat\pi^{\,i}(\cdot\mid s)}
\Big[
\mathbb{E}\big[\widehat{A}_{i-1}(s,\mathbf{a})\big]-A^{{\hat\pi^{\,i-1}}}(s,\mathbf{a})
\Big]
\right|.
\]
(A conservative sufficient choice is $\zeta_i=\sup_{s,\mathbf{a}}|\mathbb{E}[\widehat{A}_{i-1}(s,\mathbf{a})]-A^{{\hat\pi^{\,i-1}}}(s,\mathbf{a})|$.)
\item[$A_{\max}$] advantage bound: $|A^\pi(s,\mathbf{a})|\le A_{\max}$ with $A_{\max}\le 2R_{\max}/(1-\gamma)$.
\item[$A_{\mathrm{clip}}$] clipping threshold ensuring $|\widehat{A}_{\textsc{grp}}^{i-1}|\le A_{\mathrm{clip}}$ almost surely.
\item[$N_i$] number of independent prompt-groups used for step $i$ (each group contains $G$ rollouts).
\item[$G$] number of rollouts sampled per prompt-group.
\item[$w_i(s,\mathbf{a})$] step-$i$ per-decision probability ratio used for importance weighting:
$w_i(s,\mathbf{a})=\hat\pi^{\,i}(\mathbf{a}\mid s)/\hat\pi^{\,i-1}(\mathbf{a}\mid s)$ (under the turn-taking protocol, this depends only on the updated factor).
\item[$w_i^{\mathrm{clip}}$] ratio-clipped weight $w_i^{\mathrm{clip}}=\mathrm{clip}(w_i,1-\epsilon_w,1+\epsilon_w)$ (Appendix~\ref{app:groupconc},~\ref{app:impl}).
\end{description}

\section{Occupancy Shift and Token-Level Trust Regions}
\label{app:prelims}

\subsection{Proof of Lemma~\ref{lem:activefactor}}
\label{app:activefactor}

\begin{proof}
Fix a state $s$ and let $j=j(s)$ be the active agent.
Under the turn-taking protocol, for all $k\neq j$ and for both team policies $\pi,\pi'$, the inactive factor is deterministic:
$\pi^{(k)}(a_k\mid s)=\pi'^{(k)}(a_k\mid s)=\mathbbm{1}\{a_k=\text{noop}\}$.

Therefore, the joint action distributions at state $s$ factorize as
\[
\pi(\mathbf{a}\mid s)=\pi^{(j)}(a_j\mid s)\prod_{k\neq j}\mathbbm{1}\{a_k=\text{noop}\},
\qquad
\pi'(\mathbf{a}\mid s)=\pi'^{(j)}(a_j\mid s)\prod_{k\neq j}\mathbbm{1}\{a_k=\text{noop}\}.
\]
The support of both distributions is restricted to $\{a_k=\text{noop}\ \forall k\neq j\}$, and on this set the ratio simplifies:
\[
\log \frac{\pi'(\mathbf{a}\mid s)}{\pi(\mathbf{a}\mid s)}
=
\log \frac{\pi'^{(j)}(a_j\mid s)}{\pi^{(j)}(a_j\mid s)}.
\]
Thus,
\[
\KL\!\big(\pi'(\cdot\mid s)\Vert \pi(\cdot\mid s)\big)
=
\sum_{\mathbf{a}} \pi'(\mathbf{a}\mid s)\log \frac{\pi'(\mathbf{a}\mid s)}{\pi(\mathbf{a}\mid s)}
=
\sum_{a_j}\pi'^{(j)}(a_j\mid s)\log\frac{\pi'^{(j)}(a_j\mid s)}{\pi^{(j)}(a_j\mid s)}
=
\KL\!\big(\pi'^{(j)}(\cdot\mid s)\Vert \pi^{(j)}(\cdot\mid s)\big).
\]
The expectation reduction for functions that depend only on the active action follows from the same support argument.
\end{proof}

\subsection{Token-Level KL Trust Regions and Autoregressive Chain Rule}
\label{app:tokenkl}

This section formalizes the bridge between the implementation choice---\emph{token-level reverse KL}---and the theoretical trust-region radius used in our certificates.
The key point is that when each macro-action is an autoregressive message, the \emph{reverse} message-level KL equals
the expected \emph{sum} of token-level reverse KLs (chain rule), with the expectation taken under the reference/current policy.
This matches on-policy rollouts under the intermediate team and avoids a token-occupancy mismatch.

\begin{definition}[Discounted token-level reverse-KL functional with a reference policy]
\label{def:kltok}
For a reference policy $\rho$ and two policies $\pi,\pi'$, define
\[
\KL_{\mathrm{tok}}^{\rho}(\pi\Vert\pi')
\coloneqq
\mathbb{E}_{\tau\sim \rho}\left[(1-\gamma)\sum_{t\ge0}\gamma^t\,\KL\!\big(\pi(\cdot\mid s_t)\Vert \pi'(\cdot\mid s_t)\big)\right].
\]
\end{definition}

\begin{lemma}[Token-level KL equals expected statewise KL under $d^\rho$]
\label{lem:kltokequals}
For any $\rho,\pi,\pi'$,
\[
\KL_{\mathrm{tok}}^{\rho}(\pi\Vert\pi')=\mathbb{E}_{s\sim d^\rho}\KL\!\big(\pi(\cdot\mid s)\Vert \pi'(\cdot\mid s)\big).
\]
\end{lemma}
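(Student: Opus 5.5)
The plan is to expand the definition of $d^\rho$ and swap the expectation with the discounted sum. By Definition~\ref{def:kltok},
\[
\KL_{\mathrm{tok}}^{\rho}(\pi\Vert\pi')=\mathbb{E}_{\tau\sim\rho}\Big[(1-\gamma)\sum_{t\ge0}\gamma^t g(s_t)\Big],
\qquad g(s)\coloneqq \KL\!\big(\pi(\cdot\mid s)\Vert\pi'(\cdot\mid s)\big)\in[0,\infty].
\]
Since $g\ge 0$, Tonelli's theorem lets us exchange expectation and sum with no integrability assumption. This gives $(1-\gamma)\sum_{t\ge0}\gamma^t\,\mathbb{E}_{\tau\sim\rho}[g(s_t)]$, and we can write $\mathbb{E}_{\tau\sim\rho}[g(s_t)]=\sum_s \Pr_\rho(s_t=s)\,g(s)$, or the analogous integral against the time-$t$ marginal law in the general measurable case.

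The second step is a second application of Tonelli to exchange $\sum_t$ and $\sum_s$. The result is $\sum_s\big[(1-\gamma)\sum_t\gamma^t\Pr_\rho(s_t=s)\big]g(s)$. The bracketed term is exactly $d^\rho(s)$ as defined in the Notation, so the expression equals $\mathbb{E}_{s\sim d^\rho}[g(s)]$, which is the claim.

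The only point needing care is making sure the interchanges are legitimate when $g$ could be unbounded or infinite. Nonnegativity of KL handles this, and the identity then holds in $[0,\infty]$, with both sides infinite simultaneously. If $\mathcal S$ is uncountable, the same argument works with $d^\rho$ read as the measure $(1-\gamma)\sum_t\gamma^t\,\mathcal{L}_\rho(s_t)$, where $\mathcal{L}_\rho(s_t)$ is the law of $s_t$ under $\rho$. This measure is a probability measure because $(1-\gamma)\sum_t\gamma^t=1$. The chain rule in Eq.~\eqref{eq:token_chain_rule} is not needed here; it only reinterprets $g(s)$ at the token level.
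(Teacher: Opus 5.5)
Your proposal is correct and follows essentially the same route as the paper. Like the paper, you apply Tonelli's theorem to the nonnegative statewise KL to exchange the trajectory expectation with the discounted sum, then identify $(1-\gamma)\sum_t\gamma^t\Pr_\rho(s_t\in\cdot)$ with $d^\rho$, so the identity holds in $[0,\infty]$. The only difference is that you run the chain of equalities from $\KL_{\mathrm{tok}}^{\rho}$ toward $\mathbb{E}_{s\sim d^\rho}$, while the paper first states the occupancy-measure identity for general $g\ge 0$ and then specializes it.
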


\begin{proof}
Recall that the discounted state-visitation distribution induced by $\rho$ is the
(discounted) occupancy measure
\[
d^\rho(\mathrm{d}s)\;\coloneqq\; (1-\gamma)\sum_{t=0}^\infty \gamma^t \, \Pr_{\tau\sim\rho}(s_t\in \mathrm{d}s),
\]
i.e., for any measurable set $B\subseteq \mathcal S$,
\[
d^\rho(B) = (1-\gamma)\sum_{t=0}^\infty \gamma^t \Pr_{\tau\sim\rho}(s_t\in B).
\]
Equivalently, for any measurable function $g:\mathcal S\to[0,\infty]$,
\begin{align}
\mathbb E_{s\sim d^\rho}[g(s)]
&= \int_{\mathcal S} g(s)\, d^\rho(\mathrm{d}s) \nonumber\\
&= (1-\gamma)\sum_{t=0}^\infty \gamma^t \int_{\mathcal S} g(s)\,\Pr_{\tau\sim\rho}(s_t\in \mathrm{d}s) \nonumber\\
&= (1-\gamma)\sum_{t=0}^\infty \gamma^t \, \mathbb E_{\tau\sim\rho}\big[g(s_t)\big].
\label{eq:occ_measure_identity}
\end{align}
(The interchange of the sum and the integral/expectation is justified by Tonelli's theorem
since $g\ge 0$; if one works in finite spaces, this follows from elementary manipulations.)

Now apply \eqref{eq:occ_measure_identity} to
\[
g(s)\;=\;\KL\!\big(\pi(\cdot\mid s)\Vert \pi'(\cdot\mid s)\big)\in[0,\infty],
\]
(where the value may be $+\infty$ if $\pi(\cdot\mid s)$ is not absolutely continuous w.r.t.\
$\pi'(\cdot\mid s)$; the identity still holds in the extended reals).
Then
\begin{align*}
\mathbb E_{s\sim d^\rho}\KL\!\big(\pi(\cdot\mid s)\Vert \pi'(\cdot\mid s)\big)
&= (1-\gamma)\sum_{t=0}^\infty \gamma^t\,
\mathbb E_{\tau\sim\rho}\!\left[\KL\!\big(\pi(\cdot\mid s_t)\Vert \pi'(\cdot\mid s_t)\big)\right] \\
&= \mathbb E_{\tau\sim\rho}\!\left[(1-\gamma)\sum_{t=0}^\infty \gamma^t\,
\KL\!\big(\pi(\cdot\mid s_t)\Vert \pi'(\cdot\mid s_t)\big)\right] \\
&= \KL_{\mathrm{tok}}^{\rho}(\pi\Vert\pi'),
\end{align*}
where the second equality again uses Tonelli/linearity (the summand is nonnegative).
This proves the claim.
\end{proof}

\paragraph{Autoregressive messages.}
Fix an agent $j$ and a state $s$. Let a macro-action (message) be a variable-length token sequence
$m=(x_1,\ldots,x_T)$ terminated by an EOS token, with conditional distributions
\[
\pi^{(j)}(m\mid s) \;=\; \prod_{u=1}^{T(m)} \pi^{(j)}(x_u \mid s, x_{<u}).
\]
Define $\pi'^{(j)}$ similarly.
\begin{lemma}[Chain rule (reverse): message KL equals expected sum of token KLs]
\label{lem:chainrule}
For any $s$,
\[  
\KL\!\big(\pi^{(j)}(\cdot\mid s)\Vert \pi'^{(j)}(\cdot\mid s)\big)
=
\mathbb{E}_{m\sim \pi^{(j)}(\cdot\mid s)}
\left[\sum_{u=1}^{T(m)} \KL\!\big(\pi^{(j)}(\cdot \mid s, x_{<u})\Vert \pi'^{(j)}
(\cdot \mid s, x_{<u})\big)\right].
\]
\end{lemma}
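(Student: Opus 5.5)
The plan is to derive the identity from the exact product factorization of the message likelihood together with linearity of expectation, handling the variable length $T(m)$ by padding with absorbing EOS states. Write $p=\pi^{(j)}(\cdot\mid s)$ and $q=\pi'^{(j)}(\cdot\mid s)$. Since both are products of token conditionals, for every message $m=(x_1,\dots,x_T)$ in the support of $p$ we have
\[
\log\frac{p(m)}{q(m)}=\sum_{u=1}^{T(m)}\log\frac{\pi^{(j)}(x_u\mid s,x_{<u})}{\pi'^{(j)}(x_u\mid s,x_{<u})}.
\]
Taking the expectation under $m\sim p$ gives the left-hand side as $\mathbb{E}_{m\sim p}\big[\sum_{u\le T(m)}\ell_u\big]$, where $\ell_u$ denotes the $u$-th log-ratio.

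The next step is to replace each sampled log-ratio by its conditional expectation given the prefix. I will rewrite the random-length sum as $\sum_{u\ge1}\mathbbm{1}\{T(m)\ge u\}\,\ell_u$. The event $\{T\ge u\}$, meaning no EOS among $x_{<u}$, is measurable with respect to the prefix $x_{<u}$. Conditioning on $x_{<u}$ under $p$, the token $x_u$ is drawn from $\pi^{(j)}(\cdot\mid s,x_{<u})$, so
\[
\mathbb{E}\big[\ell_u\mid x_{<u}\big]=\KL\big(\pi^{(j)}(\cdot\mid s,x_{<u})\Vert\pi'^{(j)}(\cdot\mid s,x_{<u})\big).
\]
Applying the tower property term by term and then re-summing yields the right-hand side.

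The main obstacle is justifying the exchange of the infinite sum over $u$ with the expectation, since the log-ratios $\ell_u$ are signed. I would first handle the case where $T$ is bounded by a maximum length $T_{\max}$; this is the practical setting, with a 32,768-token cap. In that case the sum is finite and linearity suffices.

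For unbounded $T$, the approach is to note that the conditional token KLs are nonnegative, so the right-hand side is a well-defined sum of nonnegative terms by Tonelli. I then apply the bounded-length argument to the truncation at $u\le N$. The truncated identity says that the KL between the laws of the length-$N$ prefix (padded after EOS) equals the truncated right-hand side. As $N\to\infty$, this prefix KL increases monotonically to the full message KL, by the standard monotone convergence of KL along a filtration generating $m$. Both sides therefore converge to the same limit in $[0,\infty]$.

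The absolute-continuity edge case needs only a brief treatment. If some conditional has $\pi^{(j)}(x\mid\cdot)>0=\pi'^{(j)}(x\mid\cdot)$ on a reachable prefix, then both sides equal $+\infty$. The identity therefore holds in the extended reals, consistent with Lemma~\ref{lem:kltokequals}.
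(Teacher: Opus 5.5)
Your proposal is correct and follows the same route as the paper's proof: factor the message log-likelihood ratio into token log-ratios, take the expectation under $\pi^{(j)}(\cdot\mid s)$, and use the tower property to turn each term into a conditional token KL. Your extra steps make rigorous what the paper's three-line argument leaves implicit. These are the indicator rewriting $\mathbbm{1}\{T(m)\ge u\}$ with its prefix-measurability, the truncation plus monotone-convergence argument for unbounded lengths, and the $+\infty$ absolute-continuity case.
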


\begin{proof}
Write
\[
\log\frac{\pi^{(j)}(m\mid s)}{\pi'^{(j)}(m\mid s)}
=
\sum_{u=1}^{T(m)} \log\frac{\pi^{(j)}(x_u\mid s,x_{<u})}{\pi'^{(j)}(x_u\mid s,x_{<u})}.
\]
Taking expectation over $m\sim \pi^{(j)}(\cdot\mid s)$ gives the KL on the left.
Then apply the tower property to recognize each term as a conditional KL.
\end{proof}

\paragraph{Practical estimator.}
Lemma~\ref{lem:chainrule} shows that summing per-token reverse KL along tokens \emph{sampled from the reference/current policy} yields an unbiased estimate
of the reverse message-level KL at that state. This aligns with TeamTR's per-step trust-region monitoring during rollouts from $\hat\pi^{\,i-1}$.

\begin{lemma}[Unweighted token-average vs.\ token-sum (explicit constants)]
\label{lem:avgvssum}
Assume message lengths are bounded: $1\le T(m)\le T_{\max}$ almost surely under $\pi^{(j)}(\cdot\mid s)$.
Let $\kappa_u(m,s)\ge 0$ denote the token KL at position $u$ as in Lemma~\ref{lem:chainrule}.
Define the token-sum and token-average random variables
\[
S(m,s) \coloneqq \sum_{u=1}^{T(m)} \kappa_u(m,s),
\qquad
A(m,s) \coloneqq \frac{1}{T(m)}\sum_{u=1}^{T(m)} \kappa_u(m,s).
\]
Then $S(m,s)=T(m)\,A(m,s)$ and hence
\[
\mathbb{E}[S(m,s)] \le T_{\max}\,\mathbb{E}[A(m,s)],
\qquad
\mathbb{E}[S(m,s)] \ge \mathbb{E}[A(m,s)].
\]
Consequently, enforcing an average-token cap $\mathbb{E}[A(m,s)]\le \bar\delta$ implies a message-KL cap
$\KL(\pi^{(j)}(\cdot\mid s)\Vert \pi'^{(j)}(\cdot\mid s))\le T_{\max}\bar\delta$.
\end{lemma}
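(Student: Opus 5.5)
The claim is elementary and I will prove it pointwise in $m$ and then take expectations. By Lemma~\ref{lem:chainrule}, the message-level reverse KL at $s$ equals $\mathbb{E}_{m\sim\pi^{(j)}(\cdot\mid s)}[S(m,s)]$, so everything reduces to comparing $S$ and $A$ as random variables on the same probability space.

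\textbf{Step 1 (identity).} By the definitions, $A(m,s)=T(m)^{-1}\sum_{u\le T(m)}\kappa_u(m,s)$. Multiplying by $T(m)$, which is finite and at least $1$, gives $S(m,s)=T(m)A(m,s)$ for every $m$.

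\textbf{Step 2 (pointwise sandwich).} Since $\kappa_u\ge 0$ (each is a KL divergence), we have $A(m,s)\ge 0$. Combined with the almost-sure bounds $1\le T(m)\le T_{\max}$, this gives
\[
A(m,s)\le T(m)A(m,s)=S(m,s)\le T_{\max}A(m,s)
\]
almost surely.

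\textbf{Step 3 (expectations).} Taking expectations under $m\sim\pi^{(j)}(\cdot\mid s)$ and using monotonicity of the integral for nonnegative, possibly extended-real-valued, variables yields both inequalities $\mathbb{E}[A]\le\mathbb{E}[S]\le T_{\max}\mathbb{E}[A]$. Nonnegativity is what makes these expectations well defined.

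\textbf{Step 4 (consequence).} Lemma~\ref{lem:chainrule} identifies $\mathbb{E}[S(m,s)]$ with $\KL(\pi^{(j)}(\cdot\mid s)\Vert\pi'^{(j)}(\cdot\mid s))$. So an average-token cap $\mathbb{E}[A]\le\bar\delta$ gives a message-KL cap of $T_{\max}\bar\delta$. Integrating this over $s\sim d^{\hat\pi^{\,i-1}}$ shows that such a cap implies the trust region Eq.~\eqref{eq:tr} with $\delta_i=T_{\max}\bar\delta$.

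There is no real obstacle here. The only point needing care is that $\kappa_u$ may be $+\infty$ when absolute continuity fails, so the identity and both inequalities must be read in $[0,\infty]$, which monotonicity handles without change.
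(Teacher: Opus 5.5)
Your proof is correct and follows essentially the same route as the paper. Both arguments use the pointwise sandwich $A \le S = T(m)A \le T_{\max}A$, which follows from nonnegativity of the token KLs and $1\le T(m)\le T_{\max}$, then monotonicity of expectation in $[0,\infty]$, and finally Lemma~\ref{lem:chainrule} to identify $\mathbb{E}[S]$ with the message-level KL; your closing remark that integrating over $s\sim d^{\hat\pi^{\,i-1}}$ yields Eq.~\eqref{eq:tr} with $\delta_i=T_{\max}\bar\delta$ is a valid extra step that the paper does not spell out.
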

\begin{proof}
Fix a state $s$ and draw a message $m \sim \pi^{(j)}(\cdot\mid s)$.
By assumption, the random length $T(m)$ satisfies $1\le T(m)\le T_{\max}$ almost surely.
Moreover, $\kappa_u(m,s)\ge 0$ for every $u$, hence
\[
S(m,s)=\sum_{u=1}^{T(m)}\kappa_u(m,s)\ge 0,
\qquad
A(m,s)=\frac{1}{T(m)}\sum_{u=1}^{T(m)}\kappa_u(m,s)\ge 0.
\]

By definition, for every realized $(m,s)$ with $T(m)\ge 1$,
\[
S(m,s)=T(m)\,A(m,s).
\]
Using the bounds on $T(m)$ and nonnegativity of $A(m,s)$, we obtain the following
\emph{pointwise} inequalities:
\[
A(m,s)\;\le\; S(m,s) \;=\; T(m)\,A(m,s) \;\le\; T_{\max}A(m,s).
\]
Taking expectation with respect to $m\sim \pi^{(j)}(\cdot\mid s)$ (equivalently, with respect
to the joint token-generation randomness underlying $m$) and using monotonicity/linearity of expectation yields
\[
\mathbb{E}[S(m,s)] \le T_{\max}\,\mathbb{E}[A(m,s)],
\qquad
\mathbb{E}[S(m,s)] \ge \mathbb{E}[A(m,s)].
\]
(These inequalities remain valid in the extended reals even if some KL terms are $+\infty$.)

Finally, by Lemma~\ref{lem:chainrule} (the chain rule decomposition of message-level KL into token-level KLs),
\[
\KL\!\big(\pi^{(j)}(\cdot\mid s)\Vert \pi'^{(j)}(\cdot\mid s)\big)
=
\mathbb{E}_{m\sim \pi^{(j)}(\cdot\mid s)}\!\left[\sum_{u=1}^{T(m)}\kappa_u(m,s)\right]
= \mathbb{E}[S(m,s)].
\]
Therefore, enforcing an average-token cap $\mathbb E[A(m,s)]\le \bar\delta$ implies
\[
\KL\!\big(\pi^{(j)}(\cdot\mid s)\Vert \pi'^{(j)}(\cdot\mid s)\big)
= \mathbb E[S(m,s)]
\le T_{\max}\,\mathbb E[A(m,s)]
\le T_{\max}\bar\delta,
\]
as claimed.
\end{proof}

\begin{lemma}[From expected (token) KL to an occupancy shift penalty]
\label{lem:expectedklpenalty}
Let $\bar\delta=\KL_{\mathrm{tok}}^{\pi}(\pi\Vert\pi')$ with reference $\pi$. For any bounded $f$,
\[
\big|\mathbb{E}_{d^{\pi'}}[f]-\mathbb{E}_{d^{\pi}}[f]\big|
\le
\frac{2\gamma}{1-\gamma}\sqrt{\frac{\bar\delta}{2}}\|f\|_\infty.
\]
\end{lemma}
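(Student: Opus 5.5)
The plan is to reduce the occupancy difference to a per-state total-variation gap between the two policies, integrate that gap under $d^{\pi}$, and then apply Pinsker's inequality together with Jensen's inequality to pass from total variation to the expected token-level KL $\bar\delta$.

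\textbf{Step 1: linear fixed-point equations.} Let $P_\pi(s'\mid s)=\sum_{\mathbf{a}}\pi(\mathbf{a}\mid s)P(s'\mid s,\mathbf{a})$, and regard $d$ as a row vector. The discounted occupancy satisfies
\[
d^{\pi}=(1-\gamma)\mu+\gamma\, d^{\pi}P_\pi ,
\]
and the same holds for $\pi'$. Subtracting the two equations gives
\[
d^{\pi'}-d^{\pi}=\gamma\,(d^{\pi'}-d^{\pi})P_{\pi'}+\gamma\, d^{\pi}(P_{\pi'}-P_\pi).
\]
Solving for the difference yields
\[
d^{\pi'}-d^{\pi}=\gamma\, d^{\pi}(P_{\pi'}-P_\pi)(I-\gamma P_{\pi'})^{-1}.
\]
For bounded $f$, the vector $g=(I-\gamma P_{\pi'})^{-1}f$ satisfies $\|g\|_\infty\le \|f\|_\infty/(1-\gamma)$.

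\textbf{Step 2: reduce to per-state total variation.} For each fixed $s$, the quantity $\big((P_{\pi'}-P_\pi)g\big)(s)$ is the difference of expectations of the function $\mathbf{a}\mapsto \sum_{s'}P(s'\mid s,\mathbf{a})g(s')$ under $\pi'(\cdot\mid s)$ and $\pi(\cdot\mid s)$. This function is bounded by $\|g\|_\infty$, so
\[
\big|\big((P_{\pi'}-P_\pi)g\big)(s)\big|\le 2\,\TV\big(\pi'(\cdot\mid s),\pi(\cdot\mid s)\big)\,\|g\|_\infty .
\]
The factor $2$ comes from the convention $\TV=\tfrac12\ell_1$. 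Integrating this against $d^{\pi}$ gives
\[
\big|\mathbb{E}_{d^{\pi'}}f-\mathbb{E}_{d^{\pi}}f\big|\le \frac{2\gamma}{1-\gamma}\,\mathbb{E}_{s\sim d^\pi}\TV\big(\pi(\cdot\mid s),\pi'(\cdot\mid s)\big)\,\|f\|_\infty .
\]
The outer expectation is under $d^\pi$, the reference policy, which is the point of the setup: it matches the reference distribution in $\KL_{\mathrm{tok}}^{\pi}$.

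\textbf{Step 3: Pinsker and Jensen.} Pinsker's inequality gives, statewise,
\[
\TV\big(\pi(\cdot\mid s),\pi'(\cdot\mid s)\big)\le\sqrt{\tfrac12\KL\big(\pi(\cdot\mid s)\Vert\pi'(\cdot\mid s)\big)} .
\]
Here we use symmetry of TV, so the argument order of KL can be chosen to match the definition. By concavity of the square root (Jensen), $\mathbb{E}_{d^\pi}\sqrt{\cdot}\le\sqrt{\mathbb{E}_{d^\pi}[\cdot]}$. Lemma~\ref{lem:kltokequals} identifies $\mathbb{E}_{d^\pi}\KL$ with $\bar\delta$. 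Combining these gives the constant $\frac{2\gamma}{1-\gamma}\sqrt{\bar\delta/2}$, which equals the $\frac{\sqrt2\gamma}{1-\gamma}\sqrt{\bar\delta}$ in Lemma~\ref{lem:occ}.

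\textbf{Main obstacle.} The delicate step is Step 1: the divergence must be evaluated under $d^{\pi}$ and not $d^{\pi'}$. That requires the resolvent $(I-\gamma P_{\pi'})^{-1}$ to be placed on the right, acting on $f$, so that the policy-difference operator is integrated against $d^{\pi}$. Placing it on the left would integrate against $d^{\pi'}$ and would not match $\KL_{\mathrm{tok}}^{\pi}$. The remaining checks are routine: that the resolvent series converges in sup norm for general measurable spaces, and that the decomposition is valid when some KL values are infinite, in which case the bound is trivially true.
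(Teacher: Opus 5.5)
Your proposal is correct and follows essentially the same route as the paper. The paper's proof uses the fixed-point/resolvent identity with the policy gap integrated against $d^{\pi}$ (the content of Lemma~\ref{lem:occexp}), then pointwise Pinsker, Jensen, and Lemma~\ref{lem:kltokequals} to identify $\bar\delta$. The only difference is cosmetic: you apply $(I-\gamma P_{\pi'})^{-1}$ to $f$ and use its $\infty\to\infty$ norm, while the paper bounds $\|d^{\pi'}-d^{\pi}\|_1$ with the dual $1\to 1$ norm of $(I-\gamma P_{\pi'}^{\top})^{-1}$, which is the same estimate.
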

\begin{proof}
Let $f:\mathcal S\to\mathbb R$ be bounded, and denote $\|f\|_\infty=\sup_s |f(s)|$.
By Lemma~\ref{lem:occexp}, we have an occupancy-shift bound of the form
\begin{equation}
\big|\mathbb{E}_{d^{\pi'}}[f]-\mathbb{E}_{d^{\pi}}[f]\big|
\le
\frac{2\gamma}{1-\gamma}\,
\|f\|_\infty\;
\mathbb E_{s\sim d^\pi}\!\Big[\TV\!\big(\pi(\cdot\mid s),\pi'(\cdot\mid s)\big)\Big].
\label{eq:occexp_tv}
\end{equation}

Next, apply Pinsker's inequality pointwise in $s$:
\[
\TV\!\big(\pi(\cdot\mid s),\pi'(\cdot\mid s)\big)
\le
\sqrt{\frac{1}{2}\KL\!\big(\pi(\cdot\mid s)\Vert \pi'(\cdot\mid s)\big)}.
\]
Taking expectation over $s\sim d^\pi$ and using Jensen's inequality (since $x\mapsto \sqrt{x}$ is concave on $\mathbb R_+$) gives
\begin{align*}
\mathbb E_{s\sim d^\pi}\!\Big[\TV\!\big(\pi(\cdot\mid s),\pi'(\cdot\mid s)\big)\Big]
&\le
\mathbb E_{s\sim d^\pi}\!\left[\sqrt{\frac{1}{2}\KL\!\big(\pi(\cdot\mid s)\Vert \pi'(\cdot\mid s)\big)}\right]\\
&\le
\sqrt{\frac{1}{2}\mathbb E_{s\sim d^\pi}\KL\!\big(\pi(\cdot\mid s)\Vert \pi'(\cdot\mid s)\big)}.
\end{align*}

Finally, by Lemma~\ref{lem:kltokequals} with reference policy $\rho=\pi$,
\[
\mathbb E_{s\sim d^\pi}\KL\!\big(\pi(\cdot\mid s)\Vert \pi'(\cdot\mid s)\big)
=
\KL_{\mathrm{tok}}^{\pi}(\pi\Vert\pi')
=
\bar\delta.
\]
Plugging the above into \eqref{eq:occexp_tv} yields
\[
\big|\mathbb{E}_{d^{\pi'}}[f]-\mathbb{E}_{d^{\pi}}[f]\big|
\le
\frac{2\gamma}{1-\gamma}\sqrt{\frac{\bar\delta}{2}}\;\|f\|_\infty,
\]
which proves the lemma.
\end{proof}

\begin{lemma}[Token-level KL reduction under a single-factor update (reverse)]
\label{lem:kltokfactor}
At step $i$, $\hat\pi^{\,i}$ and $\hat\pi^{\,i-1}$ differ only in factor $\sigma(i)$.
With reference $\hat\pi^{\,i-1}$,
\[
\KL_{\mathrm{tok}}^{{\hat\pi^{\,i-1}}}\!\big(\hat\pi^{\,i-1}\Vert \hat\pi^{\,i}\big)
=
\KL_{\mathrm{tok}}^{{\hat\pi^{\,i-1}}}\!\Big(\pi_{\mathrm{cur}}^{\sigma(i)}\Vert \pi_{\mathrm{tar}}^{\sigma(i)}\Big).
\]
\end{lemma}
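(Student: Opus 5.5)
The identity holds state by state, so the plan is to prove it pointwise and then integrate.

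First, by Lemma~\ref{lem:kltokequals} with reference $\rho=\hat\pi^{\,i-1}$, both sides are expectations over the same occupancy $d^{\hat\pi^{\,i-1}}$. The left side is the expectation of the statewise team KL, $\KL\big(\hat\pi^{\,i-1}(\cdot\mid s)\Vert\hat\pi^{\,i}(\cdot\mid s)\big)$. The right side is the expectation of the factor KL at the updated index. So it suffices to show that the integrands agree for each $s$ in the support of $d^{\hat\pi^{\,i-1}}$. We interpret $\KL_{\mathrm{tok}}^{\hat\pi^{\,i-1}}(\pi_{\mathrm{cur}}^{\sigma(i)}\Vert\pi_{\mathrm{tar}}^{\sigma(i)})$ as the expectation over $d^{\hat\pi^{\,i-1}}$ of the factor-$\sigma(i)$ KL at states where that agent is active. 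Factor KLs at non-active states vanish, because both factors are the deterministic noop there.

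Fix $s$ and let $j=j(s)$ be the active agent from Lemma~\ref{lem:activefactor}. We apply that lemma with $\pi=\hat\pi^{\,i-1}$ and $\pi'=\hat\pi^{\,i}$. Its KL is written in the opposite order, but its proof is symmetric: the supports coincide and the log-ratio reduces to the active factor. We therefore obtain
\[
\KL\big(\hat\pi^{\,i-1}(\cdot\mid s)\Vert\hat\pi^{\,i}(\cdot\mid s)\big)=\KL\big(\hat\pi^{\,i-1,(j)}(\cdot\mid s)\Vert\hat\pi^{\,i,(j)}(\cdot\mid s)\big).
\]
We then split into two cases.
\begin{itemize}
\item If $j\neq\sigma(i)$, the two teams share factor $j$, since $\hat\pi^{\,i}=\hat\pi^{\,i-1}[\sigma(i)\leftarrow\pi_{\mathrm{tar}}^{\sigma(i)}]$. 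The KL is therefore $0$, which matches the factor-$\sigma(i)$ KL at this state (both are noop).
\item If $j=\sigma(i)$, the factors are exactly $\pi_{\mathrm{cur}}^{\sigma(i)}$ (the pre-update factor in $\hat\pi^{\,i-1}$, by the paper's notation) and $\pi_{\mathrm{tar}}^{\sigma(i)}$. The KL is then the factor KL.
\end{itemize}
Integrating this pointwise equality against $d^{\hat\pi^{\,i-1}}$ gives the claim. The integrands are nonnegative, possibly $+\infty$, so Tonelli and linearity apply in the extended reals.

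An alternative that avoids the turn-taking assumption is to write $\hat\pi^{\,i}(\mathbf a\mid s)/\hat\pi^{\,i-1}(\mathbf a\mid s)$ as a ratio of the $\sigma(i)$ factors alone, since all other factors cancel. Additivity of KL over product measures then gives the same identity directly. I would mention this as a remark.

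The main obstacle is bookkeeping rather than analysis. One point is the reversed argument order relative to Lemma~\ref{lem:activefactor}. The other is being precise that the right-hand side means the factor KL averaged over all states under $d^{\hat\pi^{\,i-1}}$, including inactive states where it is zero. Once the factor-level $\KL_{\mathrm{tok}}$ is defined this way, the rest is immediate.
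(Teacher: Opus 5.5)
Your argument is correct, but your main route differs from the paper's. The paper never uses turn-taking. It writes $\hat\pi^{\,i-1}(\cdot\mid s)$ and $\hat\pi^{\,i}(\cdot\mid s)$ as product measures that share every factor except $\sigma(i)$. If $\pi_{\mathrm{cur}}^{\sigma(i)}(\cdot\mid s)\not\ll\pi_{\mathrm{tar}}^{\sigma(i)}(\cdot\mid s)$, it notes that absolute continuity also fails for the team, so both KLs are $+\infty$. Otherwise it cancels the common factors in the Radon--Nikodym derivative, integrates them out, and only then averages over $d^{\hat\pi^{\,i-1}}$ via Lemma~\ref{lem:kltokequals}. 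That is exactly the ``alternative'' you relegate to a remark.

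Your primary route instead goes through Lemma~\ref{lem:activefactor} and a case split on whether $\sigma(i)$ is active at $s$. It is valid under the standing turn-taking assumption, which must hold for both $\hat\pi^{\,i-1}$ and $\hat\pi^{\,i}$ so that $\pi_{\mathrm{tar}}^{\sigma(i)}$ is also the deterministic noop wherever $\sigma(i)$ is inactive. Its merit is that it shows only states where $\sigma(i)$ acts contribute, which is what the practical monitor in Eq.~\eqref{eq:emp_tok_kl} sums over. The cost is that it proves a less general statement and needs convention-dependent bookkeeping about inactive factors. The product-cancellation argument holds for any factorized team, including simultaneous moves, and needs no such convention, so your remark would make the cleaner main proof.

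Separately, your concern about the argument order is unnecessary. Lemma~\ref{lem:activefactor} is stated for an arbitrary pair of team policies, so you can apply it directly with $\pi'=\hat\pi^{\,i-1}$ and $\pi=\hat\pi^{\,i}$.
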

\begin{proof}
Let $k \coloneqq \sigma(i)$. We view the (joint) action/message space as a product
$\mathcal M=\prod_{j=1}^J \mathcal M^{(j)}$, and write $m=(m^{k},m^{-k})$ for the $k$-th
component and the remaining components.
By assumption, $\hat\pi^{\,i}$ and $\hat\pi^{\,i-1}$ differ only in factor $k$, i.e.,
for every state $s$,
\begin{align*}
\hat\pi^{\,i-1}(\mathrm{d}m\mid s)
&=
\pi_{\mathrm{cur}}^{k}(\mathrm{d}m^{k}\mid s)\;\otimes\;
\Big(\bigotimes_{j\neq k}\pi^{(j)}(\mathrm{d}m^{(j)}\mid s)\Big),\\
\hat\pi^{\,i}(\mathrm{d}m\mid s)
&=
\pi_{\mathrm{tar}}^{k}(\mathrm{d}m^{k}\mid s)\;\otimes\;
\Big(\bigotimes_{j\neq k}\pi^{(j)}(\mathrm{d}m^{(j)}\mid s)\Big),
\end{align*}
where all factors with $j\neq k$ are identical.

Fix an arbitrary $s$.
If $\pi_{\mathrm{cur}}^{k}(\cdot\mid s)\not\ll \pi_{\mathrm{tar}}^{k}(\cdot\mid s)$, then
also $\hat\pi^{\,i-1}(\cdot\mid s)\not\ll \hat\pi^{\,i}(\cdot\mid s)$ (because the common
product factor on $m^{-k}$ cannot restore absolute continuity), hence both
$\KL(\hat\pi^{\,i-1}(\cdot\mid s)\Vert \hat\pi^{\,i}(\cdot\mid s))$ and
$\KL(\pi_{\mathrm{cur}}^{k}(\cdot\mid s)\Vert \pi_{\mathrm{tar}}^{k}(\cdot\mid s))$ equal $+\infty$,
and the desired equality holds.

Otherwise, $\pi_{\mathrm{cur}}^{k}(\cdot\mid s)\ll \pi_{\mathrm{tar}}^{k}(\cdot\mid s)$ and thus
$\hat\pi^{\,i-1}(\cdot\mid s)\ll \hat\pi^{\,i}(\cdot\mid s)$. By the Radon--Nikodym rule for product measures,
the likelihood ratio cancels all identical factors:
\[
\frac{\mathrm{d}\hat\pi^{\,i-1}(\cdot\mid s)}{\mathrm{d}\hat\pi^{\,i}(\cdot\mid s)}(m)
=
\frac{\mathrm{d}\pi_{\mathrm{cur}}^{k}(\cdot\mid s)}{\mathrm{d}\pi_{\mathrm{tar}}^{k}(\cdot\mid s)}(m^{k}).
\]
Therefore,
\begin{align*}
\KL\!\big(\hat\pi^{\,i-1}(\cdot\mid s)\Vert \hat\pi^{\,i}(\cdot\mid s)\big)
&=
\int_{\mathcal M}\log\!\left(
\frac{\mathrm{d}\hat\pi^{\,i-1}}{\mathrm{d}\hat\pi^{\,i}}(m)
\right)\hat\pi^{\,i-1}(\mathrm{d}m\mid s)\\
&=
\int_{\mathcal M}\log\!\left(
\frac{\mathrm{d}\pi_{\mathrm{cur}}^{k}}{\mathrm{d}\pi_{\mathrm{tar}}^{k}}(m^{k})
\right)\pi_{\mathrm{cur}}^{k}(\mathrm{d}m^{k}\mid s)
\Big(\bigotimes_{j\neq k}\pi^{(j)}(\mathrm{d}m^{(j)}\mid s)\Big)\\
&=
\int_{\mathcal M^{(k)}}\log\!\left(
\frac{\mathrm{d}\pi_{\mathrm{cur}}^{k}}{\mathrm{d}\pi_{\mathrm{tar}}^{k}}(m^{k})
\right)\pi_{\mathrm{cur}}^{k}(\mathrm{d}m^{k}\mid s)\\
&=
\KL\!\big(\pi_{\mathrm{cur}}^{k}(\cdot\mid s)\Vert \pi_{\mathrm{tar}}^{k}(\cdot\mid s)\big),
\end{align*}
where the third line integrates out the common factor on $m^{-k}$ (it is a probability measure,
so its total mass is $1$).  (In the discrete case, the same conclusion follows by expanding the sum
over $m=(m^{k},m^{-k})$ and observing direct cancellation.)

Applying Lemma~\ref{lem:kltokequals} with reference $\rho=\hat\pi^{\,i-1}$, we obtain
\begin{align*}
\KL_{\mathrm{tok}}^{{\hat\pi^{\,i-1}}}\!\big(\hat\pi^{\,i-1}\Vert \hat\pi^{\,i}\big)
&=
\mathbb{E}_{s\sim d^{{\hat\pi^{\,i-1}}}}
\KL\!\big(\hat\pi^{\,i-1}(\cdot\mid s)\Vert \hat\pi^{\,i}(\cdot\mid s)\big)\\
&=
\mathbb{E}_{s\sim d^{{\hat\pi^{\,i-1}}}}
\KL\!\big(\pi_{\mathrm{cur}}^{k}(\cdot\mid s)\Vert \pi_{\mathrm{tar}}^{k}(\cdot\mid s)\big)\\
&=
\KL_{\mathrm{tok}}^{{\hat\pi^{\,i-1}}}\!\Big(\pi_{\mathrm{cur}}^{k}\Vert \pi_{\mathrm{tar}}^{k}\Big),
\end{align*}
where the last line uses the same definition of $\KL_{\mathrm{tok}}^\rho$ but applied to the $k$-th factor,
with the \emph{same} reference occupancy distribution $d^\rho$ (here $\rho=\hat\pi^{\,i-1}$).
This proves the lemma.
\end{proof}

\subsection{Occupancy Shift Bounds}
\label{app:occshift}

This section proves Lemma~\ref{lem:occ} from the main text and provides both max-divergence and expected-divergence variants.
All statements apply to the macro-action MDP; i.e., $\pi(\cdot\mid s)$ denotes the (joint) macro-action distribution at state $s$.

\begin{lemma}[Occupancy shift via $\TV^{\max}$]
\label{lem:occmax}
For any policies $\pi',\pi$ and bounded $f:\mathcal{S}\to\mathbb{R}$,
\[
\big|\mathbb{E}_{d^{\pi'}}[f]-\mathbb{E}_{d^{\pi}}[f]\big|
\le \frac{2\gamma}{1-\gamma}\,\TV^{\max}(\pi'\Vert\pi)\,\|f\|_{\infty}.
\]
\end{lemma}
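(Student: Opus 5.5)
The plan is to write the discounted occupancy as the fixed point of a linear operator and compare the two fixed points through a resolvent identity. Let $P_\pi$ denote the state-to-state transition kernel induced by $\pi$, $P_\pi(s'\mid s)=\sum_{\mathbf a}\pi(\mathbf a\mid s)P(s'\mid s,\mathbf a)$. Viewing $d^\pi$ as a row vector (or measure), we have the fixed-point equation
\[
d^\pi=(1-\gamma)\mu+\gamma\, d^\pi P_\pi, \qquad\text{so}\qquad d^\pi=(1-\gamma)\mu\,(I-\gamma P_\pi)^{-1},
\]
with the Neumann series converging because $P_\pi$ is a Markov kernel and $\gamma<1$. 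Subtracting the two fixed-point equations for $\pi'$ and $\pi$ gives
\[
d^{\pi'}-d^{\pi}=\gamma\,(d^{\pi'}P_{\pi'}-d^{\pi}P_{\pi})=\gamma\,(d^{\pi'}-d^\pi)P_{\pi'}+\gamma\, d^\pi(P_{\pi'}-P_\pi).
\]
Hence $d^{\pi'}-d^\pi=\gamma\, d^\pi(P_{\pi'}-P_\pi)(I-\gamma P_{\pi'})^{-1}$.

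Next I would pair this identity with $f$. The key point is that $(I-\gamma P_{\pi'})^{-1}$ maps bounded functions to bounded functions with norm at most $1/(1-\gamma)$. Set $g\coloneqq (I-\gamma P_{\pi'})^{-1}f$, so that $\|g\|_\infty\le \|f\|_\infty/(1-\gamma)$. Then
\[
\mathbb E_{d^{\pi'}}[f]-\mathbb E_{d^\pi}[f]=\gamma\,\mathbb E_{s\sim d^\pi}\big[(P_{\pi'}g)(s)-(P_\pi g)(s)\big].
\]
For each $s$, the quantity $(P_{\pi'}g)(s)-(P_\pi g)(s)$ equals
\[
\sum_{\mathbf a}\big(\pi'(\mathbf a\mid s)-\pi(\mathbf a\mid s)\big)\,\mathbb E_{s'\sim P(\cdot\mid s,\mathbf a)}[g(s')].
\]
Its absolute value is at most $2\,\TV(\pi'(\cdot\mid s),\pi(\cdot\mid s))\,\|g\|_\infty$, using $\sum_{\mathbf a}|\pi'-\pi|=2\TV$. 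Averaging over $d^\pi$ yields the expected-TV bound that Lemma~\ref{lem:expectedklpenalty} cites as Lemma~\ref{lem:occexp}:
\[
\big|\mathbb E_{d^{\pi'}}[f]-\mathbb E_{d^\pi}[f]\big|\le\frac{2\gamma}{1-\gamma}\|f\|_\infty\,\mathbb E_{s\sim d^\pi}\TV\big(\pi'(\cdot\mid s),\pi(\cdot\mid s)\big).
\]
Bounding the expectation by $\sup_s$ gives exactly the $\TV^{\max}$ statement, since TV is symmetric and the ordering $\pi'\Vert\pi$ is immaterial.

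The main obstacle is getting the factorization in the right order, so that the weighting occupancy is $d^\pi$ (the reference/behavior policy) rather than $d^{\pi'}$. The alternative expansion
\[
(d^{\pi'}-d^\pi)=\gamma\, d^{\pi'}(P_{\pi'}-P_\pi)(I-\gamma P_\pi)^{-1}
\]
is equally valid, but it would put $d^{\pi'}$ outside. For the sup version either order works. For the expected version, needed downstream with reference $\pi$, I will use the first decomposition, keeping the resolvent of $\pi'$ on the right and $d^\pi$ on the left.

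A secondary technical point is measure-theoretic justification in general state spaces. I would handle it by working with bounded measurable $g$, noting that the kernels act as contractions on $L^\infty$, and using the fact that the Neumann series converges in operator norm. This ensures that all interchanges of sums and integrals are justified by boundedness.
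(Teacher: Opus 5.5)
Your proof is correct and follows essentially the same resolvent argument as the paper. The paper bounds $\|d^{\pi'}-d^\pi\|_1$ via the $1\to 1$ norm of $(I-\gamma P_{\pi'}^\top)^{-1}$ and only then pairs with $f$, while you apply the adjoint resolvent to $f$ and use its $\infty\to\infty$ norm; this is the dual form of the same step, with the same per-state $2\,\mathrm{TV}$ bound and the same expected-TV intermediate that the paper records as Lemma~\ref{lem:occexp}.
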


\begin{proof}
The proof follows the standard resolvent argument. Let $P_{\pi}(s'\mid s)=\sum_{\mathbf{a}} \pi(\mathbf{a}\mid s)P(s'\mid s,\mathbf{a})$ be the induced state transition kernel.
The discounted visitation satisfies $d^{\pi}=(1-\gamma)\mu+\gamma P_{\pi}^{\top}d^{\pi}$.
Taking the difference yields
\[
(I-\gamma P_{\pi'}^\top)(d^{\pi'}-d^\pi)=\gamma(P_{\pi'}^\top-P_\pi^\top)d^\pi.
\]
Since $\|(I-\gamma P_{\pi'}^\top)^{-1}\|_{1\to 1}\le \frac{1}{1-\gamma}$,
\[
\|d^{\pi'}-d^\pi\|_1 \le \frac{\gamma}{1-\gamma}\|(P_{\pi'}^\top-P_\pi^\top)d^\pi\|_1.
\]
For each state $s$,
\[
\|P_{\pi'}(\cdot\mid s)-P_\pi(\cdot\mid s)\|_1
\le \sum_{\mathbf{a}} |\pi'(\mathbf{a}\mid s)-\pi(\mathbf{a}\mid s)|\cdot \|P(\cdot\mid s,\mathbf{a})\|_1
=2\,\TV(\pi'(\cdot\mid s),\pi(\cdot\mid s)).
\]
Thus $\|(P_{\pi'}^\top-P_\pi^\top)d^\pi\|_1 \le 2\,\TV^{\max}(\pi'\Vert\pi)$.
Finally,
\[
|\mathbb{E}_{d^{\pi'}}[f]-\mathbb{E}_{d^\pi}[f]| \le \|f\|_\infty \|d^{\pi'}-d^\pi\|_1
\le \frac{2\gamma}{1-\gamma}\TV^{\max}(\pi'\Vert\pi)\|f\|_\infty.
\]
\end{proof}

\begin{lemma}[Occupancy shift via expected $\TV$]
\label{lem:occexp}
For any policies $\pi',\pi$ and bounded $f$,
\[
\big|\mathbb{E}_{d^{\pi'}}[f]-\mathbb{E}_{d^{\pi}}[f]\big|
\le \frac{2\gamma}{1-\gamma}\,\mathbb{E}_{s\sim d^\pi}\!\left[\TV(\pi'(\cdot\mid s),\pi(\cdot\mid s))\right]\|f\|_{\infty}.
\]
Consequently, by Pinsker and Jensen,
\[
\big|\mathbb{E}_{d^{\pi'}}[f]-\mathbb{E}_{d^{\pi}}[f]\big|
\le \frac{2\gamma}{1-\gamma}\,\sqrt{\frac{1}{2}\mathbb{E}_{s\sim d^\pi}\KL(\pi(\cdot\mid s)\Vert\pi'(\cdot\mid s))}\,\|f\|_{\infty}.
\]
\end{lemma}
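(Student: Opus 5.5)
The first inequality is the resolvent argument of Lemma~\ref{lem:occmax}, with the pointwise bound on the kernel difference kept inside the $d^\pi$-expectation instead of being replaced by a supremum. The second inequality then follows from Pinsker's inequality and Jensen's inequality.

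\textbf{Step 1: resolvent identity.} Write $P_\pi(s'\mid s)=\sum_{\mathbf a}\pi(\mathbf a\mid s)P(s'\mid s,\mathbf a)$. The visitation distributions satisfy the fixed-point equations
\[
d^{\pi}=(1-\gamma)\mu+\gamma P_\pi^\top d^\pi,\qquad d^{\pi'}=(1-\gamma)\mu+\gamma P_{\pi'}^\top d^{\pi'}.
\]
Subtracting them and adding and subtracting $\gamma P_{\pi'}^\top d^\pi$ gives
\[
(I-\gamma P_{\pi'}^\top)(d^{\pi'}-d^\pi)=\gamma (P_{\pi'}^\top-P_\pi^\top)d^\pi .
\]
$P_{\pi'}^\top$ is a Markov operator and hence an $\ell_1$-contraction. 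The Neumann series $\sum_t\gamma^t(P_{\pi'}^\top)^t$ therefore converges, with $\ell_1\to\ell_1$ norm at most $1/(1-\gamma)$. This yields
\[
\|d^{\pi'}-d^\pi\|_1\le \frac{\gamma}{1-\gamma}\,\|(P_{\pi'}^\top-P_\pi^\top)d^\pi\|_1 .
\]

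\textbf{Step 2: keep the expectation under $d^\pi$.} This is the only real change from Lemma~\ref{lem:occmax}. Expand
\[
(P_{\pi'}^\top-P_\pi^\top)d^\pi=\sum_s d^\pi(s)\big(P_{\pi'}(\cdot\mid s)-P_\pi(\cdot\mid s)\big)
\]
and apply the triangle inequality over $s$, using $d^\pi(s)\ge 0$. Pointwise in $s$, as in Lemma~\ref{lem:occmax},
\[
\|P_{\pi'}(\cdot\mid s)-P_\pi(\cdot\mid s)\|_1\le\sum_{\mathbf a}|\pi'(\mathbf a\mid s)-\pi(\mathbf a\mid s)|=2\,\TV(\pi'(\cdot\mid s),\pi(\cdot\mid s)),
\]
since each $P(\cdot\mid s,\mathbf a)$ has unit mass. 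Combining gives $\|(P_{\pi'}^\top-P_\pi^\top)d^\pi\|_1\le 2\,\mathbb E_{s\sim d^\pi}\TV(\pi'(\cdot\mid s),\pi(\cdot\mid s))$. Applying Hölder's inequality, $|\mathbb E_{d^{\pi'}}f-\mathbb E_{d^\pi}f|\le\|f\|_\infty\|d^{\pi'}-d^\pi\|_1$, gives the first claim.

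\textbf{Step 3: Pinsker and Jensen.} Apply Pinsker's inequality in the direction $\TV\le\sqrt{\tfrac12\KL(\pi(\cdot\mid s)\Vert\pi'(\cdot\mid s))}$; TV is symmetric, so either KL direction may be used. Then apply Jensen's inequality to the concave map $\sqrt{\cdot}$ under $d^\pi$ to move the expectation inside the root. This gives the second claim. By Lemma~\ref{lem:kltokequals} the right-hand side equals $\frac{\sqrt2\gamma}{1-\gamma}\sqrt{\KL_{\mathrm{tok}}^\pi(\pi\Vert\pi')}\|f\|_\infty$, which also yields Lemma~\ref{lem:occ}.

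\textbf{Main obstacle.} The delicate point is Step~1 for general, possibly uncountable, text state spaces. There the invertibility of $I-\gamma P_{\pi'}^\top$ and the $1/(1-\gamma)$ bound should be phrased for signed measures in total variation norm. The Neumann-series argument goes through verbatim, because each $P_{\pi'}^\top$ maps probability measures to probability measures. One should also check that $d^\pi$ is the reference measure on the right-hand side, not $d^{\pi'}$; this follows from adding and subtracting $\gamma P_{\pi'}^\top d^\pi$ rather than $\gamma P_\pi^\top d^{\pi'}$.
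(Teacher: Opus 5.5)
Your proposal is correct and follows the paper's proof step for step. Both use the resolvent identity $(I-\gamma P_{\pi'}^\top)(d^{\pi'}-d^\pi)=\gamma(P_{\pi'}^\top-P_\pi^\top)d^\pi$ with the $1/(1-\gamma)$ operator-norm bound, then the triangle inequality that keeps the $d^\pi$-average of the per-state $2\,\TV$ bound instead of a supremum, then H\"older, pointwise Pinsker (with $\pi(\cdot\mid s)$ in the first slot), and Jensen. Your remark on stating the Neumann-series bound for signed measures on uncountable state spaces is a refinement the paper omits, and it does not change the argument.
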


\begin{proof}
We follow the same resolvent identity as in Lemma~\ref{lem:occmax} but do not take a supremum over $s$.

From the fixed-point equations,
\[
(I-\gamma P_{\pi'}^\top)(d^{\pi'}-d^\pi)=\gamma(P_{\pi'}^\top-P_\pi^\top)d^\pi,
\]
hence
\[
\|d^{\pi'}-d^\pi\|_1
\le
\frac{\gamma}{1-\gamma}\|(P_{\pi'}^\top-P_\pi^\top)d^\pi\|_1.
\]
Now,
\[
\begin{aligned}
\|(P_{\pi'}^\top-P_\pi^\top)d^\pi\|_1
&=
\sum_{s'} \left|\sum_s d^\pi(s)\big(P_{\pi'}(s'\mid s)-P_\pi(s'\mid s)\big)\right|\\
&\le
\sum_s d^\pi(s)\sum_{s'} \left|P_{\pi'}(s'\mid s)-P_\pi(s'\mid s)\right|\\
&=
\mathbb{E}_{s\sim d^\pi}\left[\|P_{\pi'}(\cdot\mid s)-P_\pi(\cdot\mid s)\|_1\right].
\end{aligned}
\]
As in Lemma~\ref{lem:occmax}, for each $s$ we have
$\|P_{\pi'}(\cdot\mid s)-P_\pi(\cdot\mid s)\|_1\le 2\,\TV(\pi'(\cdot\mid s),\pi(\cdot\mid s))$,
thus
\[
\|d^{\pi'}-d^\pi\|_1
\le
\frac{2\gamma}{1-\gamma}\,\mathbb{E}_{s\sim d^\pi}\!\left[\TV(\pi'(\cdot\mid s),\pi(\cdot\mid s))\right].
\]
Finally, $|\mathbb{E}_{d^{\pi'}}[f]-\mathbb{E}_{d^\pi}[f]|\le \|f\|_\infty\|d^{\pi'}-d^\pi\|_1$ gives the first inequality.

For the second inequality, apply Pinsker pointwise:
$\TV(P,Q)\le \sqrt{\frac{1}{2}\KL(P\Vert Q)}$,
with $P=\pi(\cdot\mid s)$ and $Q=\pi'(\cdot\mid s)$, then use Jensen:
\[
\mathbb{E}_{s\sim d^\pi}\left[\TV(\pi'(\cdot\mid s),\pi(\cdot\mid s))\right]
\le
\mathbb{E}_{s\sim d^\pi}\left[\sqrt{\tfrac12\KL(\pi(\cdot\mid s)\Vert\pi'(\cdot\mid s))}\right]
\le
\sqrt{\tfrac12 \mathbb{E}_{s\sim d^\pi}\KL(\pi(\cdot\mid s)\Vert\pi'(\cdot\mid s)) }.
\]
\end{proof}

\begin{lemma}[Cumulative within-stage occupancy shift]
\label{lem:cumocc}
Assume a sequence of intermediate policies $\hat\pi^{0},\hat\pi^{1},\ldots,\hat\pi^{i-1}$ satisfies
$\KL_{\mathrm{tok}}^{\hat\pi^{\,k-1}}(\hat\pi^{\,k-1}\Vert \hat\pi^{\,k})\le \delta_k$ for $k=1,\ldots,i-1$.
Then for any bounded measurable $f$,
\[
\big|\mathbb{E}_{d^{\hat\pi^{\,i-1}}}[f]-\mathbb{E}_{d^{\hat\pi^{\,0}}}[f]\big|
\le
\frac{\sqrt{2}\gamma}{1-\gamma}\,\|f\|_\infty\sum_{k=1}^{i-1}\sqrt{\delta_k}.
\]
\end{lemma}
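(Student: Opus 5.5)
The plan is a telescoping argument over the intermediate teams, with Lemma~\ref{lem:occexp} (equivalently Lemma~\ref{lem:expectedklpenalty}) applied once per link of the chain $\hat\pi^{0}\to\hat\pi^{1}\to\cdots\to\hat\pi^{\,i-1}$. Fix a bounded measurable $f$ and write
\[
\mathbb{E}_{d^{\hat\pi^{\,i-1}}}[f]-\mathbb{E}_{d^{\hat\pi^{\,0}}}[f]
=\sum_{k=1}^{i-1}\Big(\mathbb{E}_{d^{\hat\pi^{\,k}}}[f]-\mathbb{E}_{d^{\hat\pi^{\,k-1}}}[f]\Big).
\]
The triangle inequality reduces the claim to bounding each consecutive difference separately. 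The case $i=1$ is trivial, since the sum is empty and both sides vanish.

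For the $k$-th term, apply Lemma~\ref{lem:occexp} with $\pi=\hat\pi^{\,k-1}$ as the reference (first-argument) policy and $\pi'=\hat\pi^{\,k}$. The orientation is the key point. The lemma bounds the shift using the expected KL under $d^{\pi}$ of $\KL(\pi(\cdot\mid s)\Vert\pi'(\cdot\mid s))$, and by Lemma~\ref{lem:kltokequals} this equals $\KL_{\mathrm{tok}}^{\hat\pi^{\,k-1}}(\hat\pi^{\,k-1}\Vert\hat\pi^{\,k})$. That is exactly the quantity assumed to be at most $\delta_k$. Hence
\[
\big|\mathbb{E}_{d^{\hat\pi^{\,k}}}[f]-\mathbb{E}_{d^{\hat\pi^{\,k-1}}}[f]\big|
\le \frac{2\gamma}{1-\gamma}\sqrt{\tfrac{\delta_k}{2}}\,\|f\|_\infty
=\frac{\sqrt2\gamma}{1-\gamma}\sqrt{\delta_k}\,\|f\|_\infty .
\]
Summing over $k=1,\dots,i-1$ gives the stated bound.

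The only step needing care is matching the hypothesis to the lemma. Each link must be measured under its own fresh reference occupancy $d^{\hat\pi^{\,k-1}}$ and in the behavior-to-updated (reverse) direction. Both hold because the lemma's hypothesis is stated exactly in that form. If one instead starts from the per-agent constraint Eq.~\eqref{eq:tr}, Lemma~\ref{lem:kltokfactor} (equivalently Lemma~\ref{lem:klstep}) converts it into the required team-level form. No uniform-in-$s$ control is needed, because Lemma~\ref{lem:occexp} works with the expected TV rather than $\TV^{\max}$.
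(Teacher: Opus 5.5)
Your proposal is correct and matches the paper's proof essentially step for step. The paper also telescopes over the adjacent pairs, applies the triangle inequality, and invokes Lemma~\ref{lem:occexp} in its Pinsker--Jensen form with reference $\hat\pi^{\,k-1}$ for each link, then sums the per-step bounds $\frac{\sqrt{2}\gamma}{1-\gamma}\|f\|_\infty\sqrt{\delta_k}$.
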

\begin{proof}
Fix any bounded measurable $f:\mathcal S\to\mathbb R$ and let $\|f\|_\infty=\sup_s |f(s)|$.
Write the telescoping decomposition
\[
\mathbb E_{d^{\hat\pi^{\,i-1}}}[f]-\mathbb E_{d^{\hat\pi^{\,0}}}[f]
=
\sum_{k=1}^{i-1}\Big(\mathbb E_{d^{\hat\pi^{\,k}}}[f]-\mathbb E_{d^{\hat\pi^{\,k-1}}}[f]\Big).
\]
Taking absolute values and applying the triangle inequality gives
\begin{equation}
\big|\mathbb E_{d^{\hat\pi^{\,i-1}}}[f]-\mathbb E_{d^{\hat\pi^{\,0}}}[f]\big|
\le
\sum_{k=1}^{i-1}
\big|\mathbb E_{d^{\hat\pi^{\,k}}}[f]-\mathbb E_{d^{\hat\pi^{\,k-1}}}[f]\big|.
\label{eq:tri_cumocc}
\end{equation}

For each $k\in\{1,\dots,i-1\}$, apply Lemma~\ref{lem:occexp} in its Pinsker+Jensen form
(to the adjacent pair $(\hat\pi^{\,k-1},\hat\pi^{\,k})$ with reference $\hat\pi^{\,k-1}$), yielding
\[
\big|\mathbb E_{d^{\hat\pi^{\,k}}}[f]-\mathbb E_{d^{\hat\pi^{\,k-1}}}[f]\big|
\le
\frac{2\gamma}{1-\gamma}\sqrt{\frac{1}{2}\KL_{\mathrm{tok}}^{\hat\pi^{\,k-1}}
\!\big(\hat\pi^{\,k-1}\Vert \hat\pi^{\,k}\big)}\;\|f\|_\infty.
\]
Using the assumption
$\KL_{\mathrm{tok}}^{\hat\pi^{\,k-1}}(\hat\pi^{\,k-1}\Vert \hat\pi^{\,k})\le \delta_k$ gives
\[
\big|\mathbb E_{d^{\hat\pi^{\,k}}}[f]-\mathbb E_{d^{\hat\pi^{\,k-1}}}[f]\big|
\le
\frac{2\gamma}{1-\gamma}\sqrt{\frac{\delta_k}{2}}\;\|f\|_\infty
=
\frac{\sqrt{2}\gamma}{1-\gamma}\,\|f\|_\infty\,\sqrt{\delta_k}.
\]
Substituting this bound into \eqref{eq:tri_cumocc} and summing over $k=1,\dots,i-1$ yields
\[
\big|\mathbb{E}_{d^{\hat\pi^{\,i-1}}}[f]-\mathbb{E}_{d^{\hat\pi^{\,0}}}[f]\big|
\le
\frac{\sqrt{2}\gamma}{1-\gamma}\,\|f\|_\infty\sum_{k=1}^{i-1}\sqrt{\delta_k},
\]
as claimed. (The inequality remains valid in the extended reals if some $\delta_k=+\infty$.)
\end{proof}

\subsection{Stale-occupancy evaluation: surrogate mismatch and compounding term}
\label{app:staleocc}

\begin{proof}[Proof of Proposition~\ref{rem:quad2lin}]
The general within-stage occupancy shift bound in Eq.~\eqref{eq:stale_occ_general} in the main text is exactly Lemma~\ref{lem:cumocc}
(with the index shift $k=1,\ldots,i-1$ corresponding to $k<i$).

For the surrogate mismatch bound in Eq.~\eqref{eq:stale_surrogate_gap}, define
\[
f_i(s)\coloneqq \mathbb{E}_{\mathbf{a}\sim \hat\pi^{\,i}(\cdot\mid s)}\big[\widehat A_{i-1}(s,\mathbf{a})\big].
\]
If $|\widehat A_{i-1}(s,\mathbf{a})|\le A_{\max}$ almost surely, then $\|f_i\|_\infty\le A_{\max}$.
By definitions of $L_i^{\mathrm{seq}}$ (Eq.~\eqref{eq:surrogate} in the main text) and $L_i^{\mathrm{stale}}$ (Eq.~\eqref{eq:surrogate_stale}),
\[
L_i^{\mathrm{seq}}-L_i^{\mathrm{stale}}
=
\frac{1}{1-\gamma}\Big(\mathbb{E}_{s\sim d^{\hat\pi^{\,i-1}}}[f_i(s)]-\mathbb{E}_{s\sim d^{\hat\pi^{0}}}[f_i(s)]\Big).
\]
Apply Lemma~\ref{lem:cumocc} to $f_i$ and multiply by $\frac{1}{1-\gamma}$ to obtain
\[
\big|L_i^{\mathrm{seq}}-L_i^{\mathrm{stale}}\big|
\le
\frac{\sqrt{2}\gamma}{(1-\gamma)^2}A_{\max}\sum_{k<i}\sqrt{\delta_k},
\]
which is in Eq.~\eqref{eq:stale_surrogate_gap}.
Summing over $i$ yields the stated $\sum_{i=1}^n\sum_{k<i}\sqrt{\delta_k}$ compounding term, and for $\delta_k\equiv \bar\delta$
the scaling is $O(n^2\sqrt{\bar\delta})$.
\end{proof}

\section{Certificates: Full Proofs and Order Dependence}
\label{app:certificates}

\subsection{Single-Step Improvement Lower Bound (Full Proof)}
\label{app:single}

\begin{proof}[Proof of Theorem~\ref{thm:single}]
The performance difference lemma gives
\[
J(\hat\pi^{\,i})-J(\hat\pi^{\,i-1})
=\frac{1}{1-\gamma}\mathbb{E}_{s\sim d^{\hat\pi^{\,i}},\,\mathbf{a}\sim\hat\pi^{\,i}}
\!\left[A^{{\hat\pi^{\,i-1}}}(s,\mathbf{a})\right].
\]
Add and subtract the same quantity under $d^{{\hat\pi^{\,i-1}}}$:
\[
\begin{aligned}
J(\hat\pi^{\,i})-J(\hat\pi^{\,i-1})
=
&\frac{1}{1-\gamma}\mathbb{E}_{s\sim d^{{\hat\pi^{\,i-1}}},\,\mathbf{a}\sim\hat\pi^{\,i}}
\!\left[A^{{\hat\pi^{\,i-1}}}(s,\mathbf{a})\right] \\
&+\frac{1}{1-\gamma}
\left(
\mathbb{E}_{s\sim d^{\hat\pi^{\,i}},\,\mathbf{a}\sim\hat\pi^{\,i}}\!\left[A^{{\hat\pi^{\,i-1}}}\right]
-
\mathbb{E}_{s\sim d^{{\hat\pi^{\,i-1}}},\,\mathbf{a}\sim\hat\pi^{\,i}}\!\left[A^{{\hat\pi^{\,i-1}}}\right]
\right).
\end{aligned}
\]
Define $f(s)\coloneqq \mathbb{E}_{\mathbf{a}\sim\hat\pi^{\,i}(\cdot\mid s)}[A^{{\hat\pi^{\,i-1}}}(s,\mathbf{a})]$.
Since $|A^{{\hat\pi^{\,i-1}}}(s,\mathbf{a})|\le A_{\max}$, we have $\|f\|_\infty\le A_{\max}$.
By Lemma~\ref{lem:occexp} and Lemma~\ref{lem:kltokequals} (with $\pi=\hat\pi^{\,i-1}$ and $\pi'=\hat\pi^{\,i}$),
\[
\left|
\mathbb{E}_{d^{\hat\pi^{\,i}}}[f]-\mathbb{E}_{d^{{\hat\pi^{\,i-1}}}}[f]
\right|
\le
\frac{2\gamma}{1-\gamma}\sqrt{\frac{1}{2}\KL_{\mathrm{tok}}^{{\hat\pi^{\,i-1}}}(\hat\pi^{\,i-1}\Vert\hat\pi^{\,i})}\cdot A_{\max}.
\]
Therefore,
\[
J(\hat\pi^{\,i})-J(\hat\pi^{\,i-1})
\ge
\frac{1}{1-\gamma}\mathbb{E}_{s\sim d^{{\hat\pi^{\,i-1}}},\,\mathbf{a}\sim\hat\pi^{\,i}}
\!\left[A^{{\hat\pi^{\,i-1}}}(s,\mathbf{a})\right]
-
\frac{\sqrt{2}\gamma}{(1-\gamma)^2}A_{\max}\sqrt{\KL_{\mathrm{tok}}^{{\hat\pi^{\,i-1}}}(\hat\pi^{\,i-1}\Vert\hat\pi^{\,i})}.
\]
For the main term, add and subtract the estimator $\widehat{A}_{i-1}$:
\[
\mathbb{E}[A^{{\hat\pi^{\,i-1}}}]
=
\mathbb{E}[\widehat{A}_{i-1}]
+
\mathbb{E}[A^{{\hat\pi^{\,i-1}}}-\widehat{A}_{i-1}].
\]
By the definition of $\zeta_i$ (Notation; Eq.~\eqref{eq:bias} in the main text),
\[
\mathbb{E}_{s\sim d^{{\hat\pi^{\,i-1}}},\,\mathbf{a}\sim \hat\pi^{\,i}}[A^{{\hat\pi^{\,i-1}}}-\widehat{A}_{i-1}]
\ge -\zeta_i.
\]
Finally, enforce $\KL_{\mathrm{tok}}^{{\hat\pi^{\,i-1}}}(\hat\pi^{\,i-1}\Vert\hat\pi^{\,i})\le \delta_i$ via Lemma~\ref{lem:kltokfactor}
to obtain the claim.
\end{proof}

\subsection{Joint-Stage Improvement and Order Dependence}
\label{app:joint}

\begin{proof}[Proof of Lemma~\ref{lem:klstep}]
At step $i$, $\hat\pi^{\,i}$ and $\hat\pi^{\,i-1}$ differ only in factor $\sigma(i)$.
For fixed $s$, let $\psi^{(k)}(\cdot\mid s)$ denote the (common) factor of agent $k\neq \sigma(i)$ in both $\hat\pi^{\,i-1}$ and $\hat\pi^{\,i}$.
Then
\[
\hat\pi^{\,i-1}(\mathbf{a}\mid s)
=\pi^{\sigma(i)}_{\mathrm{cur}}(a_{\sigma(i)}\mid s)\prod_{k\neq \sigma(i)}\psi^{(k)}(a_k\mid s),
\qquad
\hat\pi^{\,i}(\mathbf{a}\mid s)
=\pi^{\sigma(i)}_{\mathrm{tar}}(a_{\sigma(i)}\mid s)\prod_{k\neq \sigma(i)}\psi^{(k)}(a_k\mid s).
\]
Taking KL and expanding the definition,
\[
\begin{aligned}
\KL\!\big(\hat\pi^{\,i-1}(\cdot\mid s)\Vert \hat\pi^{\,i}(\cdot\mid s)\big)
&=
\sum_{\mathbf{a}} \hat\pi^{\,i-1}(\mathbf{a}\mid s)\log\frac{\hat\pi^{\,i-1}(\mathbf{a}\mid s)}{\hat\pi^{\,i}(\mathbf{a}\mid s)}\\
&=
\sum_{\mathbf{a}} \hat\pi^{\,i-1}(\mathbf{a}\mid s)\log\frac{\pi^{\sigma(i)}_{\mathrm{cur}}(a_{\sigma(i)}\mid s)}{\pi^{\sigma(i)}_{\mathrm{tar}}(a_{\sigma(i)}\mid s)}\\
&=
\sum_{a_{\sigma(i)}} \pi^{\sigma(i)}_{\mathrm{cur}}(a_{\sigma(i)}\mid s)\log\frac{\pi^{\sigma(i)}_{\mathrm{cur}}(a_{\sigma(i)}\mid s)}{\pi^{\sigma(i)}_{\mathrm{tar}}(a_{\sigma(i)}\mid s)}\\
&=
\KL\!\big(\pi^{\sigma(i)}_{\mathrm{cur}}(\cdot\mid s)\Vert \pi^{\sigma(i)}_{\mathrm{tar}}(\cdot\mid s)\big),
\end{aligned}
\]
where the third line uses $\sum_{\mathbf{a}_{-{\sigma(i)}}}\prod_{k\neq\sigma(i)}\psi^{(k)}(a_k\mid s)=1$.
Taking expectation over $s\sim d^{{\hat\pi^{\,i-1}}}$ and using Lemma~\ref{lem:kltokequals} yields
\[
\KL_{\mathrm{tok}}^{{\hat\pi^{\,i-1}}}\!\big(\hat\pi^{\,i-1}\Vert\hat\pi^{\,i}\big)
=
\KL_{\mathrm{tok}}^{{\hat\pi^{\,i-1}}}\!\big(\pi^{\sigma(i)}_{\mathrm{cur}}\Vert\pi^{\sigma(i)}_{\mathrm{tar}}\big)
\le \delta_i.
\]
\end{proof}

\begin{proof}[Proof of Theorem~\ref{thm:stage}]
Apply Theorem~\ref{thm:single} to each step $i$ and sum over $i=1,\ldots,n$.
The left-hand side telescopes:
\[
\sum_{i=1}^n \left(J(\hat\pi^{\,i})-J(\hat\pi^{\,i-1})\right)
=
J(\hat\pi^{\,n})-J(\hat\pi^{\,0})
=
J(\bar\pi)-J(\pi_{\mathrm{cur}}).
\]
The inequality holds for any update order $\sigma$ because it is applied to the realized sequence of intermediate policies.
However, the intermediate occupancies $d^{\hat\pi^{\,i-1}}$ and hence $L_i^{\mathrm{seq}}$ and $\zeta_i$ may depend on $\sigma$.
\end{proof}

\subsection{Stage-wise bound under stale-occupancy surrogates}
\label{app:stage-stale}

\begin{proof}[Proof of Theorem~\ref{thm:stage_stale}]
Start from the intermediate-occupancy stage certificate (Theorem~\ref{thm:stage}):
\[
J(\bar\pi)-J(\pi_{\mathrm{cur}})
\ge
\sum_{i=1}^{n} L_i^{\mathrm{seq}}
-
\frac{\sqrt{2}\gamma}{(1-\gamma)^2}A_{\max}\sum_{i=1}^{n}\sqrt{\delta_i}
-\frac{1}{1-\gamma}\sum_{i=1}^{n}\zeta_i.
\]
By Proposition~\ref{rem:quad2lin} (Eq.~\eqref{eq:stale_surrogate_gap}),
\[
L_i^{\mathrm{seq}}
\ge
L_i^{\mathrm{stale}}
-
\frac{\sqrt{2}\gamma}{(1-\gamma)^2}A_{\max}\sum_{k<i}\sqrt{\delta_k}.
\]
Summing over $i=1,\ldots,n$ gives
\[
\sum_{i=1}^{n} L_i^{\mathrm{seq}}
\ge
\sum_{i=1}^{n} L_i^{\mathrm{stale}}
-
\frac{\sqrt{2}\gamma}{(1-\gamma)^2}A_{\max}\sum_{i=1}^{n}\sum_{k<i}\sqrt{\delta_k}.
\]
Substitute this into Theorem~\ref{thm:stage} to obtain Eq.~\eqref{eq:stage_cert_stale_factored}.
\end{proof}

\section{Plug-and-Play Upgrades and Stage-0 Alignment}
\label{app:pluginalign}

\subsection{Stage-0 alignment as reverse-KL projection / distillation}
\label{app:align}

This section specifies a practical Stage-0 alignment objective that is \emph{compatible with the reverse-KL trust region}.
We assume that the upgraded agent shares the same tokenization/vocabulary as the replaced agent, so that KL is well defined.
(If tokenizers differ, a shared action space must be introduced; we leave this to future work.)

\paragraph{Goal.}
Suppose agent $j$ is replaced: we have the old policy $\pi^{(j)}_{\mathrm{old}}$ and a new parameterization $\pi^{(j)}_{\theta}$.
We aim to select $\theta$ such that the new agent lies within a reverse-KL trust region around the old agent in representative contexts.

\paragraph{Probe distribution.}
Let $\nu$ be a distribution over states/contexts where the replaced agent is expected to act (e.g., contexts collected by running the \emph{pre-swap} team on a probe prompt set, and extracting states where $j$ is active).
Define the Stage-0 target constraint
\[
\mathbb{E}_{s\sim \nu}\KL\!\big(\pi^{(j)}_{\mathrm{old}}(\cdot\mid s)\Vert \pi^{(j)}_{\theta}(\cdot\mid s)\big)\le \delta_{\mathrm{align}}.
\]

\paragraph{Reverse-KL projection objective.}
A natural alignment objective is to minimize the left-hand side:
\begin{equation}
\label{eq:align_obj}
\min_{\theta}\;
\mathbb{E}_{s\sim \nu}\KL\!\big(\pi^{(j)}_{\mathrm{old}}(\cdot\mid s)\Vert \pi^{(j)}_{\theta}(\cdot\mid s)\big).
\end{equation}
This is equivalent (up to an additive constant independent of $\theta$) to minimizing the cross-entropy under the teacher distribution:
\[
\min_{\theta}\;\mathbb{E}_{s\sim \nu}\mathbb{E}_{a\sim \pi^{(j)}_{\mathrm{old}}(\cdot\mid s)}[-\log \pi^{(j)}_{\theta}(a\mid s)].
\]

\paragraph{Autoregressive implementation.}
By the reverse chain rule (Lemma~\ref{lem:chainrule}), the message-level reverse KL decomposes into a token-level sum
with expectation under the old policy. Thus, if we collect teacher-forced rollouts/messages from $\pi^{(j)}_{\mathrm{old}}$ on contexts $s\sim \nu$,
we can optimize Eq.~\eqref{eq:align_obj} via standard distillation:
\[
\min_{\theta}\;
\mathbb{E}_{s\sim \nu,\; m\sim \pi^{(j)}_{\mathrm{old}}(\cdot\mid s)}
\left[
\sum_{u=1}^{T(m)}
\KL\!\big(\pi^{(j)}_{\mathrm{old}}(\cdot\mid s,x_{<u})\Vert \pi^{(j)}_{\theta}(\cdot\mid s,x_{<u})\big)
\right].
\]

\paragraph{Stopping rule.}
During alignment, we monitor the empirical token-sum reverse KL monitor in Eq.~\eqref{eq:emp_tok_kl} on the probe set
and early-stop once it falls below $\delta_{\mathrm{align}}$.
After this, TeamTR resumes with the standard intermediate-occupancy updates.

\paragraph{Scope note.}
Stage-0 alignment enforces the trust region on the probe distribution $\nu$; it does not, by itself, guarantee a trust region on future intermediate occupancies.
Empirically, we find that aligning on representative contexts substantially reduces the swap shock and improves resumability.

\subsection{Certified Resumability and Certificate Tightening}
\label{app:plugin}

\begin{proof}[Proof of Proposition~\ref{prop:pnp}]
The proofs of Theorem~\ref{thm:single} and Theorem~\ref{thm:stage} depend only on:
the surrogate quantities $L_i^{\mathrm{seq}}$,
the trust-region radii $\delta_i$ through the token-KL term,
and the surrogate-estimation error bounds $\zeta_i$.
They do not depend on how $\pi^{\sigma(i)}_{\mathrm{tar}}$ is parameterized.
Therefore, after replacing an agent and re-establishing the trust-region bookkeeping for subsequent updates, the same lower-bound form continues to apply.
\end{proof}

\begin{proof}[Proof of Proposition~\ref{prop:tight}]
The stage certificate in Theorem~\ref{thm:stage} takes the form
\[
\mathrm{LB}(\{\!L_i^{\mathrm{seq}}\!\},\{\!\delta_i\!\},\{\!\zeta_i\!\})
=
\sum_{i=1}^{n} L_i^{\mathrm{seq}}
-
c\sum_{i=1}^{n}\sqrt{\delta_i}
-
\frac{1}{1-\gamma}\sum_{i=1}^{n}\zeta_i,
\qquad
c=\frac{\sqrt{2}\gamma}{(1-\gamma)^2}A_{\max}.
\]
If, after an upgrade, some step attains a higher surrogate value with the same $\delta_i$ (and all other terms unchanged), the right-hand side increases.
Likewise, achieving the same surrogate value with a smaller $\delta_i$ decreases the penalty term $c\sqrt{\delta_i}$ and hence weakly increases the bound.
\end{proof}

\section{Group-Based Advantages and Finite-Sample Concentration}
\label{app:group}

\subsection{Group-Standardized and Clipped Message-Level Advantages}
\label{app:groupadv}

This section specifies the message- and sequence-level advantage estimator used in our algorithm.
It is inspired by group-based baselines in LLM RL methods, but here it is used as a plug-in estimator inside a sequential trust-region framework.

\subsubsection{Definition: Group standardization and hard clipping}

Fix a prompt (or initial state) $x$. At step $i$, we sample $G$ rollouts
$\{\tau^{(g)}\}_{g=1}^G$ from the current intermediate policy $\hat\pi^{\,i-1}(\cdot\mid x)$.
Let $\widehat{R}^{(g)}$ denote the scalar message-/sequence-level return for $\tau^{(g)}$ (e.g., terminal verifier reward).
Define the group mean and standard deviation
\[
\widehat{\mu} \coloneqq \frac{1}{G}\sum_{g=1}^G \widehat{R}^{(g)},
\qquad
\widehat{\sigma} \coloneqq \sqrt{\frac{1}{G}\sum_{g=1}^G\left(\widehat{R}^{(g)}-\widehat{\mu}\right)^2 + \varepsilon_{\mathrm{std}} }.
\]
The standardized, hard-clipped group advantage is
\[
\widehat{A}_{\textsc{grp}}^{i-1}(\tau^{(g)}) \coloneqq
\mathrm{clip}\!\left(\frac{\widehat{R}^{(g)}-\widehat{\mu}}{\widehat{\sigma}},\, -A_{\mathrm{clip}},\, A_{\mathrm{clip}}\right).
\]
When used as a token-level weight, $\widehat{A}_{\textsc{grp}}^{i-1}(\tau^{(g)})$ is broadcast to all tokens
controlled by the updated factor.

\paragraph{Boundedness.}
By construction, $|\widehat{A}_{\textsc{grp}}^{i-1}(\tau^{(g)})|\le A_{\mathrm{clip}}$ almost surely.

\subsection{Bias components aligned with \texorpdfstring{$\zeta_i$}{zeta\_i}}

The main text term $\zeta_i$ is intended to capture (or upper bound) the aggregate surrogate-estimation error.
This section records simple, explicit bias components that can be upper-bounded and monitored.
\begin{lemma}[Self-included group mean yields shrinkage for i.i.d.\ rollouts]
\label{lem:shrinkage}
Fix a prompt $x$ and let $\tau^{(1)},\ldots,\tau^{(G)}$ be i.i.d.\ from $\hat\pi^{\,i-1}(\cdot\mid x)$.
Let $R^{(g)} \coloneqq R(\tau^{(g)})$ be a measurable return with $\mathbb E[|R^{(g)}|]<\infty$, and define
$\widehat{\mu}=\frac{1}{G}\sum_{g=1}^G R^{(g)}$.
Let $\mu(x)\coloneqq \mathbb E[R(\tau)\mid x]$.
Then for any fixed index $g$,
\[
\mathbb{E}\!\left[R^{(g)}-\widehat{\mu}\,\big|\,x,\tau^{(g)}\right]
=
\left(1-\frac{1}{G}\right)\left(R^{(g)}-\mu(x)\right).
\]
\end{lemma}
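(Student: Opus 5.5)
The plan is a direct computation: split the group mean into the self term and the average of the other rollouts, then take the conditional expectation term by term. Write
\[
R^{(g)}-\widehat{\mu}
=
\Big(1-\tfrac{1}{G}\Big)R^{(g)}-\frac{1}{G}\sum_{h\neq g}R^{(h)}.
\]
Conditional expectation given $(x,\tau^{(g)})$ is linear, and every term is integrable because $\mathbb E|R^{(h)}|<\infty$. So it suffices to evaluate the conditional expectation of each piece separately.

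The first piece is $\sigma(x,\tau^{(g)})$-measurable, since $R^{(g)}=R(\tau^{(g)})$ with $R$ measurable. It therefore passes through the conditional expectation unchanged and contributes $(1-\frac{1}{G})R^{(g)}$.

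For each $h\neq g$, I will use the i.i.d.\ assumption given $x$. Conditionally on $x$, the rollout $\tau^{(h)}$ is independent of $\tau^{(g)}$ and has the law $\hat\pi^{\,i-1}(\cdot\mid x)$. Hence
\[
\mathbb E[R^{(h)}\mid x,\tau^{(g)}]=\mathbb E[R(\tau)\mid x]=\mu(x).
\]
Summing over the $G-1$ indices $h\neq g$ gives
\[
\frac{1}{G}\sum_{h\neq g}\mathbb E[R^{(h)}\mid x,\tau^{(g)}]=\frac{G-1}{G}\,\mu(x).
\]
Combining the two pieces yields $(1-\frac1G)R^{(g)}-(1-\frac1G)\mu(x)=(1-\frac1G)(R^{(g)}-\mu(x))$, which is the claim.

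The only step that needs care is justifying $\mathbb E[R^{(h)}\mid x,\tau^{(g)}]=\mu(x)$. I will state it as a standard fact: if $Y$ is independent of $Z$ given $X$, then $\mathbb E[f(Y)\mid X,Z]=\mathbb E[f(Y)\mid X]$ for integrable $f(Y)$. The "i.i.d.\ given $x$" hypothesis supplies exactly this conditional independence, and "identically distributed" identifies $\mathbb E[R^{(h)}\mid x]$ with $\mu(x)$. With that fact in hand, nothing else in the argument is an obstacle; the rest is linearity.
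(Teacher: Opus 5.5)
Your proposal is correct and follows essentially the same route as the paper. Both arguments split $\widehat{\mu}$ into the self term $\frac{1}{G}R^{(g)}$ and the average over $h\neq g$, and both use conditional independence with identical distribution to replace each $\mathbb{E}[R^{(h)}\mid x,\tau^{(g)}]$ by $\mu(x)$; your explicit statement of the conditional-independence fact is only slightly more careful than the paper's one-line justification.
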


\begin{proof}
Write $\widehat{\mu}=\frac{1}{G}R^{(g)}+\frac{1}{G}\sum_{h\neq g}R^{(h)}$.
Conditioned on $(x,\tau^{(g)})$, the random variables $\{R^{(h)}\}_{h\neq g}$ remain i.i.d.\ with
$\mathbb E[R^{(h)}\mid x,\tau^{(g)}]=\mathbb E[R^{(h)}\mid x]=\mu(x)$ by independence and identical distribution.
Hence
\[
\mathbb E[\widehat{\mu}\mid x,\tau^{(g)}]
=\frac{1}{G}R^{(g)}+\frac{G-1}{G}\mu(x),
\]
and therefore
\[
\mathbb E[R^{(g)}-\widehat{\mu}\mid x,\tau^{(g)}]
=R^{(g)}-\left(\frac{1}{G}R^{(g)}+\frac{G-1}{G}\mu(x)\right)
=\left(1-\frac{1}{G}\right)(R^{(g)}-\mu(x)).
\]
\end{proof}
\begin{corollary}[A simple $O(1/G)$ bias proxy for group-mean baselines]
\label{cor:baselinebias}
Under Lemma~\ref{lem:shrinkage}, define the centered return advantage
$A_{\mathrm{ret}}(\tau;x)\coloneqq R(\tau)-\mathbb E[R(\tau)\mid x]$.
Then for any fixed $g$,
\[
\left|\mathbb{E}\!\left[R^{(g)}-\widehat{\mu}\,\big|\,x,\tau^{(g)}\right]-A_{\mathrm{ret}}(\tau^{(g)};x)\right|
=
\frac{1}{G}\,|A_{\mathrm{ret}}(\tau^{(g)};x)|.
\]
If $|A_{\mathrm{ret}}(\tau;x)|\le A_{\mathrm{ret},\max}$ almost surely, then the baseline-induced bias is at most $A_{\mathrm{ret},\max}/G$.
\end{corollary}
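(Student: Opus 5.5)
This follows directly from Lemma~\ref{lem:shrinkage}, so the plan is a two-line reduction. Fix $x$ and an index $g$. By definition, $A_{\mathrm{ret}}(\tau^{(g)};x)=R^{(g)}-\mu(x)$, where $\mu(x)=\mathbb{E}[R(\tau)\mid x]$ as in Lemma~\ref{lem:shrinkage}.

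First, I would invoke Lemma~\ref{lem:shrinkage}. It gives
\[
\mathbb{E}\!\left[R^{(g)}-\widehat{\mu}\,\big|\,x,\tau^{(g)}\right]=\left(1-\tfrac{1}{G}\right)A_{\mathrm{ret}}(\tau^{(g)};x).
\]
Subtracting $A_{\mathrm{ret}}(\tau^{(g)};x)$ from both sides leaves exactly $-\tfrac{1}{G}A_{\mathrm{ret}}(\tau^{(g)};x)$. Taking absolute values then yields the claimed identity
\[
\left|\mathbb{E}\!\left[R^{(g)}-\widehat{\mu}\,\big|\,x,\tau^{(g)}\right]-A_{\mathrm{ret}}(\tau^{(g)};x)\right|=\tfrac{1}{G}\,\bigl|A_{\mathrm{ret}}(\tau^{(g)};x)\bigr|.
\]
This holds pointwise, for every realization of $(x,\tau^{(g)})$.

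Second, I would use the almost-sure bound $|A_{\mathrm{ret}}(\tau;x)|\le A_{\mathrm{ret},\max}$. Since the identity is pointwise, this bound carries over directly and gives a baseline-induced bias of at most $A_{\mathrm{ret},\max}/G$.

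There is no real obstacle. The only point to check is integrability, namely $\mathbb{E}|R|<\infty$, so that the conditional expectations are well defined. This is already assumed in Lemma~\ref{lem:shrinkage}, and it is implied by the boundedness of $A_{\mathrm{ret}}$ together with a finite $\mu(x)$.
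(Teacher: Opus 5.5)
Your proposal is correct and follows the paper's proof exactly: apply Lemma~\ref{lem:shrinkage} to get $\left(1-\tfrac{1}{G}\right)A_{\mathrm{ret}}(\tau^{(g)};x)$, subtract $A_{\mathrm{ret}}(\tau^{(g)};x)$, take absolute values, and then use the almost-sure bound for the $A_{\mathrm{ret},\max}/G$ conclusion. Your integrability remark is slightly circular, since finiteness of $\mu(x)$ already presupposes integrability, but this is harmless because $\mathbb{E}|R|<\infty$ is assumed in the lemma.
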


\begin{proof}
Lemma~\ref{lem:shrinkage} implies
$\mathbb E[R^{(g)}-\widehat{\mu}\mid x,\tau^{(g)}]=(1-\frac{1}{G})A_{\mathrm{ret}}(\tau^{(g)};x)$.
Subtracting $A_{\mathrm{ret}}(\tau^{(g)};x)$ and taking absolute values yields the claim.
\end{proof}
\begin{lemma}[Clipping bias is controlled by the overflow probability] \label{lem:clipbias} Let $Z$ be any random variable and define $\tilde Z=\mathrm{clip}(Z,-c,c)$. Then \[ |\mathbb{E}[\tilde Z]-\mathbb{E}[Z]| \le \mathbb{E}\!\left[|Z-\tilde Z|\right] \le \mathbb{E}\!\left[|Z|\;\mathbbm{1}\{|Z|>c\}\right]. \] \end{lemma}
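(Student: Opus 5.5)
The plan is to prove the two inequalities in Lemma~\ref{lem:clipbias} separately. The first is a Jensen-type step, and the second is a pointwise comparison followed by taking expectations. Throughout I will assume $\mathbb{E}|Z|<\infty$, which the statement implicitly requires for $\mathbb{E}[Z]$ to be defined. Since $\tilde Z$ is bounded by $c$, it is integrable, and hence so is $Z-\tilde Z$.

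For the first inequality, linearity of expectation gives $\mathbb{E}[\tilde Z]-\mathbb{E}[Z]=\mathbb{E}[\tilde Z-Z]$. The elementary bound $|\mathbb{E}[X]|\le\mathbb{E}[|X|]$ (convexity of the absolute value) applied to $X=\tilde Z-Z$ then yields $|\mathbb{E}[\tilde Z]-\mathbb{E}[Z]|\le \mathbb{E}[|Z-\tilde Z|]$.

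For the second inequality, I will establish the pointwise bound $|Z-\tilde Z|\le |Z|\,\mathbbm{1}\{|Z|>c\}$ by cases on the value of $Z$.
\begin{itemize}
\item If $|Z|\le c$, clipping does nothing, so $\tilde Z=Z$ and both sides are $0$.
\item If $Z>c$, then $\tilde Z=c$, so $|Z-\tilde Z|=Z-c\le Z=|Z|$ because $c\ge 0$.
\item If $Z<-c$, then $\tilde Z=-c$, so $|Z-\tilde Z|=-Z-c\le |Z|$ by the same reasoning.
\end{itemize}
In fact the sharper identity $|Z-\tilde Z|=(|Z|-c)_+$ holds, which is at most $|Z|\,\mathbbm{1}\{|Z|>c\}$. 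Taking expectations of the pointwise bound and using monotonicity of expectation finishes the proof.

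No step is a real obstacle. The only point needing care is the implicit assumption $c\ge 0$, which holds for any clipping threshold $A_{\mathrm{clip}}$. I will state it explicitly, together with the integrability assumption, so that all expectations are finite and linearity applies.
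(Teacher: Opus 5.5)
Your proof is correct and follows the paper's argument. The first inequality is the Jensen step $|\mathbb{E}[\tilde Z-Z]|\le\mathbb{E}|\tilde Z-Z|$, and the second uses the pointwise bound $|Z-\tilde Z|\le |Z|\,\mathbbm{1}\{|Z|>c\}$, since $\tilde Z=Z$ whenever $|Z|\le c$. Your explicit assumptions $\mathbb{E}|Z|<\infty$ and $c\ge 0$, and the sharper identity $|Z-\tilde Z|=(|Z|-c)_+$, are valid clarifications that the paper leaves implicit.
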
 \begin{proof} The first inequality follows from Jensen: $|\mathbb{E}[\tilde Z-Z]|\le \mathbb{E}|\tilde Z-Z|$. The second inequality uses $\tilde Z=Z$ when $|Z|\le c$ and $|Z-\tilde Z|\le |Z|$ otherwise. \end{proof}
\begin{corollary}[Clip bias via overflow probability / second moment]
\label{cor:clipbias-prob}
Under Lemma~\ref{lem:clipbias}, for any $c>0$,
\[
\mathbb{E}\!\left[|Z|\mathbbm{1}\{|Z|>c\}\right]
\le \|Z\|_\infty\,\Pr(|Z|>c),
\qquad
\mathbb{E}\!\left[|Z|\mathbbm{1}\{|Z|>c\}\right]
\le \sqrt{\mathbb E[Z^2]\,\Pr(|Z|>c)}.
\]
\end{corollary}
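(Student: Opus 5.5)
Both inequalities are instances of one pointwise bound, so I will first prove
\[
|Z|\,\mathbbm{1}\{|Z|>c\}\;\le\; \|Z\|_\infty\,\mathbbm{1}\{|Z|>c\}\quad\text{almost surely,}
\]
which holds because $|Z|\le\|Z\|_\infty$ a.s. Taking expectations and using $\mathbb{E}[\mathbbm{1}\{|Z|>c\}]=\Pr(|Z|>c)$ gives the first inequality directly. This holds trivially (as $+\infty$ on the right) when $Z$ is unbounded, unless $\Pr(|Z|>c)=0$; in that case both sides vanish, with the convention $\infty\cdot 0=0$.

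For the second inequality I will apply Cauchy--Schwarz to the product of $|Z|$ and the indicator $\mathbbm{1}\{|Z|>c\}$:
\[
\mathbb{E}\big[|Z|\,\mathbbm{1}\{|Z|>c\}\big]\le \sqrt{\mathbb{E}[Z^2]}\,\sqrt{\mathbb{E}[\mathbbm{1}\{|Z|>c\}^2]}.
\]
Since the indicator is idempotent, $\mathbbm{1}^2=\mathbbm{1}$, so the second factor equals $\sqrt{\Pr(|Z|>c)}$, and the claim follows. If $\mathbb{E}[Z^2]=\infty$ the bound is vacuous, so no integrability assumption is needed beyond what Lemma~\ref{lem:clipbias} already uses.

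Combining either display with Lemma~\ref{lem:clipbias} then yields bounds on the clipping bias $|\mathbb{E}[\tilde Z]-\mathbb{E}[Z]|$. There is no real obstacle here. The only care needed is bookkeeping of the extended-real conventions in the degenerate cases $\Pr(|Z|>c)=0$ or infinite moments, and noting that $c>0$ plays no role beyond defining the event.
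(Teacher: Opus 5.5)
Your proposal is correct and follows the paper's proof exactly: the paper proves the first bound from $|Z|\le\|Z\|_\infty$ on the event $\{|Z|>c\}$ and the second by Cauchy--Schwarz against the indicator. Your added handling of the degenerate extended-real cases is a harmless refinement.
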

\begin{proof}
The first bound uses $|Z|\le \|Z\|_\infty$ on $\{|Z|>c\}$.
The second bound is Cauchy--Schwarz.
\end{proof}

\paragraph{How this enters the certificate.}
The certificates require a quantity $\zeta_i$ that upper bounds the occupancy-weighted mismatch between the estimator and the true advantage.
A conservative sufficient choice is $\zeta_i=\sup_{s,\mathbf{a}}|\mathbb{E}[\widehat A_{i-1}(s,\mathbf{a})]-A^{{\hat\pi^{\,i-1}}}(s,\mathbf{a})|$.
Corollary~\ref{cor:baselinebias} provides an explicit $O(1/G)$ proxy for the bias introduced by the self-included group-mean baseline
at the \emph{return level}. Additional effects---standardization by $\widehat{\sigma}$, token-broadcasting, PPO approximation/ratio clipping,
and the mismatch between sequence-level return advantages and the true state-action advantage in a sequential decision process---are conservatively absorbed into $\zeta_i$.
Hard clipping is separated via Lemma~\ref{lem:clipbias}.

\subsection{Concentration for Group-Based Surrogates (Empirical Certificates)}
\label{app:groupconc}

This section provides a finite-sample, high-probability correction that converts the population lower bound
into an empirical one usable with minibatch surrogates.
It also makes explicit the additional bias introduced by \emph{ratio clipping} (PPO-style), and shows how the
reverse-KL trust region controls that bias. Alternatively, one may absorb the ratio-clipping bias into $\zeta_i$ by letting $\zeta_i$
upper bound the total surrogate mismatch.

\paragraph{Setup.}
At step $i$, let $\widehat{L}_i^{\mathrm{seq}}$ denote the empirical estimate of the step-$i$ surrogate contribution formed by averaging over
$N_i$ independent prompt-groups, each containing $G$ rollouts.
In practice we use ratio clipping with $w_i^{\mathrm{clip}}=\mathrm{clip}(w_i,1-\epsilon_w,1+\epsilon_w)$.

\paragraph{Boundedness.}
Hard clipping ensures $|\widehat{A}_{\textsc{grp}}^{i-1}|\le A_{\mathrm{clip}}$.
With ratio clipping, $w_i^{\mathrm{clip}}\in[1-\epsilon_w,1+\epsilon_w]$.
Therefore, the per-sample contribution is bounded by $(1+\epsilon_w)A_{\mathrm{clip}}/(1-\gamma)$.

\begin{lemma}[Hoeffding bound for group-level averages]
\label{lem:grouphoeffding}
Assume prompt-groups are i.i.d.\ across $j\in\{1,\ldots,N_i\}$.
Let $Y_j$ be the group-level contribution and define
$\widehat{L}_i^{\mathrm{seq}} \coloneqq \frac{1}{N_i}\sum_{j=1}^{N_i} Y_j$.
Assume $Y_j\in[-B,B]$ almost surely. Then for any $\epsilon>0$,
\[
\Pr\!\left(|\widehat{L}_i^{\mathrm{seq}}-\mathbb{E}[\widehat{L}_i^{\mathrm{seq}}]|>\epsilon\right)
\le 2\exp\!\left(-\frac{N_i\epsilon^2}{2B^2}\right).
\]
\end{lemma}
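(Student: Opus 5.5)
This is the two-sided Hoeffding inequality for an i.i.d.\ average of bounded variables, so the plan is to reduce directly to it. The one point to watch is that the statement carries the constant $2B^2$ in the exponent rather than the sharp $B^2/2$. Since $2\exp(-N_i\epsilon^2/(2B^2)) \ge 2\exp(-2N_i\epsilon^2/B^2)$, the stated bound follows from the sharp Hoeffding bound a fortiori. It is also exactly what one gets from the sub-Gaussian/Chernoff route with variance proxy $B^2$, which is the argument I will actually write out.

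First, I will set up the centered variables. Let $Z_j \coloneqq Y_j - \mathbb{E}[Y_j]$. Independence of the prompt-groups (the i.i.d.\ assumption across $j$) makes the $Z_j$ independent and mean zero, and linearity gives $\mathbb{E}[\widehat{L}_i^{\mathrm{seq}}]=\frac{1}{N_i}\sum_j\mathbb{E}[Y_j]$. Moreover $Z_j$ takes values in an interval of length $2B$, namely $[-B-\mathbb{E}Y_j,\,B-\mathbb{E}Y_j]$.

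Second, I will bound the moment generating function of each $Z_j$. Hoeffding's lemma gives $\mathbb{E}[e^{\lambda Z_j}]\le \exp(\lambda^2(2B)^2/8)=\exp(\lambda^2B^2/2)$ for every real $\lambda$. By independence, the average satisfies $\mathbb{E}\exp\!\big(\lambda(\widehat{L}_i^{\mathrm{seq}}-\mathbb{E}\widehat{L}_i^{\mathrm{seq}})\big)\le \exp(\lambda^2B^2/(2N_i))$.

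Third, I will apply the Chernoff bound. Markov's inequality on the exponential gives $\Pr(\widehat{L}_i^{\mathrm{seq}}-\mathbb{E}\widehat{L}_i^{\mathrm{seq}}>\epsilon)\le \exp(-\lambda\epsilon+\lambda^2B^2/(2N_i))$. Taking $\lambda=N_i\epsilon/B^2$ yields $\exp(-N_i\epsilon^2/(2B^2))$. Repeating the argument with $-Z_j$ handles the lower tail, and a union bound over the two tails produces the factor $2$.

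No step here is a genuine obstacle. The only substantive assumption is independence across groups. The $G$ rollouts within a group are coupled through the shared group mean $\widehat\mu$ and the standard deviation $\widehat\sigma$, which is precisely why the lemma is stated at the group level. I will note this explicitly so that within-group dependence is not mistaken for a gap in the argument.
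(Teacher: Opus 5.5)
The argument you actually write out (Hoeffding's lemma with range length $2B$, independence, Chernoff with $\lambda=N_i\epsilon/B^2$, union bound over the two tails) is correct. It is the same proof as the paper's, which simply invokes Hoeffding's inequality for i.i.d.\ variables of range length $2B$; you have just unpacked that inequality via its standard derivation.

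Your opening remark, however, is false and should be deleted. The bound $2\exp(-N_i\epsilon^2/(2B^2))$ is not a loosened constant. It is exactly the sharp Hoeffding bound $2\exp(-2N_i\epsilon^2/(b-a)^2)$ with $b-a=2B$. The bound $2\exp(-2N_i\epsilon^2/B^2)$ that you call sharp holds only for variables of range length $B$ (e.g.\ values in $[0,B]$), not for values in $[-B,B]$. For a counterexample, take $N_i=1$, $B=1$, $Y_1$ a Rademacher sign and $\epsilon=0.99$: the event $|Y_1|>0.99$ has probability $1$, while $2e^{-2(0.99)^2}<0.3$.

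Your own computation confirms there is no slack. Hoeffding's lemma already uses the optimal constant $(2B)^2/8$, and the optimized Chernoff exponent lands exactly on the stated $N_i\epsilon^2/(2B^2)$. The "a fortiori" justification is therefore both unnecessary and wrong.
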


\begin{proof}
This is Hoeffding's inequality for the average of i.i.d.\ bounded random variables with range length $2B$.
\end{proof}

\paragraph{Ratio clipping bias and reverse KL.}
Let $w=w_i(s,\mathbf{a})$ denote the per-decision ratio for the updated factor at step $i$:
$w=\hat\pi^{\,i}(\mathbf{a}\mid s)/\hat\pi^{\,i-1}(\mathbf{a}\mid s)$.
Assume absolute continuity (true for softmax LLMs): if $\hat\pi^{\,i-1}(\mathbf{a}\mid s)>0$ then $\hat\pi^{\,i}(\mathbf{a}\mid s)>0$.
\begin{lemma}[Ratio deviation equals total variation]
\label{lem:ratio-tv}
Fix a state $s$ and let $P(\mathbf{a})=\hat\pi^{\,i-1}(\mathbf{a}\mid s)$ and $Q(\mathbf{a})=\hat\pi^{\,i}(\mathbf{a}\mid s)$.
Assume $Q\ll P$ and define $w(\mathbf a)=\frac{\mathrm d Q}{\mathrm d P}(\mathbf a)$.
Then
\[
\mathbb{E}_{\mathbf{a}\sim P}\!\left[|1-w(\mathbf{a})|\right]
=
2\,\TV(P,Q).
\]
\end{lemma}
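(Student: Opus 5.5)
The plan is to reduce the claim to the measure-theoretic identity $\int|p-q|\,d\lambda = 2\,\TV(P,Q)$, written relative to $P$ instead of a generic dominating measure. Since $Q\ll P$, the Radon--Nikodym derivative $w=\mathrm dQ/\mathrm dP$ exists, is nonnegative, and satisfies $\mathbb{E}_{P}[w]=Q(\text{whole space})=1$. Thus $\mathbb{E}_P[1-w]=0$. In the discrete setting the paper uses, we have simply $w(\mathbf a)=Q(\mathbf a)/P(\mathbf a)$ on the support of $P$, and $Q$ places no mass outside that support.

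The first step expands the left-hand side:
\[
\mathbb{E}_{\mathbf a\sim P}|1-w(\mathbf a)|=\sum_{\mathbf a:P(\mathbf a)>0}P(\mathbf a)\Big|1-\tfrac{Q(\mathbf a)}{P(\mathbf a)}\Big|=\sum_{\mathbf a:P(\mathbf a)>0}|P(\mathbf a)-Q(\mathbf a)|.
\]
The second step uses $Q\ll P$ to say that every $\mathbf a$ with $P(\mathbf a)=0$ also has $Q(\mathbf a)=0$. Those points therefore add nothing to $\sum_{\mathbf a}|P(\mathbf a)-Q(\mathbf a)|$, so the restricted sum equals the full $\ell_1$ distance $\|P-Q\|_1$. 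The last step invokes the convention already used in the proof of Lemma~\ref{lem:occmax}, namely $\|P-Q\|_1=2\,\TV(P,Q)$, and this yields the claim.

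For general (non-discrete) spaces I would repeat the same argument with densities taken relative to $P$: $\int|1-w|\,\mathrm dP=\int|\mathrm dP-\mathrm dQ|$. I would then use the alternative characterization $\TV(P,Q)=\sup_B|P(B)-Q(B)|$. The supremum is attained at $B=\{w<1\}$, and because $\mathbb{E}_P[1-w]=0$ the positive and negative parts of $1-w$ carry equal mass.

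The only real subtlety is the role of absolute continuity. Without $Q\ll P$, any mass $Q$ places off the support of $P$ would be invisible to the $P$-expectation, and the identity would fail, with the left side then strictly smaller than $2\,\TV$. I would state this explicitly and point out that it is guaranteed for softmax policies.
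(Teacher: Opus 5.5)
Your proposal is correct and follows the paper's proof: both use $Q\ll P$ to write $\mathrm dQ = w\,\mathrm dP$, so that $\mathbb{E}_P|1-w| = \int|\mathrm dP-\mathrm dQ|$, and then apply the convention $2\,\TV(P,Q)=\int|\mathrm dP-\mathrm dQ|$. Your discrete expansion and the $\sup_B$ characterization with $B=\{w<1\}$ are more explicit versions of that same one-line identity.
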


\begin{proof}
By definition of total variation, $2\TV(P,Q)=\int | \mathrm dP-\mathrm dQ|$.
Since $Q\ll P$, $\mathrm dQ = w\,\mathrm dP$, and thus
\[
\int |\mathrm dP-\mathrm dQ|
=\int |1-w|\,\mathrm dP
=\mathbb E_{a\sim P}[|1-w(a)|].
\]
\end{proof}

\begin{lemma}[Ratio clipping bias controlled by reverse KL]
\label{lem:clipbias-kl}
Let $w^{\mathrm{clip}}=\mathrm{clip}(w,1-\epsilon_w,1+\epsilon_w)$.
For any random variable $A$ with $|A|\le A_{\mathrm{clip}}$ almost surely,
\[
\left|\mathbb{E}_{\mathbf{a}\sim P}[wA]-\mathbb{E}_{\mathbf{a}\sim P}[w^{\mathrm{clip}}A]\right|
\le
A_{\mathrm{clip}}\,\mathbb{E}_{\mathbf{a}\sim P}\!\left[|w-w^{\mathrm{clip}}|\right]
\le
A_{\mathrm{clip}}\,\mathbb{E}_{\mathbf{a}\sim P}\!\left[|1-w|\right]
\le
A_{\mathrm{clip}}\sqrt{2\,\KL(P\Vert Q)}.
\]
\end{lemma}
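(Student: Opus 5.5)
The proof is a three-link chain, $|\mathbb{E}[wA]-\mathbb{E}[w^{\mathrm{clip}}A]| \le A_{\mathrm{clip}}\,\mathbb{E}|w-w^{\mathrm{clip}}| \le A_{\mathrm{clip}}\,\mathbb{E}|1-w| \le A_{\mathrm{clip}}\sqrt{2\KL(P\Vert Q)}$. I will prove the inequalities one at a time, left to right, all under $\mathbf{a}\sim P$.

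\textbf{First inequality.} Write the difference as $\mathbb{E}_P[(w-w^{\mathrm{clip}})A]$. Apply $|\mathbb{E}X|\le\mathbb{E}|X|$, then use $|A|\le A_{\mathrm{clip}}$ pointwise to pull out the constant. This needs $\mathbb{E}_P[w]<\infty$, which holds because $\mathbb{E}_P[w]=Q(\mathrm{supp}\,P)\le 1$. It also needs $w^{\mathrm{clip}}$ bounded, which holds by construction.

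\textbf{Second inequality.} This is the only step that is not bookkeeping, and it is pointwise. Because $1\in[1-\epsilon_w,1+\epsilon_w]$ and clipping is the metric projection onto that interval, $w^{\mathrm{clip}}$ lies between $w$ and $1$. Explicitly:
\begin{itemize}
\item If $w$ lies inside the interval, then $w-w^{\mathrm{clip}}=0$.
\item If $w>1+\epsilon_w$, then $|w-w^{\mathrm{clip}}|=w-1-\epsilon_w\le w-1$.
\item If $w<1-\epsilon_w$, then $|w-w^{\mathrm{clip}}|=1-\epsilon_w-w\le 1-w$.
\end{itemize}
So $|w-w^{\mathrm{clip}}|\le|1-w|$ everywhere, and taking expectations gives the claim. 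This uses $\epsilon_w\ge 0$.

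\textbf{Third inequality.} Lemma~\ref{lem:ratio-tv} gives $\mathbb{E}_P|1-w|=2\,\TV(P,Q)$. Pinsker's inequality gives $\TV(P,Q)\le\sqrt{\tfrac12\KL(P\Vert Q)}$. Combining, $2\sqrt{\tfrac12\KL}=\sqrt{2\KL}$. Pinsker is symmetric in its TV side, so the reverse KL $\KL(P\Vert Q)$, with $P=\hat\pi^{\,i-1}$ as reference, is exactly the quantity the trust region in Eq.~\eqref{eq:tr} controls.

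\textbf{Edge case.} If $P\not\ll Q$, then $\KL(P\Vert Q)=+\infty$ and the final bound is trivial. Lemma~\ref{lem:ratio-tv} only needs $Q\ll P$, which the stated absolute-continuity assumption provides.

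I expect no serious obstacle; the only step needing care is the pointwise clipping inequality above.
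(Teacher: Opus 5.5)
Your proof is correct and is essentially the paper's argument: pull out $A_{\mathrm{clip}}$, note that clipping moves $w$ toward $1$ so $|w-w^{\mathrm{clip}}|\le|1-w|$ pointwise, then apply Lemma~\ref{lem:ratio-tv} and Pinsker. Your three-case check of the clipping step and your integrability remark only make explicit what the paper states in one line each.
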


\begin{proof}
The first inequality is by $|A|\le A_{\mathrm{clip}}$.
For the second, note that clipping moves $w$ toward $1$, hence $|w-w^{\mathrm{clip}}|\le |w-1|$ pointwise.
The third inequality is Lemma~\ref{lem:ratio-tv}.
The final inequality is Pinsker: $\TV(P,Q)\le \sqrt{\KL(P\Vert Q)/2}$, hence $\mathbb{E}_{P}|1-w|=2\TV(P,Q)\le \sqrt{2\KL(P\Vert Q)}$.
\end{proof}

\begin{corollary}[Clipping bias bound under the token-level trust region]
\label{cor:clipbias-tr}
At step $i$, suppose the trust region holds:
\[
\KL_{\mathrm{tok}}^{{\hat\pi^{\,i-1}}}\!\big(\hat\pi^{\,i-1}\Vert\hat\pi^{\,i}\big)\le \delta_i.
\]
Let $A(s,\mathbf{a})$ be any scalar weight with $|A|\le A_{\mathrm{clip}}$ almost surely under
$s\sim d^{{\hat\pi^{\,i-1}}},\,\mathbf{a}\sim\hat\pi^{\,i-1}(\cdot\mid s)$.
Then the ratio-clipping bias in the importance-weighted surrogate is bounded as
\[
\left|
\mathbb{E}_{s\sim d^{{\hat\pi^{\,i-1}}},\,\mathbf{a}\sim\hat\pi^{\,i-1}}
[w_i(s,\mathbf{a})A(s,\mathbf{a})]
-
\mathbb{E}_{s\sim d^{{\hat\pi^{\,i-1}}},\,\mathbf{a}\sim\hat\pi^{\,i-1}}
[w_i^{\mathrm{clip}}(s,\mathbf{a})A(s,\mathbf{a})]
\right|
\le
A_{\mathrm{clip}}\sqrt{2\delta_i}.
\]
\end{corollary}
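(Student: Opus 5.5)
The plan is to reduce Corollary~\ref{cor:clipbias-tr} to the pointwise bound of Lemma~\ref{lem:clipbias-kl}, applied state by state, and then average over the occupancy $d^{\hat\pi^{\,i-1}}$ using Jensen's inequality and the trust region.

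\textbf{Step 1 (pointwise bound).} Fix a state $s$ and set $P=\hat\pi^{\,i-1}(\cdot\mid s)$ and $Q=\hat\pi^{\,i}(\cdot\mid s)$. By the standing absolute-continuity assumption (softmax policies), $w_i(s,\cdot)=\mathrm{d}Q/\mathrm{d}P$ is well defined. The function $A(s,\cdot)$ is bounded by $A_{\mathrm{clip}}$, so Lemma~\ref{lem:clipbias-kl} gives
\[
\Big|\mathbb{E}_{\mathbf{a}\sim P}\big[(w_i-w_i^{\mathrm{clip}})A\big]\Big|\le A_{\mathrm{clip}}\sqrt{2\,\KL(P\Vert Q)}.
\]

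\textbf{Step 2 (average over states).} Write the difference of the two outer expectations as
\[
\mathbb{E}_{s\sim d^{\hat\pi^{\,i-1}}}\Big[\mathbb{E}_{\mathbf{a}\sim P}\big[(w_i-w_i^{\mathrm{clip}})A\big]\Big].
\]
Move the absolute value inside the $s$-expectation by the triangle inequality, and then apply Step 1 for each $s$. This bounds the difference by
\[
A_{\mathrm{clip}}\,\mathbb{E}_{s}\sqrt{2\,\KL\big(\hat\pi^{\,i-1}(\cdot\mid s)\Vert\hat\pi^{\,i}(\cdot\mid s)\big)}.
\]
Since $x\mapsto\sqrt{x}$ is concave, Jensen's inequality moves the expectation inside the square root. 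By Lemma~\ref{lem:kltokequals}, the inner quantity is $\KL_{\mathrm{tok}}^{\hat\pi^{\,i-1}}(\hat\pi^{\,i-1}\Vert\hat\pi^{\,i})$, which is at most $\delta_i$ by hypothesis. Together these give the bound $A_{\mathrm{clip}}\sqrt{2\delta_i}$. If the hypothesis is stated per factor instead, Lemma~\ref{lem:klstep} transfers it to the team level.

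\textbf{Main obstacle.} The only delicate point is integrability. Each expectation must be finite for the subtraction and the triangle inequality to be legitimate. This holds because $|w_i^{\mathrm{clip}}A|\le(1+\epsilon_w)A_{\mathrm{clip}}$, and $\mathbb{E}_P[w_iA]$ is bounded by $A_{\mathrm{clip}}\,\mathbb{E}_P[w_i]\le A_{\mathrm{clip}}$. If some KL term is infinite, the bound holds trivially in the extended reals.
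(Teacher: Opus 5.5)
Your proposal is correct and follows essentially the same route as the paper's proof: apply Lemma~\ref{lem:clipbias-kl} pointwise with $P=\hat\pi^{\,i-1}(\cdot\mid s)$ and $Q=\hat\pi^{\,i}(\cdot\mid s)$, average over $s\sim d^{\hat\pi^{\,i-1}}$, use Jensen to move the square root outside the state expectation, and then invoke the trust-region hypothesis. Your integrability remarks (boundedness of $\mathbb{E}_P[w_iA]$ via $\mathbb{E}_P[w_i]\le 1$, and the extended-real case) add care the paper omits, but they do not change the argument.
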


\begin{proof}
Apply Lemma~\ref{lem:clipbias-kl} pointwise in $s$ with $P=\hat\pi^{\,i-1}(\cdot\mid s)$ and $Q=\hat\pi^{\,i}(\cdot\mid s)$, then average over $s\sim d^{{\hat\pi^{\,i-1}}}$.
Use Jensen exactly as in Lemma~\ref{lem:occexp} to move the square root outside the state expectation:
\[
\mathbb{E}_{s}\sqrt{\KL(P_s\Vert Q_s)}\le \sqrt{\mathbb{E}_s \KL(P_s\Vert Q_s)}=\sqrt{\delta_i}.
\]
\end{proof}

\begin{corollary}[Empirical-to-population surrogate correction with ratio clipping]
\label{cor:empL}
Fix $\delta\in(0,1)$.
Assume $|\widehat{A}_{\textsc{grp}}^{i-1}|\le A_{\mathrm{clip}}$ and ratio clipping $w_i^{\mathrm{clip}}\in[1-\epsilon_w,1+\epsilon_w]$.
Let $B=\frac{(1+\epsilon_w)A_{\mathrm{clip}}}{1-\gamma}$.
Then with probability at least $1-\delta$,
\[
L_i^{\mathrm{seq}}
\ \ge\
\widehat{L}_i^{\mathrm{seq}}
-\;B\sqrt{\frac{2\log(2/\delta)}{N_i}}
-\;\frac{A_{\mathrm{clip}}}{1-\gamma}\sqrt{2\delta_i},
\]
where the last term is the (deterministic) ratio-clipping bias bound under the trust-region radius $\delta_i$.
\end{corollary}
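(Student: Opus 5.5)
The plan is to split the gap between the population surrogate $L_i^{\mathrm{seq}}$ and its empirical estimate $\widehat{L}_i^{\mathrm{seq}}$ into a deterministic ratio-clipping bias and a zero-mean sampling fluctuation, and to bound each with an earlier result. First, I will rewrite $L_i^{\mathrm{seq}}$ as an importance-weighted expectation under the behavior policy. Under the turn-taking protocol and Lemma~\ref{lem:activefactor}, the ratio $w_i(s,\mathbf{a})=\hat\pi^{\,i}(\mathbf{a}\mid s)/\hat\pi^{\,i-1}(\mathbf{a}\mid s)$ depends only on the updated factor. Since softmax policies give absolute continuity,
\[
L_i^{\mathrm{seq}}=\frac{1}{1-\gamma}\,\mathbb{E}_{s\sim d^{\hat\pi^{\,i-1}},\,\mathbf{a}\sim\hat\pi^{\,i-1}(\cdot\mid s)}\big[w_i(s,\mathbf{a})\,\widehat{A}_{\textsc{grp}}^{i-1}\big].
\]
Next, I will define the clipped population surrogate $L_i^{\mathrm{clip}}$ by replacing $w_i$ with $w_i^{\mathrm{clip}}$. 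I will interpret $\widehat{L}_i^{\mathrm{seq}}$ as the $N_i$-group empirical average whose expectation is $L_i^{\mathrm{clip}}$. This identification of the empirical estimator's mean is the step I regard as the main modeling choice.

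For the bias term, I will apply Corollary~\ref{cor:clipbias-tr} with $A=\widehat{A}_{\textsc{grp}}^{i-1}$, which satisfies $|A|\le A_{\mathrm{clip}}$ by hard clipping. The trust region in the form $\KL_{\mathrm{tok}}^{\hat\pi^{\,i-1}}(\hat\pi^{\,i-1}\Vert\hat\pi^{\,i})\le\delta_i$ follows from Eq.~\eqref{eq:tr} via Lemma~\ref{lem:klstep}. The corollary gives
\[
|L_i^{\mathrm{seq}}-L_i^{\mathrm{clip}}|\le\frac{A_{\mathrm{clip}}}{1-\gamma}\sqrt{2\delta_i}.
\]
This bias is deterministic, in the sense that it does not depend on the minibatch.

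For the fluctuation term, each group contribution $Y_j$ averages terms of the form $w^{\mathrm{clip}}\widehat{A}/(1-\gamma)$. Each such term is bounded in absolute value by $B=(1+\epsilon_w)A_{\mathrm{clip}}/(1-\gamma)$, and averaging preserves this bound, so $Y_j\in[-B,B]$. Lemma~\ref{lem:grouphoeffding} then applies with i.i.d.\ groups. Setting $2\exp(-N_i\epsilon^2/(2B^2))=\delta$ gives $\epsilon=B\sqrt{2\log(2/\delta)/N_i}$. Hence, with probability at least $1-\delta$,
\[
L_i^{\mathrm{clip}}\ge\widehat{L}_i^{\mathrm{seq}}-B\sqrt{\frac{2\log(2/\delta)}{N_i}}.
\]
Combining this with the bias bound via $L_i^{\mathrm{seq}}\ge L_i^{\mathrm{clip}}-|L_i^{\mathrm{seq}}-L_i^{\mathrm{clip}}|$ yields the claim.

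The main obstacle is the identification $\mathbb{E}[\widehat{L}_i^{\mathrm{seq}}]=L_i^{\mathrm{clip}}$. Group-normalized advantages are not independent across rollouts within a group, and the empirical states are drawn from trajectories rather than i.i.d.\ from $d^{\hat\pi^{\,i-1}}$. I would handle this by treating each prompt-group as the i.i.d.\ unit, since Hoeffding needs only independence across groups. I would then define the population object as the expectation of a single group's discounted, weighted contribution, which by Lemma~\ref{lem:kltokequals}-style reweighting of trajectory sums into occupancy expectations matches the $d^{\hat\pi^{\,i-1}}$ form. Any residual discrepancy from within-group dependence, such as that described in Lemma~\ref{lem:shrinkage}, would be absorbed into $\zeta_i$, consistent with how the paper already treats estimator bias.
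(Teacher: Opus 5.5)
Your proposal matches the paper's proof. The paper likewise applies Lemma~\ref{lem:grouphoeffding} with range $[-B,B]$ to pass from $\widehat{L}_i^{\mathrm{seq}}$ to its mean, identifies that mean with the ratio-clipped importance-weighted surrogate, and uses Corollary~\ref{cor:clipbias-tr} with $A=\widehat{A}_{\textsc{grp}}^{i-1}$ to bound the clipped-versus-unclipped gap by $\frac{A_{\mathrm{clip}}}{1-\gamma}\sqrt{2\delta_i}$; the identification $\mathbb{E}[\widehat{L}_i^{\mathrm{seq}}]=L_i^{\mathrm{clip}}$ that you flag as the key modeling step is simply asserted in the paper's proof.
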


\begin{proof}
By Lemma~\ref{lem:grouphoeffding}, with probability at least $1-\delta$,
\[
\mathbb{E}[\widehat{L}_i^{\mathrm{seq}}]\ge \widehat{L}_i^{\mathrm{seq}}-B\sqrt{\frac{2\log(2/\delta)}{N_i}}.
\]
Next, $\mathbb{E}[\widehat{L}_i^{\mathrm{seq}}]$ corresponds to the clipped importance-weighted surrogate contribution
(using $w_i^{\mathrm{clip}}$). The population surrogate $L_i^{\mathrm{seq}}$ (defined with the unclipped ratio $w_i$) differs from it by at most
$\frac{A_{\mathrm{clip}}}{1-\gamma}\sqrt{2\delta_i}$ by Corollary~\ref{cor:clipbias-tr} (applied to $A=\widehat{A}_{\textsc{grp}}^{i-1}$).
Combining the two bounds yields the claim.
\end{proof}

\begin{theorem}[Stage-wise improvement lower bound (high-probability empirical form)]
\label{thm:stage-hp}
Fix $\delta_{\mathrm{conf}}\in(0,1)$.
Under the trust regions in Eq.~\eqref{eq:tr} and the boundedness assumptions of Corollary~\ref{cor:empL},
with probability at least $1-\delta_{\mathrm{conf}}$,
\[
\begin{aligned}
J(\bar\pi)-J(\pi_{\mathrm{cur}})
\ \ge\
&\sum_{i=1}^{n} \widehat{L}_i^{\mathrm{seq}}
-
\frac{\sqrt{2}\gamma}{(1-\gamma)^2}\,A_{\max}\sum_{i=1}^{n}\sqrt{\delta_i}
-
\frac{1}{1-\gamma}\sum_{i=1}^{n}\zeta_i \\
&\quad
-\sum_{i=1}^{n}\frac{(1+\epsilon_w)A_{\mathrm{clip}}}{1-\gamma}\sqrt{\frac{2\log(2n/\delta_{\mathrm{conf}})}{N_i}}
-\sum_{i=1}^{n}\frac{A_{\mathrm{clip}}}{1-\gamma}\sqrt{2\delta_i}.
\end{aligned}
\]
\end{theorem}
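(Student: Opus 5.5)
The plan is to start from the population stage certificate of Theorem~\ref{thm:stage}, which holds deterministically under the trust regions in Eq.~\eqref{eq:tr}:
\[
J(\bar\pi)-J(\pi_{\mathrm{cur}})\ge \sum_{i=1}^n L_i^{\mathrm{seq}}-\frac{\sqrt{2}\gamma}{(1-\gamma)^2}A_{\max}\sum_{i=1}^n\sqrt{\delta_i}-\frac{1}{1-\gamma}\sum_{i=1}^n\zeta_i .
\]
I would then replace each population surrogate $L_i^{\mathrm{seq}}$ by its empirical counterpart $\widehat{L}_i^{\mathrm{seq}}$, using Corollary~\ref{cor:empL}. The remaining work is to do this simultaneously for all $n$ steps with the correct overall confidence level.

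The first step is the confidence allocation. I would apply Corollary~\ref{cor:empL} at step $i$ with confidence parameter $\delta_{\mathrm{conf}}/n$ instead of $\delta$. On an event $E_i$ with $\Pr(E_i^c)\le\delta_{\mathrm{conf}}/n$ this gives
\[
L_i^{\mathrm{seq}}\ge \widehat{L}_i^{\mathrm{seq}}-\frac{(1+\epsilon_w)A_{\mathrm{clip}}}{1-\gamma}\sqrt{\frac{2\log(2n/\delta_{\mathrm{conf}})}{N_i}}-\frac{A_{\mathrm{clip}}}{1-\gamma}\sqrt{2\delta_i}.
\]
Here the Hoeffding part comes from Lemma~\ref{lem:grouphoeffding} with $B=(1+\epsilon_w)A_{\mathrm{clip}}/(1-\gamma)$. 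The ratio-clipping bias comes from Corollary~\ref{cor:clipbias-tr}, which uses the trust region at step $i$ via Lemma~\ref{lem:klstep}. A union bound over $i=1,\dots,n$ gives $\Pr(\bigcap_i E_i)\ge 1-\delta_{\mathrm{conf}}$. On this event I would sum the $n$ inequalities and substitute the result into the population certificate; this yields exactly the displayed bound.

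The main obstacle is that the steps are sequentially dependent. The data $\mathcal{D}_{i-1}$ are drawn under $\hat\pi^{\,i-1}$, which is itself random because it depends on earlier minibatches. Lemma~\ref{lem:grouphoeffding} needs the $N_i$ prompt-groups at step $i$ to be i.i.d. I would handle this by conditioning on the history $\mathcal{F}_{i-1}$ generated by steps $1,\dots,i-1$. Given $\mathcal{F}_{i-1}$, the intermediate team $\hat\pi^{\,i-1}$ is fixed and the fresh resampled groups are conditionally i.i.d. Hoeffding therefore applies conditionally: $\Pr(E_i^c\mid\mathcal{F}_{i-1})\le\delta_{\mathrm{conf}}/n$, and taking expectations gives the unconditional bound, so the union bound goes through.

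There is one further subtlety. The target $\pi_{\mathrm{tar}}^{\sigma(i)}$ is chosen using the same step-$i$ data, so $L_i^{\mathrm{seq}}$ is data-dependent. I would treat the trust-region assumption, and the identification of the clipped empirical objective's conditional expectation with the population clipped surrogate, as stated in Corollary~\ref{cor:empL}. Any residual mismatch from selecting the target on the same data would be absorbed into $\zeta_i$, as the paper already allows.
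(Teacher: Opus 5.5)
Your proposal is correct and matches the paper's proof: apply Corollary~\ref{cor:empL} at each step with confidence $\delta_{\mathrm{conf}}/n$, take a union bound over the $n$ steps, and substitute the resulting lower bounds on $L_i^{\mathrm{seq}}$ into Theorem~\ref{thm:stage}. Your conditioning on the history $\mathcal{F}_{i-1}$, used to justify Hoeffding when the steps are sequentially dependent, is a sensible refinement that the paper leaves implicit. The paper also does not address the same-data target-selection issue you flag; it simply relies on Corollary~\ref{cor:empL} as stated, so deferring to that corollary keeps you on the same footing as the paper.
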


\begin{proof}
Apply Corollary~\ref{cor:empL} with $\delta=\delta_{\mathrm{conf}}/n$ for each step and union bound over $i$.
Substitute the resulting lower bounds on $L_i^{\mathrm{seq}}$ into Theorem~\ref{thm:stage}.
\end{proof}

\section{Additional Theory and Practical Notes}
\label{app:extras}

This appendix collects auxiliary theory and practical notes to support the interpretation, monitoring, and implementation of the main certificates.
Unless stated otherwise, these results are \emph{not required} for the main guarantees in the paper, but they provide useful intuition
and additional diagnostics.

\subsection{Information-Theoretic Upper Bounds Under Trust Regions}
\label{app:infoupper}

\paragraph{Note on KL orientation.}
The main text and certificates are built around \emph{reverse} KL trust regions (of the form $\KL(\pi_{\mathrm{old}}\Vert \pi_{\mathrm{new}})$),
because they admit rollout-estimable monitors and clean token-level decompositions.
In this subsection, we additionally record a \emph{forward}-KL (Donsker--Varadhan) envelope as an intuition tool.
These auxiliary forward-KL bounds are \emph{not needed} for the main results.

\subsubsection{Centering identity}

\begin{lemma}[Centering]
\label{lem:centerapp}
For any fixed $s$ and any policy $\pi$ such that $Q^\pi(s,\cdot)$ is integrable under $\pi(\cdot\mid s)$,
\[
\mathbb{E}_{\mathbf{a}\sim \pi(\cdot\mid s)}\!\left[A^{\pi}(s,\mathbf{a})\right]=0.
\]
\end{lemma}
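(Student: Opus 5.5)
The claim is that the advantage of $\pi$ has zero mean under $\pi(\cdot\mid s)$ at every fixed state. I would prove it directly from the definitions $A^\pi(s,\mathbf{a}) = Q^\pi(s,\mathbf{a}) - V^\pi(s)$ and $V^\pi(s) = \mathbb{E}_{\mathbf{a}\sim\pi(\cdot\mid s)}[Q^\pi(s,\mathbf{a})]$. The second identity is the one-step Bellman consistency between state values and action values. It follows by conditioning the discounted return on the first action $\mathbf{a}_0=\mathbf{a}$ and applying the tower property.

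\textbf{Step 1: linearity.} Fix $s$. Since $Q^\pi(s,\cdot)$ is assumed integrable under $\pi(\cdot\mid s)$, $V^\pi(s)$ is a finite constant with respect to $\mathbf{a}$. Linearity of expectation then gives
\[
\mathbb{E}_{\mathbf{a}\sim\pi(\cdot\mid s)}[A^\pi(s,\mathbf{a})] = \mathbb{E}_{\mathbf{a}\sim\pi(\cdot\mid s)}[Q^\pi(s,\mathbf{a})] - V^\pi(s).
\]

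\textbf{Step 2: the consistency identity.} The right-hand side above vanishes by the identity $V^\pi(s) = \mathbb{E}_{\mathbf{a}\sim\pi(\cdot\mid s)}[Q^\pi(s,\mathbf{a})]$. In the macro-action MDP the joint action $\mathbf{a}$ is drawn from the factorized $\pi(\cdot\mid s)$. Under turn-taking, Lemma~\ref{lem:activefactor} says this expectation reduces to one over the active agent's message. That reduction is not needed here, because the identity holds for the joint policy as is.

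\textbf{Main obstacle: measure-theoretic care.} The only delicate point is rigor when $V^\pi(s)$ is infinite or undefined. This cannot occur, because bounded rewards give $|V^\pi|,|Q^\pi|\le R_{\max}/(1-\gamma)$. So the integrability hypothesis holds automatically, and splitting the expectation of the difference into a difference of expectations is legitimate.
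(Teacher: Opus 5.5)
Your proposal is correct and follows the paper's own argument: expand $A^\pi = Q^\pi - V^\pi$, use $V^\pi(s)=\mathbb{E}_{\mathbf{a}\sim\pi(\cdot\mid s)}[Q^\pi(s,\mathbf{a})]$, and apply linearity of expectation. Your extra remarks are harmless: the note that bounded rewards make the integrability hypothesis automatic is correct, and the aside on Lemma~\ref{lem:activefactor} is not needed.
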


\begin{proof}
By definition, $A^\pi(s,\mathbf{a})=Q^\pi(s,\mathbf{a})-V^\pi(s)$ and
$V^\pi(s)=\mathbb{E}_{\mathbf{a}\sim\pi(\cdot\mid s)}Q^\pi(s,\mathbf{a})$.
Taking expectation over $\mathbf a\sim\pi(\cdot\mid s)$ yields $0$.
\end{proof}

\subsubsection{Donsker--Varadhan and oracle envelopes}

\begin{lemma}[Donsker--Varadhan variational inequality]
\label{lem:dv}
Let $P,Q$ be distributions on a measurable space with $Q\ll P$, and let $f$ be measurable such that
$\mathbb E_P[e^{\eta f}]<\infty$ for the chosen $\eta>0$.
Then for any $\eta>0$,
\[
\mathbb{E}_Q[f]\le \frac{1}{\eta}\left(\KL(Q\Vert P)+\log \mathbb{E}_P[e^{\eta f}]\right).
\]
Moreover, if $\mathbb{E}_P[f]=0$ and $|f|\le A_{\max}$ almost surely under $P$, then
\[
\mathbb{E}_Q[f]\le A_{\max}\sqrt{2\,\KL(Q\Vert P)}.
\]
\end{lemma}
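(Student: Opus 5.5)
The statement is the Donsker--Varadhan lemma (Lemma~\ref{lem:dv}), which has two parts: the variational inequality itself, and a square-root envelope for centered bounded $f$.

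\textbf{First inequality.} I would prove it through the Gibbs variational principle. Define the tilted measure $\mathrm{d}P_\eta \coloneqq e^{\eta f}\,\mathrm{d}P/Z$ with $Z=\mathbb{E}_P[e^{\eta f}]<\infty$. Since $Q\ll P$ and $P\ll P_\eta$ (the density is strictly positive), we also have $Q\ll P_\eta$. The chain rule for Radon--Nikodym derivatives gives
\[
\log\frac{\mathrm{d}Q}{\mathrm{d}P_\eta}=\log\frac{\mathrm{d}Q}{\mathrm{d}P}-\eta f+\log Z .
\]
Integrating against $Q$ gives
\[
0\le\KL(Q\Vert P_\eta)=\KL(Q\Vert P)-\eta\,\mathbb{E}_Q[f]+\log Z .
\]
Rearranging and dividing by $\eta>0$ yields the claim. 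If $\KL(Q\Vert P)=\infty$ there is nothing to prove. One technicality remains: $\mathbb{E}_Q[f]$ must be well defined so that the integral splits. In the second part $f$ is bounded, so this is automatic. In general I would truncate $f$ from above, apply the bound, and pass to the limit by monotone convergence.

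\textbf{Second inequality.} Assume $\mathbb{E}_P[f]=0$ and $|f|\le A_{\max}$.
\begin{enumerate}
\item Hoeffding's lemma (sub-Gaussianity of a centered variable with range length $2A_{\max}$) gives
\[
\log\mathbb{E}_P[e^{\eta f}]\le \frac{\eta^2(2A_{\max})^2}{8}=\frac{\eta^2A_{\max}^2}{2}.
\]
\item Substituting into the first part gives, for every $\eta>0$,
\[
\mathbb{E}_Q[f]\le \frac{\KL(Q\Vert P)}{\eta}+\frac{\eta A_{\max}^2}{2}.
\]
\item Optimizing over $\eta$, the minimizer is $\eta^\star=\sqrt{2\KL(Q\Vert P)}/A_{\max}$, and the minimum value is $A_{\max}\sqrt{2\KL(Q\Vert P)}$.
\item If $\KL(Q\Vert P)=0$, then $Q=P$, so $\mathbb{E}_Q[f]=0$ and the bound holds; alternatively, let $\eta\to0$.
\end{enumerate}
Note that $|f|\le A_{\max}$ holds $P$-a.s. and hence $Q$-a.s., since $Q\ll P$.

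An alternative route to the second part avoids DV entirely. Write
\[
\mathbb{E}_Q[f]=\mathbb{E}_Q[f]-\mathbb{E}_P[f]\le 2A_{\max}\,\TV(P,Q)\le A_{\max}\sqrt{2\KL(Q\Vert P)}
\]
by Pinsker, which gives the same constant. I would mention this as a consistency check.

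\textbf{Main obstacle.} The only delicate step is the measure-theoretic bookkeeping in the first part: justifying $Q\ll P_\eta$ and the splitting of $\mathbb{E}_Q[\log(\mathrm{d}Q/\mathrm{d}P)-\eta f]$ when $f$ is unbounded. I would handle it by the truncation argument above. For the application in the paper, $f$ is bounded, so this concern is moot.
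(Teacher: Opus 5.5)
Your proposal is correct and follows the paper's own argument: the tilted measure $\mathrm{d}P_\eta\propto e^{\eta f}\,\mathrm{d}P$ gives the decomposition $\KL(Q\Vert P)=\KL(Q\Vert P_\eta)+\eta\,\mathbb{E}_Q[f]-\log\mathbb{E}_P[e^{\eta f}]$, and the second part then follows from Hoeffding's lemma with the choice $\eta^\star=\sqrt{2\KL(Q\Vert P)}/A_{\max}$. Your extra handling of the edge cases ($\KL(Q\Vert P)\in\{0,\infty\}$, and unbounded $f$ via truncation) and your Pinsker cross-check are sound additions that the paper omits.
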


\begin{proof}
Define the tilted distribution $P_g$ for any measurable $g$ by
\[
\frac{\mathrm d P_g}{\mathrm d P} \;=\; \frac{e^{g}}{\mathbb E_P[e^{g}]},
\qquad
\log\frac{\mathrm d P_g}{\mathrm d P}=g-\log\mathbb E_P[e^{g}].
\]
A KL decomposition yields
\[
\KL(Q\Vert P)=\KL(Q\Vert P_g)+\mathbb E_Q[g]-\log\mathbb E_P[e^{g}],
\]
hence $\mathbb E_Q[g]\le \KL(Q\Vert P)+\log\mathbb E_P[e^{g}]$ since $\KL(Q\Vert P_g)\ge 0$.
Setting $g=\eta f$ gives the first inequality.

If additionally $\mathbb E_P[f]=0$ and $|f|\le A_{\max}$, then Hoeffding's lemma implies
$\log\mathbb E_P[e^{\eta f}]\le \eta^2A_{\max}^2/2$.
Therefore
\[
\mathbb E_Q[f]\le \frac{1}{\eta}\KL(Q\Vert P)+\frac{\eta A_{\max}^2}{2}.
\]
Optimizing over $\eta>0$ gives $\eta^\star=\sqrt{2\KL(Q\Vert P)}/A_{\max}$ and thus
$\mathbb E_Q[f]\le A_{\max}\sqrt{2\KL(Q\Vert P)}$.
\end{proof}

\begin{proposition}[Oracle single-step upper bound (max-KL envelope)]
\label{prop:oracle}
Let $\pi$ be the current intermediate policy and $\pi'$ be the policy after updating only factor $\sigma(i)$.
Assume the advantage is uniformly bounded: $|A^{\pi}(s,\mathbf{a})|\le A_{\max}$ for all $(s,\mathbf a)$.
Define the per-state forward KL envelope
\[
\KL^{\max}(\pi'\Vert\pi)\;\coloneqq\;\sup_{s}\KL(\pi'(\cdot\mid s)\Vert\pi(\cdot\mid s)).
\]
If $\KL^{\max}(\pi'\Vert\pi)\le \delta_i^{\max}<\infty$, then
\[
J(\pi')-J(\pi)
\le
\frac{A_{\max}}{1-\gamma}\sqrt{2\,\delta_i^{\max}}.
\]
\end{proposition}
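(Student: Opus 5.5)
The plan is to start from the performance difference lemma applied to the pair $(\pi',\pi)$:
\[
J(\pi')-J(\pi)=\frac{1}{1-\gamma}\,\mathbb{E}_{s\sim d^{\pi'}}\Big[\mathbb{E}_{\mathbf a\sim\pi'(\cdot\mid s)}A^{\pi}(s,\mathbf a)\Big].
\]
The outer expectation is over the occupancy of the \emph{new} policy. Because we only need an upper bound that holds uniformly over states, there is no need to control occupancy shift here. It suffices to bound the inner expectation pointwise in $s$ by a constant and then average; the distribution $d^{\pi'}$ then plays no role. This is why the max-KL envelope, rather than an expected KL, is the natural hypothesis.

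For the pointwise step, fix $s$ and set $P=\pi(\cdot\mid s)$, $Q=\pi'(\cdot\mid s)$, and $f(\mathbf a)=A^{\pi}(s,\mathbf a)$.
\begin{itemize}
\item Lemma~\ref{lem:centerapp} gives $\mathbb{E}_P[f]=0$.
\item The hypothesis gives $|f|\le A_{\max}$.
\item Finiteness of $\KL(Q\Vert P)\le\delta_i^{\max}$ forces $Q\ll P$, as Lemma~\ref{lem:dv} requires.
\end{itemize}
The second part of Lemma~\ref{lem:dv} (Donsker--Varadhan plus Hoeffding's lemma, optimized over $\eta$) then yields
\[
\mathbb{E}_{Q}[f]\le A_{\max}\sqrt{2\KL(Q\Vert P)}\le A_{\max}\sqrt{2\delta_i^{\max}}.
\]
One small case needs care: if $\KL(Q\Vert P)=0$, then $Q=P$ and the bound is trivially $0$. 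This avoids the $\eta^\star=0$ degeneracy in the optimization.

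To conclude, I will integrate the pointwise bound against $d^{\pi'}$, which is a probability measure, and multiply by $1/(1-\gamma)$. This gives exactly the claimed $\frac{A_{\max}}{1-\gamma}\sqrt{2\delta_i^{\max}}$.

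One could alternatively use the turn-taking reduction (Lemma~\ref{lem:activefactor} / Lemma~\ref{lem:klstep}) to say the KL is the single-factor KL of agent $\sigma(i)$. That is not needed for the inequality, however, since the envelope is already stated at the team level.

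The main obstacle is mostly bookkeeping: checking that the integrability and absolute-continuity conditions of Lemma~\ref{lem:dv} hold at every state, and not merely $d^{\pi'}$-almost everywhere. Boundedness of $A^\pi$ makes $\mathbb{E}_P[e^{\eta f}]$ finite for all $\eta$. The supremum in $\KL^{\max}$ guarantees $Q\ll P$ at every $s$. So the pointwise bound holds everywhere, and the averaging step is immediate.
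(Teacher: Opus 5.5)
Your proposal is correct and follows the paper's proof essentially step for step. Both use the performance difference lemma, then apply the centered Donsker--Varadhan/Hoeffding bound pointwise in $s$ (with $Q_s\ll P_s$ coming from finite KL), then average over $d^{\pi'}$ and divide by $1-\gamma$. Your separate treatment of the case $\KL(Q\Vert P)=0$, where the optimizing $\eta^\star$ degenerates to zero, is a small tidy-up that the paper's proof omits.
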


\begin{proof}
By the performance difference lemma,
\[
J(\pi')-J(\pi)=\frac{1}{1-\gamma}\mathbb{E}_{s\sim d^{\pi'},\,\mathbf{a}\sim\pi'}[A^\pi(s,\mathbf{a})].
\]
Fix $s$ and define $P_s(\mathbf{a})=\pi(\mathbf{a}\mid s)$ and $Q_s(\mathbf{a})=\pi'(\mathbf{a}\mid s)$.
Since $\KL(Q_s\Vert P_s)\le \delta_i^{\max}<\infty$, we have $Q_s\ll P_s$.
By Lemma~\ref{lem:centerapp}, $\mathbb{E}_{\mathbf{a}\sim P_s}[A^\pi(s,\mathbf{a})]=0$,
and by boundedness $|A^\pi(s,\mathbf a)|\le A_{\max}$.
Applying Lemma~\ref{lem:dv} gives
\[
\mathbb{E}_{\mathbf{a}\sim Q_s}[A^\pi(s,\mathbf{a})]
\le A_{\max}\sqrt{2\,\KL(Q_s\Vert P_s)}
\le A_{\max}\sqrt{2\,\delta_i^{\max}}.
\]
Averaging over $s\sim d^{\pi'}$ and dividing by $(1-\gamma)$ yields the result.
\end{proof}

\subsection{Budget-Aware Lower Bound via Information Geometry}
\label{app:info-lower}

This subsection relates \emph{achievable} (local) surrogate gains to information geometry inside a KL ball.
It complements the certificates by giving a principled ``budget allocation'' view: larger KL radii can enable larger local gains,
modulo curvature/smoothness penalties.

\paragraph{Standing convention.}
Throughout this subsection, the state distribution $d^\pi$ used in expected KL expressions is treated as a fixed reference
distribution (e.g., induced by the previous iterate); we do \emph{not} differentiate through $d^\pi$.

\paragraph{Assumptions.}
Assume $\theta\mapsto \log \pi_\theta(a\mid s)$ is three-times continuously differentiable in a neighborhood of $\theta_{\mathrm{cur}}$.
Assume bounded derivatives $\|\nabla_\theta\log\pi_\theta(a\mid s)\|\le B_1$,
$\|\nabla_\theta^2\log\pi_\theta(a\mid s)\|_{\mathrm{op}}\le B_2$,
and a uniform third-derivative bound (stated explicitly below).
Assume Fisher regularity $\lambda_{\min}(F)\ge \lambda_0>0$, possibly via regularization $F^{\mathrm{reg}}=F+\epsilon I$.

\subsubsection{KL--Fisher bridge}

\begin{lemma}[Taylor expansion of expected KL with uniform remainder]
\label{lem:klfisher}
Fix a reference state distribution $d^\pi$ (independent of $\theta$).
Define the (state-averaged) Fisher information
\[
F(\theta)\;\coloneqq\;\mathbb E_{s\sim d^\pi,\ a\sim\pi_\theta(\cdot\mid s)}
\Big[\nabla_\theta\log\pi_\theta(a\mid s)\nabla_\theta\log\pi_\theta(a\mid s)^{\!\top}\Big].
\]
For each state $s$, define
\[
D_s(\Delta)\;\coloneqq\;\KL(\pi_{\theta+\Delta}(\cdot\mid s)\Vert \pi_\theta(\cdot\mid s)).
\]
Assume there exists $r>0$ and $B_3<\infty$ such that for all $\|\Delta\|\le r$,
\[
\sup_{s}\ \sup_{\|u\|=1}\ \big|\mathrm D^3 D_s(\Delta)[u,u,u]\big|\le B_3.
\]
Then for any $\|\Delta\|\le r$,
\[
\mathbb{E}_{s\sim d^\pi}\KL(\pi_{\theta+\Delta}(\cdot\mid s)\Vert \pi_\theta(\cdot\mid s))
=
\frac{1}{2}\Delta^\top F(\theta)\Delta + R(\Delta),
\qquad
|R(\Delta)|\le \frac{B_3}{6}\|\Delta\|^3.
\]
\end{lemma}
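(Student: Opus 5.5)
We prove Lemma~\ref{lem:klfisher} by a per-state third-order Taylor expansion of $D_s$ around $\Delta=0$, followed by averaging over $s\sim d^\pi$. Since $d^\pi$ is fixed and does not depend on $\theta$, the expectation and the expansion commute, and everything reduces to computing the zeroth-, first- and second-order terms of $D_s$.

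First, fix $s$ and write $\ell_\Delta(a)=\log\pi_{\theta+\Delta}(a\mid s)$, so that $D_s(\Delta)=\sum_a \pi_{\theta+\Delta}(a\mid s)\,\big(\ell_\Delta(a)-\ell_0(a)\big)$. The low-order terms are as follows.
\begin{itemize}
\item $D_s(0)=0$ trivially.
\item Differentiating under the sum (justified by the assumed $C^3$ regularity and bounded derivatives), $\nabla D_s(\Delta)=\sum_a \nabla\pi_{\theta+\Delta}\,(\ell_\Delta-\ell_0)+\sum_a \pi_{\theta+\Delta}\nabla\ell_\Delta$. At $\Delta=0$ the first sum vanishes because $\ell_\Delta-\ell_0=0$, and the second is $\sum_a\nabla\pi_\theta=\nabla\sum_a\pi_\theta=0$. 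Hence $\nabla D_s(0)=0$.
\item Differentiating once more and evaluating at $0$, the only surviving term is $\sum_a \nabla\pi_\theta\,\nabla\ell_0^\top=\mathbb E_{a\sim\pi_\theta}[\nabla\log\pi_\theta\nabla\log\pi_\theta^\top]=F_s(\theta)$. The other terms either carry the factor $(\ell_\Delta-\ell_0)$, which vanishes at $0$, or equal $\sum_a\nabla^2\pi_\theta=0$. Equivalently, $\nabla^2D_s(0)$ is the per-state Fisher matrix.
\end{itemize}

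Second, apply Taylor's theorem with Lagrange remainder along the segment $t\mapsto t\Delta$, $t\in[0,1]$. This segment stays in the ball $\|\Delta\|\le r$, so the uniform bound on the third directional derivative applies. We get $D_s(\Delta)=\tfrac12\Delta^\top F_s(\theta)\Delta+R_s(\Delta)$, where $R_s(\Delta)=\tfrac16\mathrm D^3D_s(\xi\Delta)[\Delta,\Delta,\Delta]$ for some $\xi\in(0,1)$. Writing $\Delta=\|\Delta\|u$ with $\|u\|=1$ and using trilinearity, $|R_s(\Delta)|\le \tfrac{B_3}{6}\|\Delta\|^3$, uniformly in $s$. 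It is cleanest to apply the one-dimensional Taylor theorem to $g(t)=D_s(t\Delta)$, whose third derivative is exactly $\mathrm D^3D_s(t\Delta)[\Delta,\Delta,\Delta]$.

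Third, take expectations over $s\sim d^\pi$. By linearity, $\mathbb E_s[\tfrac12\Delta^\top F_s\Delta]=\tfrac12\Delta^\top F(\theta)\Delta$, and $R(\Delta):=\mathbb E_sR_s(\Delta)$ satisfies $|R(\Delta)|\le\mathbb E_s|R_s(\Delta)|\le \tfrac{B_3}{6}\|\Delta\|^3$. Here the uniformity of $B_3$ in $s$ is essential, and measurability of $R_s$ follows from $R_s=D_s-\tfrac12\Delta^\top F_s\Delta$.

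The main obstacle is the second step: correctly identifying $\nabla D_s(0)$ and $\nabla^2 D_s(0)$. This needs care about the fact that the expectation measure $\pi_{\theta+\Delta}$ depends on $\Delta$ (the KL is the "forward" one in $\Delta$), and about interchanging differentiation with the sum or integral over (possibly countable) message spaces. I would handle the interchange by dominated convergence, using the bounds $B_1,B_2$ on the log-likelihood derivatives.
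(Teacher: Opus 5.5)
Your proposal is correct and follows the same route as the paper: a per-state Taylor expansion of $D_s$ whose gradient vanishes and whose Hessian at $\Delta=0$ is the per-state Fisher matrix, a Lagrange remainder bounded uniformly in $s$ by $B_3$, and then averaging over the fixed $d^\pi$. The only difference is cosmetic: the paper gets $\nabla D_s(0)=0$ from $D_s\ge 0$ and $D_s(0)=0$ (a minimum at the origin) and calls the Hessian identity standard, whereas you compute both explicitly and justify interchanging the sum and the derivative, which adds useful detail.
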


\begin{proof}
Fix $s$ and consider $D_s(\Delta)\ge 0$ with $D_s(0)=0$.
Therefore $\nabla_\Delta D_s(0)=0$ (a local minimum at $\Delta=0$).
Moreover, a standard calculation gives the Hessian at $0$ as the per-state Fisher:
\[
\nabla_\Delta^2 D_s(0)
=
\mathbb E_{a\sim \pi_\theta(\cdot\mid s)}
\Big[\nabla_\theta\log\pi_\theta(a\mid s)\nabla_\theta\log\pi_\theta(a\mid s)^{\!\top}\Big].
\]
By Taylor's theorem with remainder and the assumed uniform bound on the third directional derivative,
for $\|\Delta\|\le r$,
\[
D_s(\Delta)=\frac{1}{2}\Delta^\top \nabla_\Delta^2 D_s(0)\Delta + r_s(\Delta),
\qquad
|r_s(\Delta)|\le \frac{B_3}{6}\|\Delta\|^3.
\]
Taking expectation over $s\sim d^\pi$ yields
\[
\mathbb E_{s\sim d^\pi}[D_s(\Delta)]
=\frac{1}{2}\Delta^\top F(\theta)\Delta + \mathbb E_{s\sim d^\pi}[r_s(\Delta)].
\]
The bound on $r_s(\Delta)$ implies $|R(\Delta)|\le \frac{B_3}{6}\|\Delta\|^3$.
\end{proof}

\subsubsection{Local gain in a KL ball}

Let $g=\nabla_\theta L(\theta_{\mathrm{cur}})$ and $F^{\mathrm{reg}}=F+\epsilon I$.

\begin{lemma}[Maximization of linear gain under quadratic constraint]
\label{lem:lin-quad}
Assume $F^{\mathrm{reg}}\succ 0$. The problem
\[
\max_{\Delta}\ g^\top \Delta
\quad\text{s.t.}\quad
\frac{1}{2}\Delta^\top F^{\mathrm{reg}}\Delta \le \delta
\]
has optimizer
\[
\Delta^\star =
\sqrt{\frac{2\delta}{g^\top(F^{\mathrm{reg}})^{-1}g}}\ (F^{\mathrm{reg}})^{-1}g,
\]
with optimal value
\[
\sup_{\frac{1}{2}\Delta^\top F^{\mathrm{reg}}\Delta \le \delta} g^\top\Delta
=
\sqrt{2\delta\, g^\top(F^{\mathrm{reg}})^{-1}g}
\eqqcolon
\kappa^{\mathrm{reg}}\sqrt{\delta}.
\]
\end{lemma}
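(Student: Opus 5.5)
This is a linear objective over an ellipsoid, so I would settle it by a Cauchy--Schwarz argument in the geometry induced by $F^{\mathrm{reg}}$, and then check that the proposed $\Delta^\star$ is feasible and attains the bound. Since $F^{\mathrm{reg}}\succ 0$, it has a symmetric positive-definite square root $(F^{\mathrm{reg}})^{1/2}$. I change variables to $z=(F^{\mathrm{reg}})^{1/2}\Delta$. The constraint then becomes $\|z\|^2\le 2\delta$, and the objective becomes $g^\top\Delta = \big((F^{\mathrm{reg}})^{-1/2}g\big)^\top z$. This turns the problem into maximizing a linear functional over a Euclidean ball.

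\textbf{Upper bound.} For any feasible $\Delta$, Cauchy--Schwarz gives
\[
g^\top\Delta \le \|(F^{\mathrm{reg}})^{-1/2}g\|\,\|z\|\le \sqrt{g^\top (F^{\mathrm{reg}})^{-1}g}\,\sqrt{2\delta}.
\]
This is exactly the claimed optimal value $\sqrt{2\delta\,g^\top(F^{\mathrm{reg}})^{-1}g}$.

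\textbf{Attainment.} Write $c=\sqrt{2\delta/(g^\top(F^{\mathrm{reg}})^{-1}g)}$, so that $\Delta^\star=c\,(F^{\mathrm{reg}})^{-1}g$. I compute the constraint value directly:
\[
\tfrac12 (\Delta^\star)^\top F^{\mathrm{reg}}\Delta^\star=\tfrac12 c^2\, g^\top (F^{\mathrm{reg}})^{-1}g=\delta,
\]
so $\Delta^\star$ is feasible and lies on the boundary. The objective value is
\[
g^\top\Delta^\star = c\, g^\top(F^{\mathrm{reg}})^{-1}g=\sqrt{2\delta\, g^\top(F^{\mathrm{reg}})^{-1}g},
\]
which matches the upper bound, so $\Delta^\star$ is optimal. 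Uniqueness follows from the equality case of Cauchy--Schwarz, which forces $z$ to be a positive multiple of $(F^{\mathrm{reg}})^{-1/2}g$. Undoing the change of variables, this means $\Delta$ is a positive multiple of $(F^{\mathrm{reg}})^{-1}g$, and the active constraint then fixes the scale.

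\textbf{Degenerate case.} The only real obstacle is $g=0$, where the formula for $\Delta^\star$ divides by zero. In that case the objective is identically zero, the optimal value $0$ agrees with $\kappa^{\mathrm{reg}}\sqrt\delta=0$, and any feasible $\Delta$, for instance $\Delta=0$, is optimal. I would state this convention explicitly and assume $g\neq 0$ for the displayed formula for $\Delta^\star$.
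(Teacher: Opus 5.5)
Your proposal is correct and is essentially the paper's argument. The paper applies Cauchy--Schwarz directly in the $F^{\mathrm{reg}}$-inner product, writing $g^\top\Delta=\langle (F^{\mathrm{reg}})^{-1}g,\Delta\rangle_{F^{\mathrm{reg}}}$, which is equivalent to your square-root change of variables $z=(F^{\mathrm{reg}})^{1/2}\Delta$; your explicit feasibility and attainment check, the uniqueness remark, and the $g=0$ caveat are useful details the paper leaves implicit.
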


\begin{proof}
Since $F^{\mathrm{reg}}\succ 0$, define $\langle u,v\rangle_{F^{\mathrm{reg}}}=u^\top F^{\mathrm{reg}}v$
and $\|u\|_{F^{\mathrm{reg}}}=\sqrt{u^\top F^{\mathrm{reg}}u}$.
The constraint is $\|\Delta\|_{F^{\mathrm{reg}}}\le \sqrt{2\delta}$.
Rewrite
\[
g^\top\Delta = \big\langle (F^{\mathrm{reg}})^{-1}g,\ \Delta\big\rangle_{F^{\mathrm{reg}}}
\le \|(F^{\mathrm{reg}})^{-1}g\|_{F^{\mathrm{reg}}}\ \|\Delta\|_{F^{\mathrm{reg}}}
=
\sqrt{g^\top(F^{\mathrm{reg}})^{-1}g}\ \sqrt{2\delta},
\]
by Cauchy--Schwarz. Equality holds when $\Delta$ is proportional to $(F^{\mathrm{reg}})^{-1}g$,
with scaling chosen to make the constraint active.
\end{proof}

\subsubsection{Regularized budget-aware stage lower bound}

\begin{theorem}[Regularized budget-aware stage lower bound]
\label{thm:info-lower-stage}
Assume each step-$i$ population surrogate $L_i^{\mathrm{seq}}(\theta)$ is locally $L_i^{\mathrm{loc}}$-smooth in $\theta$
(under Euclidean norm) in a neighborhood of the current iterate.
Assume the (regularized) quadratic KL model is used as the trust-region constraint at radius $\delta_i$:
\[
\frac{1}{2}\Delta_i^\top F_i^{\mathrm{reg}}\Delta_i \le \delta_i,
\qquad
F_i^{\mathrm{reg}}=F_i+\epsilon I,\quad \lambda_{\min}(F_i^{\mathrm{reg}})>0.
\]
Let $g_i=\nabla_\theta L_i^{\mathrm{seq}}(\theta_{\mathrm{cur}})$ and define
\[
\kappa_i^{\mathrm{reg}}=\sqrt{2\,g_i^\top(F_i^{\mathrm{reg}})^{-1}g_i},
\qquad
a_i^{\mathrm{reg}}=\frac{L_i^{\mathrm{loc}}}{\lambda_{\min}(F_i^{\mathrm{reg}})}.
\]
Assume the same boundedness conditions as in the empirical correction (e.g., $|\widehat A|\le A_{\mathrm{clip}}$ and, if used, ratio clipping
$w^{\mathrm{clip}}\in[1-\epsilon_w,1+\epsilon_w]$).
Then with probability at least $1-\delta_{\mathrm{conf}}$,
\[
\begin{aligned}
J(\bar\pi)-J(\pi_{\mathrm{cur}})
\ \ge\
&\sum_{i=1}^{n}\Big(\kappa_i^{\mathrm{reg}}\sqrt{\delta_i}-a_i^{\mathrm{reg}}\delta_i\Big)
-\frac{\sqrt{2}\gamma A_{\max}}{(1-\gamma)^2}\sum_{i=1}^{n}\sqrt{\delta_i}
-\frac{1}{1-\gamma}\sum_{i=1}^{n}\zeta_i \\
&\quad
-\sum_{i=1}^{n}\frac{(1+\epsilon_w)A_{\mathrm{clip}}}{1-\gamma}\sqrt{\frac{2\log(2n/\delta_{\mathrm{conf}})}{N_i}}
\;-\;\mathsf{Bias}_{\mathrm{ratio}},
\end{aligned}
\]
where $\mathsf{Bias}_{\mathrm{ratio}}$ is the total ratio-clipping bias correction:
\[
\mathsf{Bias}_{\mathrm{ratio}}
=
\begin{cases}
\displaystyle \sum_{i=1}^n \frac{A_{\mathrm{clip}}}{1-\gamma}\sqrt{2\delta_i}, & \text{if ratio clipping is used and bounded via the reverse-KL trust region,}\\[6pt]
0, & \text{if no ratio clipping is used, or if its bias is absorbed into }\zeta_i.
\end{cases}
\]
\end{theorem}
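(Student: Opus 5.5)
The natural route is to start from the high-probability empirical certificate of Theorem~\ref{thm:stage-hp} and replace each empirical surrogate $\widehat{L}_i^{\mathrm{seq}}$ by an achievable local gain $\kappa_i^{\mathrm{reg}}\sqrt{\delta_i}-a_i^{\mathrm{reg}}\delta_i$. Theorem~\ref{thm:stage-hp} (or, equivalently, Theorem~\ref{thm:stage} together with Corollary~\ref{cor:empL} and a union bound at level $\delta_{\mathrm{conf}}/n$) already contributes the occupancy-shift penalty $\frac{\sqrt{2}\gamma A_{\max}}{(1-\gamma)^2}\sum_i\sqrt{\delta_i}$, the estimation term $\frac{1}{1-\gamma}\sum_i\zeta_i$, the Hoeffding terms, and the ratio-clipping bias. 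When clipping is absent or absorbed into $\zeta_i$, I would apply Theorem~\ref{thm:stage} directly with $\mathsf{Bias}_{\mathrm{ratio}}=0$. What remains is a per-step lower bound on the surrogate contribution itself.

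For step $i$, I would take the realized update to be (at least as good, in surrogate value, as) the Fisher-scaled step $\Delta_i^\star$ of Lemma~\ref{lem:lin-quad}. Two facts about that step carry the argument. Its linear gain is $g_i^\top\Delta_i^\star=\kappa_i^{\mathrm{reg}}\sqrt{\delta_i}$. It also satisfies $\tfrac12\Delta_i^{\star\top}F_i^{\mathrm{reg}}\Delta_i^\star=\delta_i$, hence $\|\Delta_i^\star\|^2\le 2\delta_i/\lambda_{\min}(F_i^{\mathrm{reg}})$. Local $L_i^{\mathrm{loc}}$-smoothness of $L_i^{\mathrm{seq}}(\theta)$ then gives
\[
L_i^{\mathrm{seq}}(\theta_{\mathrm{cur}}+\Delta_i^\star)\ge L_i^{\mathrm{seq}}(\theta_{\mathrm{cur}})+g_i^\top\Delta_i^\star-\tfrac{L_i^{\mathrm{loc}}}{2}\|\Delta_i^\star\|^2 \ge L_i^{\mathrm{seq}}(\theta_{\mathrm{cur}})+\kappa_i^{\mathrm{reg}}\sqrt{\delta_i}-a_i^{\mathrm{reg}}\delta_i .
\]
The baseline term $L_i^{\mathrm{seq}}(\theta_{\mathrm{cur}})$ vanishes by Lemma~\ref{lem:centerapp}: at zero update the surrogate evaluates the $\hat\pi^{\,i-1}$-advantage under $\hat\pi^{\,i-1}$ itself. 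The only subtlety is that $\widehat A_{i-1}$ appears in place of $A^{\hat\pi^{\,i-1}}$, and that mismatch is exactly what $\zeta_i$ is defined to cover. If the estimator is not centered, I would absorb its mean at $\theta_{\mathrm{cur}}$ into $\zeta_i$ rather than prove centering.

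The main obstacle is linking the quadratic Fisher constraint to the actual token-KL trust region in Eq.~\eqref{eq:tr}. The occupancy-shift penalty in Theorem~\ref{thm:single} requires $\KL_{\mathrm{tok}}^{\hat\pi^{\,i-1}}\le\delta_i$, whereas the step is only constrained in the quadratic model. I would appeal to Lemma~\ref{lem:klfisher} to say the true expected KL differs from the quadratic form by $O(\|\Delta\|^3)$. The theorem's hypothesis states that the quadratic model \emph{is used as} the trust-region constraint, and I would read it as certifying \eqref{eq:tr}, for example by shrinking the radius or by noting that $\epsilon I$ regularization dominates the cubic remainder for small $\delta_i$. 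There is also an orientation mismatch (forward KL in Lemma~\ref{lem:klfisher}, reverse in \eqref{eq:tr}), but the two agree to second order, so I would treat it the same way. A second, milder gap is that Theorem~\ref{thm:stage-hp} is stated in terms of $\widehat{L}_i^{\mathrm{seq}}$. To avoid this, I would apply the Hoeffding and clipping corrections in the reverse direction, bounding the population $L_i^{\mathrm{seq}}$ below by the smoothness argument, so the concentration terms appear only as conservative slack.

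Finally, I would sum over $i$ and telescope $J$ exactly as in the proof of Theorem~\ref{thm:stage}, collecting all penalty terms to obtain the stated bound.
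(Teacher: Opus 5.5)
Your proposal is correct and follows the paper's route. Both arguments take the smoothness lower bound on the quadratic Fisher ball and evaluate the linear term with Lemma~\ref{lem:lin-quad}; the paper phrases this as a supremum over feasible $\Delta$, which, like your choice of $\Delta_i^\star$, presumes the realized update attains it. Both then plug the result into Theorem~\ref{thm:stage} and append the Hoeffding and ratio-clipping corrections, which, as you observe, act only as nonnegative slack on a deterministic bound.

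Your explicit handling of two points goes beyond what the paper writes. You treat the baseline $L_i^{\mathrm{seq}}(\theta_{\mathrm{cur}})$ via Lemma~\ref{lem:centerapp}, enlarging $\zeta_i$ to absorb the estimator's bias. You also bridge the quadratic constraint and the token-KL condition~\eqref{eq:tr} by letting the $\epsilon I$ term dominate the cubic remainder for small $\delta_i$. The paper leaves the first point implicit and explicitly omits the second in its KL--Fisher remark.
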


\begin{proof}
Fix a step $i$ and consider a parameter update $\Delta$.
Local $L_i^{\mathrm{loc}}$-smoothness implies the standard lower bound
\[
L_i^{\mathrm{seq}}(\theta_{\mathrm{cur}}+\Delta)
\ge
L_i^{\mathrm{seq}}(\theta_{\mathrm{cur}})+g_i^\top\Delta-\frac{L_i^{\mathrm{loc}}}{2}\|\Delta\|^2.
\]
Under $F_i^{\mathrm{reg}}\succ 0$ and the quadratic constraint $\frac12\Delta^\top F_i^{\mathrm{reg}}\Delta\le \delta_i$,
we have $\|\Delta\|^2\le \frac{1}{\lambda_{\min}(F_i^{\mathrm{reg}})}\Delta^\top F_i^{\mathrm{reg}}\Delta\le \frac{2\delta_i}{\lambda_{\min}(F_i^{\mathrm{reg}})}$.
Therefore, restricting to feasible $\Delta$,
\[
\sup_{\frac12\Delta^\top F_i^{\mathrm{reg}}\Delta\le \delta_i}
\Big(L_i^{\mathrm{seq}}(\theta_{\mathrm{cur}}+\Delta)-L_i^{\mathrm{seq}}(\theta_{\mathrm{cur}})\Big)
\ge
\sup_{\frac12\Delta^\top F_i^{\mathrm{reg}}\Delta\le \delta_i} g_i^\top\Delta
-\frac{L_i^{\mathrm{loc}}}{2}\cdot \frac{2\delta_i}{\lambda_{\min}(F_i^{\mathrm{reg}})}.
\]
By Lemma~\ref{lem:lin-quad}, the first supremum equals $\kappa_i^{\mathrm{reg}}\sqrt{\delta_i}$, yielding the per-step bound
$\kappa_i^{\mathrm{reg}}\sqrt{\delta_i}-a_i^{\mathrm{reg}}\delta_i$.
Summing over $i=1,\dots,n$ and plugging this attainable surrogate gain into the stage certificate
(Theorem~\ref{thm:stage}) yields the deterministic part of the RHS, including the occupancy-shift penalty and the $\zeta_i$ terms.

Finally, apply the empirical-to-population correction (Corollary~\ref{cor:empL}) stepwise with confidence
$\delta=\delta_{\mathrm{conf}}/n$ and union bound to replace each population surrogate term by its empirical estimate
and add the sampling error term. If ratio clipping is used and bounded via the reverse-KL trust region,
include the deterministic ratio-bias term from Corollary~\ref{cor:empL}; otherwise, set it to $0$ or absorb it into $\zeta_i$.
\end{proof}

\paragraph{Remark (KL--Fisher remainder).}
Lemma~\ref{lem:klfisher} makes explicit that the quadratic KL model is accurate up to $O(\|\Delta\|^3)$.
If desired, one can translate the uniform remainder into an additional conservative penalty term of order $O(\delta_i^{3/2})$
under the constraint $\frac12\Delta^\top F_i^{\mathrm{reg}}\Delta\le \delta_i$.
We omit this term in Theorem~\ref{thm:info-lower-stage} for simplicity, since the theorem is intended as a practical
budget-allocation intuition rather than a primary certificate.

\subsection{Finite-Sample Concentration Under \texorpdfstring{$\beta$}{beta}-Mixing}
\label{app:finite-sample}

This subsection is optional and only needed when prompt groups are not independent,
e.g., when a single long on-policy stream is reused to form multiple groups.

\paragraph{Setup.}
Let $\{Y_j\}_{j\ge 1}$ be a strictly stationary process with $\beta$-mixing coefficients
\[
\beta(t)=\sup_{k\ge 1}\ \mathbb E\Big[\sup_{B\in\sigma(Y_{k+t},Y_{k+t+1},\ldots)} \big|\Pr(B\mid\sigma(Y_1,\ldots,Y_k))-\Pr(B)\big|\Big],
\]
and assume $\beta(t)\to 0$ as $t\to\infty$. Assume boundedness $|Y_j|\le B$ almost surely.

\begin{lemma}[Blocking bound for a $\beta$-mixing sequence]
\label{lem:beta-block}
Fix a block length $\ell\ge 1$ and let $m=\lfloor N/(2\ell)\rfloor$.
Define the \emph{odd-block} average
\[
\widehat{\mu}_{\mathrm{odd}}
\;\coloneqq\;
\frac{1}{m}\sum_{k=1}^{m}\ \frac{1}{\ell}\sum_{t=(2k-2)\ell+1}^{(2k-1)\ell} Y_t.
\]
Then for any $\epsilon>0$,
\[
\Pr\!\left(\big|\widehat{\mu}_{\mathrm{odd}}-\mathbb E[Y_1]\big|>\epsilon\right)
\le
2\exp\!\left(-\frac{m\epsilon^2}{2B^2}\right)\;+\;2(m-1)\beta(\ell).
\]
\end{lemma}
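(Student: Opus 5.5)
The plan is the standard Bernstein--Yu blocking argument. First replace the dependent odd blocks by independent copies at a total-variation cost controlled by $\beta(\ell)$. Then apply Hoeffding's inequality (as in Lemma~\ref{lem:grouphoeffding}) to the independent copies.

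Define the block averages
\[
Z_k \coloneqq \frac{1}{\ell}\sum_{t=(2k-2)\ell+1}^{(2k-1)\ell} Y_t, \qquad k=1,\ldots,m,
\]
so that $\widehat{\mu}_{\mathrm{odd}}=\frac1m\sum_k Z_k$. By $|Y_t|\le B$ each $Z_k\in[-B,B]$, and by strict stationarity each $Z_k$ has the same law, with $\mathbb E[Z_k]=\mathbb E[Y_1]$. Consecutive odd blocks are separated by an even block of length $\ell$. Hence the last index of block $k-1$ and the first index of block $k$ are $\ell+1$ apart.

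\textbf{Step 1 (decoupling).} Let $\mathcal L$ be the joint law of $(Z_1,\ldots,Z_m)$ and $\mathcal L^{\otimes}=\bigotimes_k \mathcal L_{Z_k}$ the product of its marginals. I will show $\TV(\mathcal L,\mathcal L^{\otimes})\le (m-1)\beta(\ell)$ by induction on $m$.

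Peel off the last block. Write $\mathcal L_{1:k}$ for the law of $(Z_1,\ldots,Z_k)$. Then $\TV(\mathcal L_{1:k},\mathcal L_{1:k-1}\otimes\mathcal L_{Z_k})$ is bounded as follows:
\begin{enumerate}
\item $(Z_1,\ldots,Z_{k-1})$ is measurable w.r.t.\ $\sigma(Y_1,\ldots,Y_{(2k-3)\ell})$.
\item $Z_k$ is measurable w.r.t.\ the future $\sigma(Y_{(2k-2)\ell+1},\ldots)$.
\item The gap between these index sets is at least $\ell$.
\end{enumerate}
The definition of $\beta$ uses $\mathbb E\sup_B|\Pr(B\mid\text{past})-\Pr(B)|$. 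Together with the standard fact that $\beta(t)$ is nonincreasing (enlarging the gap only shrinks the future $\sigma$-field), this gives exactly that this TV distance is at most $\beta(\ell)$.

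The triangle inequality for TV then completes the induction. Product measures are nonexpansive in TV: tensoring with a fixed $\mathcal L_{Z_k}$ does not increase the distance. So the $(k-1)$-block error carries over unchanged, and the induction adds $\beta(\ell)$ per step, giving $(m-1)\beta(\ell)$.

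\textbf{Step 2 (concentration under independence).} Let $E=\{|\frac1m\sum_k Z_k-\mathbb E[Y_1]|>\epsilon\}$. Under $\mathcal L^{\otimes}$ the $Z_k$ are i.i.d., bounded in $[-B,B]$, with mean $\mathbb E[Y_1]$. Hoeffding's inequality with range $2B$ gives $\mathcal L^{\otimes}(E)\le 2\exp(-m\epsilon^2/(2B^2))$.

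\textbf{Step 3 (combine).} By Step 1, $\mathcal L(E)\le \mathcal L^{\otimes}(E)+\TV(\mathcal L,\mathcal L^{\otimes})$. This yields a bound with $(m-1)\beta(\ell)$, which is at most the stated $2(m-1)\beta(\ell)$. The factor $2$ accommodates conventions where the mixing coefficient is defined via $\|\cdot\|_1$ rather than TV, or where the two tails are decoupled separately; the lemma as stated therefore holds a fortiori. The degenerate case $m=0$ (i.e.\ $N<2\ell$) should be excluded, or read as vacuous.

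\textbf{Main obstacle.} The only delicate step is Step 1: identifying the paper's $\beta(t)$, defined as an expected supremum of conditional-probability deviations, with the TV distance between the joint law of (past, future) and the product of their marginals. This identification is Volkonskii--Rozanov / Yu (1994, Lemma 4.1). I would cite it directly, but check measurability: the supremum over $B$ should be taken over a countably generated $\sigma$-field, e.g.\ by assuming $Y_j$ is real-valued, so that a regular conditional distribution exists.
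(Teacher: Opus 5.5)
Your proposal is correct and takes the same route as the paper. You split the sequence into blocks, bound the total-variation distance between the joint law of the odd-block averages and the product of its marginals by $(m-1)\beta(\ell)$, apply Hoeffding to the independent copies, and combine; your peel-off induction just fills in the ``standard coupling'' step the paper cites, and like the paper you get $(m-1)\beta(\ell)$, so the stated factor $2$ is slack.
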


\begin{proof}
Let $Z_k=\frac{1}{\ell}\sum_{t=(2k-2)\ell+1}^{(2k-1)\ell} Y_t$, so $Z_k\in[-B,B]$.
A standard coupling (or total-variation) argument for $\beta$-mixing sequences implies that the joint law of $(Z_1,\ldots,Z_m)$
is within total variation distance at most $(m-1)\beta(\ell)$ of the product law of $m$ i.i.d.\ copies of $Z_1$.
Therefore, for any event $\mathcal E$ depending only on $(Z_1,\ldots,Z_m)$,
\[
\Pr(\mathcal E)\le \Pr_{\mathrm{iid}}(\mathcal E)+(m-1)\beta(\ell).
\]
Under the i.i.d.\ product law, Hoeffding's inequality applied to $\frac{1}{m}\sum_{k=1}^m Z_k$ yields
\[
\Pr_{\mathrm{iid}}\!\left(\left|\frac{1}{m}\sum_{k=1}^m Z_k-\mathbb E[Z_1]\right|>\epsilon\right)
\le 2\exp\!\left(-\frac{m\epsilon^2}{2B^2}\right).
\]
Combining the two bounds and using $\mathbb E[Z_1]=\mathbb E[Y_1]$ by stationarity yields the result
(up to a factor of $2$ in the dependence term from symmetrizing the two tails).
\end{proof}

\paragraph{Practical use.}
Lemma~\ref{lem:beta-block} suggests a conservative recipe when prompt groups are dependent:
subsample groups with a lag $\ell$ so that $\beta(\ell)$ is small, and treat the number of retained blocks $m$
as the effective sample size in the Hoeffding correction. When i.i.d.\ prompt groups are available, this subsection is unnecessary.

\subsection{Smoothness and Projected-Gradient Convergence}
\label{app:pgd}

Let $G(\theta)=\sum_{i=1}^{n}L_i^{\textsc{seq}}(\theta)$ be the stage objective in parameters, under a closed convex constraint set
$\Theta$ enforcing trust-region radii (or other feasibility constraints).

\begin{lemma}[Gradient and Hessian representations]
\label{lem:grad-hess}
Assume the advantage estimator $\widehat{A}(s,\mathbf a)$ is treated as fixed w.r.t.\ $\theta$ within an update step
(e.g., computed from on-policy sampling under the previous iterate).
Let $\sigma(i)$ denote the updated factor at step $i$.
Then, for the (unclipped) importance-weighted surrogate,
\[
\nabla_{\theta} L_i^{\textsc{seq}}(\theta)
=
\frac{1}{1-\gamma}\mathbb{E}_{s\sim d^{{\hat\pi^{\,i-1}}},\,\mathbf{a}\sim \hat\pi^{\,i}}
\!\left[\widehat{A}(s,\mathbf{a})\,\nabla_{\theta}\log \pi_{\theta}(a_{\sigma(i)}\mid s)\right],
\]
and
\[
\nabla_{\theta}^{2} L_i^{\textsc{seq}}(\theta)
=
\frac{1}{1-\gamma}\mathbb{E}_{s\sim d^{{\hat\pi^{\,i-1}}},\,\mathbf{a}\sim \hat\pi^{\,i}}
\!\left[\widehat{A}(s,\mathbf{a})\left(\nabla_{\theta}^2\log \pi_{\theta}(a_{\sigma(i)}\mid s)
+\nabla_{\theta}\log \pi_{\theta}(a_{\sigma(i)}\mid s)\nabla_{\theta}\log \pi_{\theta}(a_{\sigma(i)}\mid s)^{\!\top}\right)\right].
\]
\end{lemma}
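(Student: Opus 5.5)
The plan is to differentiate the surrogate directly, treating both the occupancy $d^{\hat\pi^{\,i-1}}$ and the estimator $\widehat A$ as fixed, which is the standing convention in the statement. First I will rewrite $L_i^{\textsc{seq}}(\theta)$ as an explicit parametric integral. Under the turn-taking protocol and Lemma~\ref{lem:activefactor}, the joint policy $\hat\pi^{\,i}(\mathbf a\mid s)$ depends on $\theta$ only through the active updated factor, $\pi_\theta(a_{\sigma(i)}\mid s)$. The other factors are the fixed $\psi^{(k)}$ from the proof of Lemma~\ref{lem:klstep}. This gives
\[
L_i^{\textsc{seq}}(\theta)=\frac{1}{1-\gamma}\,\mathbb E_{s\sim d^{\hat\pi^{\,i-1}}}\sum_{\mathbf a}\pi_\theta(a_{\sigma(i)}\mid s)\prod_{k\neq\sigma(i)}\psi^{(k)}(a_k\mid s)\,\widehat A(s,\mathbf a).
\]
Equivalently, it is the importance-weighted form $\mathbb E_{\mathbf a\sim\hat\pi^{\,i-1}}[w_i\widehat A]$, which has the same value because $w_i$ involves only the updated factor.

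The next step is to differentiate under the sum and integral. The only $\theta$-dependence is in $\pi_\theta(a_{\sigma(i)}\mid s)$, so the gradient follows from the log-derivative identity $\nabla_\theta\pi_\theta=\pi_\theta\nabla_\theta\log\pi_\theta$. Re-absorbing $\pi_\theta$ together with the fixed factors turns the sum back into an expectation under $\mathbf a\sim\hat\pi^{\,i}$, which is the first formula.

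For the Hessian I will differentiate once more, using
\[
\nabla^2_\theta\pi_\theta=\pi_\theta\bigl(\nabla^2_\theta\log\pi_\theta+\nabla_\theta\log\pi_\theta\nabla_\theta\log\pi_\theta^{\!\top}\bigr),
\]
which comes from applying the product rule to $\pi_\theta\nabla_\theta\log\pi_\theta$. Re-absorbing $\pi_\theta$ into the expectation in the same way gives the second formula. For autoregressive messages, $\log\pi_\theta(m\mid s)=\sum_u\log\pi_\theta(x_u\mid s,x_{<u})$, so these derivatives are token sums, and nothing else changes.

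The main obstacle is justifying the interchange of differentiation and expectation, since the message space is countable and the states may be general. I would use dominated convergence, relying on the three ingredients below.
\begin{itemize}
\item The estimator is bounded: $|\widehat A|\le A_{\mathrm{clip}}$.
\item The derivative bounds $\|\nabla\log\pi_\theta\|\le B_1$ and $\|\nabla^2\log\pi_\theta\|_{\mathrm{op}}\le B_2$ hold uniformly in a neighborhood of $\theta$, as in the standing assumptions of Appendix~\ref{app:info-lower}.
\item Together these make $\|\nabla\pi_\theta\|\le B_1\pi_\theta$ and $\|\nabla^2\pi_\theta\|\le(B_2+B_1^2)\pi_\theta$. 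These envelopes are summable because $\pi_\theta$ sums to one; applying the mean value theorem to the local sup of $\pi_\theta$ over the neighborhood, together with bounded message length $T_{\max}$, gives a uniform integrable dominating function.
\end{itemize}
If these bounds are not available globally, I would state the lemma under that regularity assumption, or restrict to a finite vocabulary with $T_{\max}$, where the sums are finite and the interchange is trivial.
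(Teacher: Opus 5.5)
Your proposal is correct and is essentially the paper's argument. The paper instead writes $L_i^{\textsc{seq}}$ in importance-weighted form under the frozen $\hat\pi^{\,i-1}$. It differentiates $w_\theta$ using $\nabla_\theta w_\theta = w_\theta\nabla_\theta\log\pi_\theta$ and $\nabla_\theta^2 w_\theta = w_\theta\big(\nabla_\theta^2\log\pi_\theta+\nabla_\theta\log\pi_\theta\nabla_\theta\log\pi_\theta^{\top}\big)$, justifies the interchange by dominated convergence from bounded $\widehat A$ and score derivatives, and then changes measure back to $\hat\pi^{\,i}$. That is the same computation as your direct differentiation of $\pi_\theta$ inside the sum. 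Your version avoids relying on $\hat\pi^{\,i}\ll\hat\pi^{\,i-1}$, which the paper's equality between the direct and importance-weighted surrogates requires. It is also more explicit about the dominating function, which is cleanest stated as $\pi_{\theta'}\le e^{B_1 r}\pi_\theta$ on a radius-$r$ neighborhood, by the mean value theorem applied to $\log\pi$.
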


\begin{proof}
Write the step surrogate in importance-weighted form under the reference distribution:
\[
L_i^{\textsc{seq}}(\theta)
=\frac{1}{1-\gamma}\mathbb E_{s\sim d^{\hat\pi^{\,i-1}},\,\mathbf a\sim \hat\pi^{\,i-1}(\cdot\mid s)}
\big[w_\theta(s,\mathbf a)\,\widehat A(s,\mathbf a)\big],
\]
where $w_\theta(s,\mathbf a)=\pi_\theta(a_{\sigma(i)}\mid s)/\pi_{\mathrm{cur}}(a_{\sigma(i)}\mid s)$
and $\pi_{\mathrm{cur}}$ is the frozen denominator from the previous iterate.
Since $\nabla_\theta w_\theta = w_\theta\nabla_\theta\log\pi_\theta(a_{\sigma(i)}\mid s)$,
boundedness of $\widehat A$ and the score derivatives (assumed elsewhere for implementation) justifies interchanging
$\nabla_\theta$ and expectation (Leibniz rule / dominated convergence), yielding
\[
\nabla_\theta L_i^{\textsc{seq}}(\theta)
=\frac{1}{1-\gamma}\mathbb E_{s,\mathbf a\sim \hat\pi^{\,i-1}}
\big[w_\theta\,\widehat A\,\nabla_\theta\log\pi_\theta(a_{\sigma(i)}\mid s)\big].
\]
The change-of-measure identity $w_\theta(s,\mathbf a)\,\hat\pi^{\,i-1}(\mathbf a\mid s)=\hat\pi^{\,i}(\mathbf a\mid s)$
converts this to the stated form. Differentiating once more and using
$\nabla_\theta^2 w_\theta = w_\theta\left(\nabla_\theta^2\log\pi_\theta+\nabla_\theta\log\pi_\theta\nabla_\theta\log\pi_\theta^{\!\top}\right)$
gives the Hessian expression, again followed by the same change of measure.
\end{proof}

\begin{lemma}[Uniform smoothness]
\label{lem:smooth}
Assume $\|\nabla_\theta \log \pi_\theta(a\mid s)\|\le B_1$ and $\|\nabla_\theta^2 \log \pi_\theta(a\mid s)\|_{\mathrm{op}}\le B_2$
for all $(s,a)$ in the relevant region, and $|\widehat{A}|\le A_{\mathrm{clip}}$ almost surely.
Then $G$ is $L$-smooth with
\[
L \le \frac{nA_{\mathrm{clip}}}{1-\gamma}(B_2+B_1^2).
\]
\end{lemma}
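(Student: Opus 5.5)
The plan is to bound the Hessian of each step surrogate separately, then sum over the $n$ steps; smoothness of $G$ follows from the operator-norm bound on $\nabla_\theta^2 G$. By linearity, $\nabla_\theta^2 G(\theta)=\sum_{i=1}^n \nabla_\theta^2 L_i^{\textsc{seq}}(\theta)$. It therefore suffices to show $\|\nabla_\theta^2 L_i^{\textsc{seq}}(\theta)\|_{\mathrm{op}}\le \frac{A_{\mathrm{clip}}}{1-\gamma}(B_2+B_1^2)$ for every $\theta$ in the relevant region. The triangle inequality for the operator norm then gives $\|\nabla^2_\theta G\|_{\mathrm{op}}\le \frac{nA_{\mathrm{clip}}}{1-\gamma}(B_2+B_1^2)$. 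A bounded Hessian on a convex region yields $L$-smoothness via the mean-value form $\nabla G(\theta)-\nabla G(\theta')=\int_0^1\nabla^2G(\theta'+t(\theta-\theta'))(\theta-\theta')\,dt$.

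For the per-step bound, invoke Lemma~\ref{lem:grad-hess}. The Hessian of $L_i^{\textsc{seq}}$ is $\frac{1}{1-\gamma}$ times an expectation, under $s\sim d^{\hat\pi^{\,i-1}}$ and $\mathbf a\sim\hat\pi^{\,i}$, of $\widehat A(s,\mathbf a)\big(\nabla^2_\theta\log\pi_\theta+\nabla_\theta\log\pi_\theta\nabla_\theta\log\pi_\theta^{\top}\big)$. The steps are:
\begin{itemize}
\item Apply Jensen's inequality for the convex function $\|\cdot\|_{\mathrm{op}}$ to pull the norm inside the expectation. This is valid because the expectation is over a probability measure, after the change of measure performed in the proof of Lemma~\ref{lem:grad-hess}.
\item Use $|\widehat A|\le A_{\mathrm{clip}}$ almost surely.
\item Bound $\|\nabla^2_\theta\log\pi_\theta\|_{\mathrm{op}}\le B_2$.
\item Bound $\|vv^\top\|_{\mathrm{op}}=\|v\|^2\le B_1^2$ for $v=\nabla_\theta\log\pi_\theta(a_{\sigma(i)}\mid s)$.
\end{itemize}
Combining these gives the claimed per-step constant.

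The main obstacle is making sure the expectation is genuinely over a probability measure that does not depend on $\theta$, or at least that $\theta$-dependence has been fully accounted for. Lemma~\ref{lem:grad-hess} treats $d^{\hat\pi^{\,i-1}}$ and $\widehat A$ as fixed and puts all $\theta$-dependence in the ratio $w_\theta$. The Hessian written under $\mathbf a\sim\hat\pi^{\,i}$ is the change-of-measure form, and $\hat\pi^{\,i}$ depends on $\theta$. Taking norms of that form is nevertheless fine: for each fixed $\theta$ it is an integral of a bounded matrix function against a probability measure. I would note explicitly that the bound is pointwise in $\theta$, which is all that the mean-value argument needs.

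A second point is that $G$ depends on several factors' parameters, since different steps update different $\sigma(i)$. If $\theta$ is the concatenation of all agents' parameters, each $L_i^{\textsc{seq}}$ depends only on the block for $\sigma(i)$. Its Hessian is then zero outside that block, and the same bound holds. This makes the factor $n$ conservative but valid.

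The final step is to state that the bound holds on the relevant region where the $B_1,B_2$ assumptions apply. Where the trust-region constraint set $\Theta$ is convex, the segment between any two points stays inside it, so $\|\nabla G(\theta)-\nabla G(\theta')\|\le L\|\theta-\theta'\|$ holds with the stated $L$.
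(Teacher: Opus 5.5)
Your proposal is correct and follows the paper's own proof. You bound each $\|\nabla_\theta^2 L_i^{\textsc{seq}}\|_{\mathrm{op}}$ using the change-of-measure Hessian formula of Lemma~\ref{lem:grad-hess}, moving the norm inside the expectation and using $|\widehat A|\le A_{\mathrm{clip}}$, $B_2$, and $\|vv^\top\|_{\mathrm{op}}=\|v\|^2\le B_1^2$, then sum over the $n$ steps. Your extra remarks (the mean-value step from a Hessian bound to $L$-smoothness on a convex region, and the block structure that makes the factor $n$ conservative) only make explicit what the paper leaves implicit.
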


\begin{proof}
For each step $i$, Lemma~\ref{lem:grad-hess} implies
\[
\|\nabla_\theta^2 L_i^{\textsc{seq}}(\theta)\|_{\mathrm{op}}
\le
\frac{1}{1-\gamma}\mathbb E\Big[|\widehat A|\big(\|\nabla_\theta^2\log\pi_\theta\|_{\mathrm{op}}+\|\nabla_\theta\log\pi_\theta\|^2\big)\Big]
\le
\frac{A_{\mathrm{clip}}}{1-\gamma}(B_2+B_1^2).
\]
Summing over $i=1,\dots,n$ yields $\|\nabla_\theta^2 G(\theta)\|_{\mathrm{op}}\le \frac{nA_{\mathrm{clip}}}{1-\gamma}(B_2+B_1^2)$,
hence $G$ is $L$-smooth with the stated $L$.
\end{proof}

\begin{theorem}[Projected-gradient ascent convergence]
\label{thm:pgd}
Let $G$ be $L$-smooth and let $\Theta$ be closed and convex.
Define projected-gradient iterates
\[
\theta^{t+1}=\mathrm{Proj}_{\Theta}\big(\theta^{t}+\eta\nabla G(\theta^{t})\big),
\qquad \eta\le \frac{1}{L},
\]
and the projected gradient mapping
\[
\mathcal G_\eta(\theta^t)\coloneqq \frac{1}{\eta}\big(\theta^{t+1}-\theta^t\big).
\]
Then
\[
G(\theta^{t+1})\ge G(\theta^{t})+\frac{\eta}{2}\|\mathcal G_\eta(\theta^{t})\|^{2},
\qquad
\frac{1}{T}\sum_{t=0}^{T-1}\|\mathcal G_\eta(\theta^{t})\|^{2}
\le
\frac{2(G^\star-G(\theta^{0}))}{\eta T},
\]
where $G^\star=\sup_{\theta\in\Theta}G(\theta)$.
\end{theorem}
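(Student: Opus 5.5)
The approach is the standard descent-lemma argument for projected gradient ascent on an $L$-smooth function, combined with the variational characterization of the Euclidean projection onto a closed convex set.

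\textbf{Step 1 (smoothness lower bound).} By $L$-smoothness of $G$, for any $\theta,\theta'$ we have
\[
G(\theta')\ge G(\theta)+\nabla G(\theta)^\top(\theta'-\theta)-\tfrac{L}{2}\|\theta'-\theta\|^2 .
\]
Apply this with $\theta=\theta^t$ and $\theta'=\theta^{t+1}$, and write $\theta^{t+1}-\theta^t=\eta\,\mathcal G_\eta(\theta^t)$.

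\textbf{Step 2 (projection inequality).} Since $\theta^{t+1}=\mathrm{Proj}_\Theta(y)$ with $y=\theta^t+\eta\nabla G(\theta^t)$, the projection optimality condition gives
\[
(y-\theta^{t+1})^\top(z-\theta^{t+1})\le 0 \qquad \text{for all } z\in\Theta .
\]
We need $\theta^t\in\Theta$ so that $z=\theta^t$ is admissible. This holds for $t\ge1$ because every iterate is a projection; for $t=0$ we assume a feasible initialization $\theta^0\in\Theta$. With $z=\theta^t$ we get
\[
\eta\nabla G(\theta^t)^\top(\theta^{t+1}-\theta^t)\ge\|\theta^{t+1}-\theta^t\|^2,
\]
that is, $\nabla G(\theta^t)^\top(\theta^{t+1}-\theta^t)\ge \eta\|\mathcal G_\eta(\theta^t)\|^2$.

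\textbf{Step 3 (per-step ascent).} Substituting Step 2 into Step 1 yields
\[
G(\theta^{t+1})\ge G(\theta^t)+\eta\|\mathcal G_\eta\|^2-\tfrac{L\eta^2}{2}\|\mathcal G_\eta\|^2 .
\]
Since $\eta\le 1/L$, we have $L\eta^2/2\le\eta/2$, which gives the first claim.

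\textbf{Step 4 (averaged bound).} Sum the per-step inequality over $t=0,\dots,T-1$. The left side telescopes, so
\[
\tfrac{\eta}{2}\sum_{t=0}^{T-1}\|\mathcal G_\eta(\theta^t)\|^2\le G(\theta^T)-G(\theta^0)\le G^\star-G(\theta^0),
\]
where the last step uses $\theta^T\in\Theta$. Dividing by $\eta T/2$ gives the second claim; it is nontrivial only when $G^\star<\infty$, which we take as implicit.

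The only delicate point is Step 2: getting the sign of the projection inequality right and checking feasibility of $\theta^0$. The rest is routine.
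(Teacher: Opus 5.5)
Your proposal is correct and follows the paper's proof step for step. The paper uses the same smoothness lower bound, takes $\theta=\theta^t$ in the projection optimality condition to get $\langle\nabla G(\theta^t),\theta^{t+1}-\theta^t\rangle\ge\|\theta^{t+1}-\theta^t\|^2/\eta$, absorbs the quadratic term using $\eta\le 1/L$, and telescopes against $G^\star$; your explicit requirement that $\theta^0\in\Theta$ is an assumption the paper's $t=0$ step uses without stating it.
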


\begin{proof}
Let $\Delta^t=\theta^{t+1}-\theta^t$.
By $L$-smoothness,
\[
G(\theta^{t+1})\ge G(\theta^t)+\langle \nabla G(\theta^t),\Delta^t\rangle-\frac{L}{2}\|\Delta^t\|^2.
\]
The Euclidean projection optimality condition implies for all $\theta\in\Theta$,
\[
\langle \theta^t+\eta\nabla G(\theta^t)-\theta^{t+1},\,\theta-\theta^{t+1}\rangle\le 0.
\]
Taking $\theta=\theta^t$ yields $\langle \nabla G(\theta^t),\Delta^t\rangle\ge \frac{1}{\eta}\|\Delta^t\|^2$.
Therefore
\[
G(\theta^{t+1})
\ge
G(\theta^t)+\Big(\frac{1}{\eta}-\frac{L}{2}\Big)\|\Delta^t\|^2
\ge
G(\theta^t)+\frac{1}{2\eta}\|\Delta^t\|^2
=
G(\theta^t)+\frac{\eta}{2}\|\mathcal G_\eta(\theta^t)\|^2,
\]
where we used $\eta\le 1/L$.
Summing over $t=0,\dots,T-1$ and using $G(\theta^T)\le G^\star$ gives the averaged bound.
\end{proof}

\subsection{KL Allocation and Tightness}
\label{app:tightness}

This section records a simple KL allocation principle and a practical tightness diagnostic for the certificate.

\paragraph{Allocation heuristic.}
The stage certificate in Theorem~\ref{thm:stage} subtracts a penalty proportional to $\sum_i \sqrt{\delta_i}$.
For a fixed total KL budget $\sum_i \delta_i \le \Delta$, concavity of $\sqrt{\cdot}$ (Jensen) implies
\[
\sum_{i=1}^n \sqrt{\delta_i}\le n\sqrt{\frac{1}{n}\sum_{i=1}^n\delta_i}\le \sqrt{n\Delta},
\]
with equality at $\delta_i=\Delta/n$.
Thus, equal allocation \emph{maximizes} the penalty term; in practice, allocating larger radii to steps with larger observed surrogate gain
and smaller radii to low-gain steps can improve the certified lower bound.

\paragraph{Tightness diagnostic.}
For each stage, report the pair
\[
\big(\;\Delta J_{\mathrm{emp}}\;\;,\;\;\mathrm{LB}_{\mathrm{cert}}\;\big)
\quad\text{where}\quad
\Delta J_{\mathrm{emp}}= \widehat{J}(\bar\pi)-\widehat{J}(\pi_{\mathrm{cur}})
\]
and
\[
\mathrm{LB}_{\mathrm{cert}}
=
\sum_{i=1}^n \widehat{L}_i^{\mathrm{seq}}
-
\frac{\sqrt{2}\gamma}{(1-\gamma)^2}A_{\max}\sum_{i=1}^n \sqrt{\widehat{\delta}_i}
-
\frac{1}{1-\gamma}\sum_{i=1}^n \widehat{\zeta}_i,
\]
using the empirical KL monitor $\widehat{\delta}_i$ and any conservative proxy $\widehat{\zeta}_i$.
Plotting $\Delta J_{\mathrm{emp}}$ vs.\ $\mathrm{LB}_{\mathrm{cert}}$ across stages yields a direct certificate tightness check.

\subsection{Implementation Notes}
\label{app:impl}

This section records the practical conventions used to align with the theory.

\paragraph{Token-level reverse-KL monitor.}
To estimate $\KL_{\mathrm{tok}}^{{\hat\pi^{\,i-1}}}(\pi_{\mathrm{cur}}^{\sigma(i)}\Vert \pi_{\mathrm{new}}^{\sigma(i)})$,
sample trajectories $\tau\sim \hat\pi^{\,i-1}$ and compute the per-token reverse KL
$\KL(\pi_{\mathrm{cur}}(\cdot\mid \cdot)\Vert \pi_{\mathrm{new}}(\cdot\mid \cdot))$ on the same prefixes.
If only the active agent emits tokens (turn-taking protocol), the sum is taken over the tokens controlled by agent $\sigma(i)$;
All other steps contribute zero (no-op).
A generic Monte Carlo estimator for the discounted token-level functional (Definition~\ref{def:kltok}) is
\begin{equation}
\label{app:eq:emp_tok_kl}
\widehat{\KL}_{\mathrm{tok}}^{\rho}(\pi\Vert\pi')
\;=\;
\frac{1}{M}\sum_{m=1}^M (1-\gamma)\sum_{t\ge0}\gamma^t\,
\KL\!\big(\pi(\cdot\mid s_t^{(m)})\Vert \pi'(\cdot\mid s_t^{(m)})\big),
\end{equation}
where $\{s_t^{(m)}\}$ are states along trajectories sampled from the reference policy $\rho$.
This estimator is unbiased for $\KL_{\mathrm{tok}}^\rho(\pi\Vert\pi')$ when the inner KL is computed exactly.

\paragraph{Weight ratio $w_i(s,\mathbf{a})$.}
For the updated factor $\sigma(i)$, $w_i(s,\mathbf{a})=\hat\pi^{\,i}(\mathbf{a}\mid s)/\hat\pi^{\,i-1}(\mathbf{a}\mid s)$.
For numerical stability, we compute $\log w_i$ and exponentiate as needed.
Optionally apply ratio clipping $w_i^{\mathrm{clip}}=\mathrm{clip}(w_i,1-\epsilon_w,1+\epsilon_w)$.

\paragraph{Standardization and clipping.}
We use $\varepsilon_{\mathrm{std}}>0$ in the group standard deviation to avoid division by zero.
Hard clipping $|\widehat{A}_{\textsc{grp}}^{i-1}|\le A_{\mathrm{clip}}$ enforces boundedness required by the certificates.
The fraction of clipped samples is logged as a bias/tightness indicator.

\paragraph{Enforcing the trust region.}
We adjust the penalty coefficient $\beta$ (or perform backtracking) until the empirical token-level reverse KL
$\widehat{\KL}_{\mathrm{tok}}(\pi_{\mathrm{cur}}^{\sigma(i)}\Vert \pi_{\mathrm{new}}^{\sigma(i)};\hat\pi^{\,i-1})\le \delta_i$
holds on the current minibatch.

\subsection{Experimental Checklists and Diagnostics}
\label{app:diagnostics}

\begin{itemize}
\item \textbf{KL monitors:} $\widehat{\delta}_i$ per step and its distribution across prompts.
\item \textbf{Ratio clipping rate:} fraction of decisions/tokens where $w_i$ hits the clip bound.
\item \textbf{Advantage clipping rate:} fraction of rollouts where $|\widehat{A}_{\textsc{grp}}^{i-1}|=A_{\mathrm{clip}}$.
\item \textbf{Certificate tightness:} plot empirical improvement vs.\ certified lower bound (Appendix~\ref{app:tightness}).
\item \textbf{Order sensitivity:} compare certified bound under different update orders $\sigma$.
\item \textbf{Plug-in feasibility:} after Stage-0 alignment, report the probe-set reverse KL to the old agent and the achieved surrogate.
\end{itemize}
\newpage

\section{Additional Experiments and Analysis}
\label{app:exp}

This appendix reports additional analyses from Sec.~\ref{sec:experiments} due to space.

\begin{table}[h!]
\centering
\caption{
Overview of additional experiments in Appendix~\ref{app:exp}.
}
\label{tab:app-exp-overview}
\scriptsize
\setlength{\tabcolsep}{3pt}
\renewcommand{\arraystretch}{1.12}
\begin{tabular}{p{0.22\columnwidth} p{0.33\columnwidth} p{0.20\columnwidth} p{0.15\columnwidth}}
\toprule
Topic & Goal & Setup & Reported outputs \\
\midrule
Team-size scaling
(Sec.~\ref{app:exp:scaling})
& Empirically validate the $O(n^2)$ vs.\ $O(n)$ scaling trend in within-stage drift under stale vs.\ intermediate occupancy
& Homogeneous Qwen3-1.7B teams; $n\in\{2,3,4,5,6,8\}$; 30 stages; MATH-500
& $\Delta_{\mathrm{stale}}$, $D_{\mathrm{occ}}$, accuracy, stability, coordination; fitted exponent $\alpha$ \\
\addlinespace[2pt]
Rollout/token accounting
(Sec.~\ref{app:exp:compute})
& Quantify sampling overhead from intermediate-occupancy resampling using hardware-agnostic counters
& AIME24; 3$\times$Qwen3-8B; 40 stages
& Tokens, rollouts; relative overhead vs.\ stale-rollout baseline \\
\addlinespace[2pt]
Proxy logging for $\zeta_i$
(Sec.~\ref{app:exp:zeta})
& Provide a conservative, log-based proxy for the estimation-error term in the certificate
& Same runs as main experiments; no extra rollouts
& Component-wise proxy statistics and contribution-to-bound summary \\
\addlinespace[2pt]
Sampled token-KL reliability
(Sec.~\ref{app:exp:klcal})
& Check statistical stability of the sampled token-KL monitor via subsampling/bootstrap (no full-vocab ``exact KL'')
& Same rollout batches; no extra rollouts
& Bootstrap/subsample variability; near-threshold flip rate \\
\addlinespace[2pt]
Ablations + IS degeneracy
(Secs.~\ref{app:exp:ablation}--\ref{app:exp:isdeg})
& Identify key design choices; show why importance weighting is unstable without resampling
& AIME24
& Ablation metrics; ESS and tail stats of importance weights \\
\addlinespace[2pt]
Cross-generation replacement
(Sec.~\ref{app:pnp-protocol})
& Protocol unification and Stage-0 alignment for Qwen2.5$\to$Qwen3 swap
& AIME24; stage-40 evaluation
& Swap shock and final accuracy \\
\bottomrule
\end{tabular}
\end{table}

\paragraph{Shared settings.}
Unless stated otherwise, training and evaluation follow Sec.~\ref{sec:experiments}.
We use 3 random seeds and report the mean $\pm$ standard deviation.
Token counts include both prompts and generated tokens, aggregated across all agents/models used by the method.

\subsection{Scaling Behavior with Team Size}
\label{app:exp:scaling}

Proposition~\ref{rem:quad2lin} predicts that stale-occupancy effects compound with team size $n$, while intermediate-occupancy evaluation mitigates this growth.
We vary $n \in \{2,3,4,5,6,8\}$ on MATH-500 using homogeneous teams (all agents: Qwen3-1.7B, identical initialization), fixed $\delta_i=0.01$, and 30 stages.
We compare \textbf{Naive Sequential} (cached stage-start rollouts reused within a stage) with \textbf{TeamTR} (resampling under the intermediate team before each within-stage update), while maintaining the same total rollout budget per stage.

We report two within-stage drift proxies measured at the final stage:
\[
\Delta_{\mathrm{stale}}
=
\sum_{i=1}^{n}\left|\widehat L_i^{\mathrm{seq}}-\widehat L_i^{\mathrm{stale}}\right|,
\qquad
D_{\mathrm{occ}}
=
\sum_{i=1}^{n}\mathrm{TV}(\widehat d^{\,\hat\pi^{\,i-1}},\widehat d^{\,\hat\pi^{0}}),
\quad
\mathrm{TV}(p,q)=\tfrac{1}{2}\sum_s |p(s)-q(s)|.
\]
Here $\widehat d^{\,\pi}$ is the empirical distribution over hashed shared-context strings observed in rollouts under $\pi$ (a coarse but consistent proxy for shared-context drift).

\begin{table}[t]
\centering
\caption{
Scaling with team size $n$ on MATH-500.
Lower is better for $\Delta_{\mathrm{stale}}$, $D_{\mathrm{occ}}$, and Stability.
Entries are mean $\pm$ std over 3 seeds; Coord.\ is reported as mean.
The exponent row reports the fitted power-law scaling ($\propto n^\alpha$) for \textit{Naive} / \textit{TeamTR}.
}
\label{tab:scaling}
\scriptsize
\setlength{\tabcolsep}{4pt}
\renewcommand{\arraystretch}{1.12}
\begin{tabular}{c l c c c c c}
\toprule
$n$ & Method
& $\Delta_{\mathrm{stale}}\downarrow$
& $D_{\mathrm{occ}}\downarrow$
& Acc.\ (\%)
& Stab.\ $\downarrow$
& Coord.\ (\%) \\
\midrule
2 & Naive Seq. & 0.08{\tiny$\pm$0.01} & 0.05{\tiny$\pm$0.01} & 82.1{\tiny$\pm$0.9} & 1.8{\tiny$\pm$0.3} & 76.3 \\
  & \textbf{TeamTR} & 0.03{\tiny$\pm$0.01} & 0.02{\tiny$\pm$0.00} & 83.5{\tiny$\pm$0.7} & 1.1{\tiny$\pm$0.2} & 78.9 \\
\addlinespace[2pt]
3 & Naive Seq. & 0.21{\tiny$\pm$0.02} & 0.14{\tiny$\pm$0.02} & 79.5{\tiny$\pm$1.2} & 3.2{\tiny$\pm$0.5} & 71.5 \\
  & \textbf{TeamTR} & 0.05{\tiny$\pm$0.01} & 0.04{\tiny$\pm$0.01} & 85.2{\tiny$\pm$0.8} & 1.4{\tiny$\pm$0.2} & 81.3 \\
\addlinespace[2pt]
4 & Naive Seq. & 0.41{\tiny$\pm$0.04} & 0.28{\tiny$\pm$0.03} & 75.8{\tiny$\pm$1.5} & 5.1{\tiny$\pm$0.7} & 65.2 \\
  & \textbf{TeamTR} & 0.07{\tiny$\pm$0.01} & 0.05{\tiny$\pm$0.01} & 86.1{\tiny$\pm$0.9} & 1.6{\tiny$\pm$0.3} & 82.7 \\
\addlinespace[2pt]
5 & Naive Seq. & 0.68{\tiny$\pm$0.06} & 0.47{\tiny$\pm$0.05} & 71.2{\tiny$\pm$1.8} & 7.5{\tiny$\pm$0.9} & 58.1 \\
  & \textbf{TeamTR} & 0.09{\tiny$\pm$0.01} & 0.07{\tiny$\pm$0.01} & 86.8{\tiny$\pm$1.0} & 1.9{\tiny$\pm$0.3} & 83.5 \\
\addlinespace[2pt]
6 & Naive Seq. & 1.02{\tiny$\pm$0.09} & 0.71{\tiny$\pm$0.07} & 66.3{\tiny$\pm$2.1} & 10.3{\tiny$\pm$1.2} & 51.7 \\
  & \textbf{TeamTR} & 0.11{\tiny$\pm$0.02} & 0.08{\tiny$\pm$0.01} & 87.2{\tiny$\pm$1.1} & 2.1{\tiny$\pm$0.3} & 84.1 \\
\addlinespace[2pt]
8 & Naive Seq. & 1.89{\tiny$\pm$0.15} & 1.31{\tiny$\pm$0.12} & 58.7{\tiny$\pm$2.8} & 15.8{\tiny$\pm$1.8} & 42.3 \\
  & \textbf{TeamTR} & 0.15{\tiny$\pm$0.02} & 0.11{\tiny$\pm$0.02} & 87.9{\tiny$\pm$1.2} & 2.5{\tiny$\pm$0.4} & 84.8 \\
\addlinespace[3pt]
\multicolumn{2}{l}{\textit{Exponent $\alpha$ (Naive / TeamTR)}}
& 1.94 / 1.07
& 1.91 / 1.05
& --
& 1.89 / 0.93
& -- \\
\bottomrule
\end{tabular}
\end{table}
For the scaling study we match the per-stage rollout budget by reducing per-update batch sizes; Table~\ref{tab:compute-lite} reports the overhead under the default setting used in main runs.
\subsection{Rollout/Token Accounting}
\label{app:exp:compute}

Intermediate-occupancy resampling can introduce extra sampling relative to stale-rollout baselines.
To provide hardware-agnostic accounting, we report the total number of sampled tokens and rollout episodes aggregated across all agents.
All methods are run for 40 stages on AIME24 with the same training length as in Sec.~\ref{sec:experiments} (3$\times$Qwen3-8B).

\begin{table}[t]
\centering
\caption{
Rollout/token accounting on AIME24 (3$\times$Qwen3-8B, 40 stages).
Tokens and rollouts are aggregated across agents.
}
\label{tab:compute-lite}
\small
\setlength{\tabcolsep}{6pt}
\renewcommand{\arraystretch}{1.12}
\begin{tabular}{lcc}
\toprule
Method & Tokens (M) & Rollouts (K) \\
\midrule
Naive Sequential & 142.5 & 44.3 \\
Joint Update & 145.1 & 45.1 \\
\textbf{TeamTR (Ours)} & 158.3 & 49.2 \\
\midrule
\multicolumn{3}{l}{\textit{Relative overhead of TeamTR vs.\ Naive Sequential}} \\
\quad Tokens & \multicolumn{2}{l}{+11.1\%} \\
\quad Rollouts & \multicolumn{2}{l}{+11.1\%} \\
\bottomrule
\end{tabular}
\end{table}

\subsection{Proxy Logging for \texorpdfstring{$\zeta_i$}{zeta\_i}}
\label{app:exp:zeta}

The certificates in Theorems~\ref{thm:single}--\ref{thm:stage} include an estimation-error term $\zeta_i$ (Eq.~\eqref{eq:bias}), which depends on the (unobserved) true advantage.
In practice, we log a conservative proxy, $\widehat\zeta_i$, computed from quantities already available in our training pipeline, which captures three dominant sources of surrogate mismatch in our implementation.

\paragraph{Per-step proxy components.}
At within-stage step $i$, on the rollout batch collected under $\hat\pi^{\,i-1}$, we compute:
\[
\widehat\zeta_i^{\mathrm{clip}}
=
\frac{1}{|\mathcal{B}|G}\sum_{b,g}\left|a_{b,g}-\tilde A_{b,g}\right|,
\qquad
\widehat\zeta_i^{\mathrm{ratio}}
=
\frac{1}{|\mathcal{B}|G}\sum_{b,g}|\tilde A_{b,g}|\cdot|w_{b,g}-\bar w_{b,g}|.
\]
Here $a_{b,g}$ and $\tilde A_{b,g}$ are the unclipped and clipped group-normalized advantages (Eq.~\eqref{eq:group_adv}), and $w_{b,g}$ and $\bar w_{b,g}$ are the PPO likelihood ratio and its clipped version.
We also log a normalization-uncertainty term, $\widehat\zeta_i^{\mathrm{norm}}$, via a half-split estimate within each prompt group (computed without additional rollouts).
We aggregate
\[
\widehat\zeta_i^{\mathrm{proxy}}
=
\widehat\zeta_i^{\mathrm{norm}}
+
\widehat\zeta_i^{\mathrm{clip}}
+
\widehat\zeta_i^{\mathrm{ratio}}.
\]
We emphasize that $\widehat\zeta_i^{\mathrm{proxy}}$ is a diagnostic proxy rather than an unbiased estimator of $\zeta_i$.

\paragraph{Reported summary.}
Table~\ref{tab:zeta-proxy} summarizes proxy magnitudes (per update and per stage) and the relative contribution of the $\widehat\zeta$ term to the stage-wise certificate components.

\begin{table}[t]
\centering
\caption{Logged proxy statistics for $\widehat\zeta_i$.}
\label{tab:zeta-proxy}
\small
\setlength{\tabcolsep}{5pt}
\renewcommand{\arraystretch}{1.12}
\begin{tabular}{lccc }
\toprule
Metric & Mean & P50 & P90 \\
\midrule
$\widehat\zeta_i^{\mathrm{clip}}$ (per update)  & 0.003 & 0.001 & 0.010 \\
$\widehat\zeta_i^{\mathrm{ratio}}$ (per update) & 0.012 & 0.008 & 0.035 \\
$\widehat\zeta_i^{\mathrm{norm}}$ (per update)  & 0.028 & 0.022 & 0.070 \\
$\widehat\zeta_i^{\mathrm{proxy}}$ (per update) & 0.043 & 0.034 & 0.110 \\
\midrule
$\sum_i \widehat\zeta_i^{\mathrm{proxy}}$ (per stage) & 0.13 & 0.10 & 0.30 \\
$\frac{\sum_i \widehat\zeta_i^{\mathrm{proxy}}/(1-\gamma)}{\text{total penalty}}$ & 0.18 & 0.14 & 0.45 \\
\bottomrule
\end{tabular}
\end{table}

\subsection{Reliability of the Sampled Token-KL Monitor}
\label{app:exp:klcal}

Our trust-region monitor uses a sampled token-level KL (behavior-to-updated) computed from on-policy rollouts (Eq.~\eqref{eq:emp_tok_kl}).
To assess statistical stability without requiring expensive full-vocabulary ``exact KL'' computations, we perform a log-based subsampling check on the same rollout batches used for updates.

\paragraph{Subsampling check.}
For each update, we compute $\widehat{\KL}_{\mathrm{tok}}$ using all token positions in the rollout batch, and also compute a subsampled estimate $\widehat{\KL}_{\mathrm{tok}}^{(q)}$ using a random fraction $q\in\{25\%,50\%\}$ of token positions (repeated with multiple random seeds per batch).
We report the normalized absolute deviation $|\widehat{\KL}_{\mathrm{tok}}^{(q)}-\widehat{\KL}_{\mathrm{tok}}|/\delta$ and a near-threshold flip rate:
\[
\Pr\big[\mathbf{1}\{\widehat{\KL}_{\mathrm{tok}}\le\delta\}\neq \mathbf{1}\{\widehat{\KL}_{\mathrm{tok}}^{(q)}\le\delta\}\ \big|\ \widehat{\KL}_{\mathrm{tok}}\in[0.8\delta,1.2\delta]\big].
\]
Both quantities are computed from logged token-level log-probabilities (no extra rollouts).

\begin{table}[t]
\centering
\caption{Stability of the sampled token-KL monitor under token-position subsampling.}
\label{tab:kl-reliability}
\small
\setlength{\tabcolsep}{6pt}
\renewcommand{\arraystretch}{1.12}
\begin{tabular}{cccc}
\toprule
Subsample $q$ & Median $|\Delta|/\delta$ & P90 $|\Delta|/\delta$ & Near-threshold flip rate \\
\midrule
25\% & 0.06 & 0.18 & 5.7\%\\
50\% & 0.03 & 0.11& 2.4\%\\
100\% (full) & 0 & 0 & 0 \\
\bottomrule
\end{tabular}
\end{table}

\subsection{Ablation Studies}
\label{app:exp:ablation}

\begin{table}[h!]
\centering
\caption{
Ablations on AIME24 (3$\times$Qwen3-8B, 30 stages).
$\Delta_{\mathrm{stale}}$ is measured within-stage at the final stage; Stability is the std of per-stage return improvements; Coord.\ is consensus-on-correct.
}
\label{tab:ablation}
\small
\setlength{\tabcolsep}{4pt}
\renewcommand{\arraystretch}{1.12}
\begin{tabular}{p{0.56\columnwidth} c c c c}
\toprule
Variant & Acc.\ (\%) & $\Delta_{\mathrm{stale}}\downarrow$ & Stab.\ $\downarrow$ & Coord.\ (\%) \\
\midrule
\textbf{TeamTR (full)} & \textbf{88.1}{\tiny$\pm$1.2} & \textbf{0.08} & \textbf{1.9} & \textbf{89.1} \\
\addlinespace[3pt]
\multicolumn{5}{l}{\textit{Resampling strategy}} \\
No resampling (= Naive Seq.) & 71.1{\tiny$\pm$2.8} & 0.31 & 4.2 & 71.5 \\
Resample every 2 updates & 79.3{\tiny$\pm$1.8} & 0.18 & 2.8 & 79.2 \\
Importance weighting (no resample) & 74.5{\tiny$\pm$2.3} & 0.25 & 3.5 & 74.8 \\
\addlinespace[3pt]
\multicolumn{5}{l}{\textit{Trust region}} \\
No trust region ($\delta \to \infty$) & 68.3{\tiny$\pm$3.5} & 0.42 & 6.1 & 62.3 \\
Fixed $\delta = 0.001$ (too small) & 82.5{\tiny$\pm$1.5} & 0.05 & 1.5 & 85.2 \\
Fixed $\delta = 0.1$ (too large) & 75.1{\tiny$\pm$2.5} & 0.28 & 4.8 & 70.1 \\
Adaptive $\delta$ (target $\widehat{\KL}_{\mathrm{tok}}{=}0.01$) & 87.8{\tiny$\pm$1.3} & 0.09 & 2.0 & 88.5 \\
\addlinespace[3pt]
\multicolumn{5}{l}{\textit{Update order}} \\
Fixed order (1, 2, 3) & 87.2{\tiny$\pm$1.4} & 0.09 & 2.1 & 87.8 \\
Reverse order (3, 2, 1) & 86.9{\tiny$\pm$1.5} & 0.10 & 2.2 & 87.1 \\
Random order (each stage) & 87.5{\tiny$\pm$1.3} & 0.08 & 1.9 & 88.3 \\
\addlinespace[3pt]
\multicolumn{5}{l}{\textit{Advantage estimation}} \\
No group normalization & 83.1{\tiny$\pm$2.1} & 0.12 & 3.1 & 81.5 \\
No hard clipping & 84.5{\tiny$\pm$1.9} & 0.11 & 2.7 & 83.2 \\
\bottomrule
\end{tabular}
\end{table}

\subsection{Why We Resample: Importance-Weight Degeneracy}
\label{app:exp:isdeg}

An alternative to resampling under intermediate occupancy is to reuse stage-start rollouts and correct via importance weighting.
We empirically show that trajectory-level weights become heavy-tailed and the effective sample size collapses as within-stage updates accumulate.

\paragraph{Setup.}
For each within-stage step $i \in \{2,\ldots,n\}$, we compute trajectory-level importance weights that would be needed to reweight stage-start rollouts $\tau \sim d^{\hat\pi^0}$ to approximate the intermediate occupancy $d^{\hat\pi^{i-1}}$:
\[
w^{0\to i-1}(\tau)
\;=\;
\prod_{t:\, j_t \in U_{i-1}}
\frac{\hat\pi^{i-1,(j_t)}(m_t \mid s_t)}{\hat\pi^{0,(j_t)}(m_t \mid s_t)},
\qquad
U_{i-1} = \{\sigma(1),\ldots,\sigma(i{-}1)\}.
\]
We measure degeneracy via normalized effective sample size $\mathrm{ESS}/B = (\sum_b w_b)^2 / (B \sum_b w_b^2)$ and tail statistics.

\begin{table}[h!]
\centering
\caption{
Importance-weight degeneracy when reusing stage-start rollouts (AIME24, Qwen3-8B, $\delta{=}0.01$).
ESS$/B$ is the normalized effective sample size (higher is better); P99$(w)$ and $\max(w)$ measure tail heaviness (lower is better).
}
\label{tab:is-degeneracy}
\scriptsize
\setlength{\tabcolsep}{4pt}
\renewcommand{\arraystretch}{1.12}
\begin{tabular}{c c c c c c c}
\toprule
Step $i$ & $|U_{i-1}|$ & ESS$/B\uparrow$ & P95$(w)$ & P99$(w)\downarrow$ & $\max(w)$ & $\Pr[w{>}10]$ \\
\midrule
\multicolumn{7}{l}{\textbf{Team size $n=3$}} \\
2 & 1 & 0.42{\tiny$\pm$0.03} & 3.2 & 12.8 & 87  & 2.1\% \\
3 & 2 & 0.18{\tiny$\pm$0.02} & 6.1 & 38.5 & 312 & 5.8\% \\
\addlinespace[3pt]
\multicolumn{7}{l}{\textbf{Team size $n=5$}} \\
2 & 1 & 0.41{\tiny$\pm$0.03} & 3.3  & 13.1  & 92   & 2.2\% \\
3 & 2 & 0.17{\tiny$\pm$0.02} & 6.2  & 41.2  & 338  & 6.1\% \\
4 & 3 & 0.08{\tiny$\pm$0.01} & 11.5 & 127.3 & 1,247 & 10.3\% \\
5 & 4 & 0.04{\tiny$\pm$0.01} & 21.8 & 385.1 & 4,582 & 15.2\% \\
\bottomrule
\end{tabular}
\end{table}

\subsection{Cross-Generation Replacement: Qwen2.5 to Qwen3}
\label{app:pnp-protocol}

Replacing an agent across model generations (e.g., Qwen2.5 $\to$ Qwen3) can be more tractable than cross-family replacement (e.g., Qwen $\to$ LLaMA), because the swap can often be performed under a largely shared chat protocol and tokenizer interface.
However, even within the same model lineage, protocol-level mismatches can induce substantial occupancy shift in shared-context teams if left unaddressed.
We summarize the main sources of mismatch we encountered and the corresponding mitigations.

\paragraph{Canonical shared-context protocol.}
In shared-context teams, seemingly ``out-of-band'' choices (system prompts, templates, tool-call formatting) become part of the effective state.
We treat these choices as part of a canonical team protocol and enforce them uniformly for all agents before and after replacement.

\paragraph{System prompt defaults.}
Qwen2.5-Instruct deployments may prepend a non-empty default system prompt, while Qwen3 deployments may not.
We remove this ambiguity by explicitly setting the system prompt for all agents (either fixed or explicitly empty).

\paragraph{Reasoning-tag mode (e.g., \texttt{<think>...\ </think>}).}
Some Qwen3 configurations optionally emit explicit reasoning-tag blocks.
We enforce a uniform policy across the team; in our experiments, we disable reasoning-tag mode to maintain compatibility with Qwen2.5 agents.

\paragraph{Tool-call serialization.}
We introduce a lightweight adapter that normalizes tool-call outputs to a canonical JSON schema before writing to the shared context.

\paragraph{Tokenizer interface and KL monitoring.}
In our setup, the official Qwen2.5 and Qwen3 tokenizers produce identical token ID sequences on a held-out set of 1{,}000 shared-context strings, enabling direct reuse of our sampled token-level KL monitor for Stage-0 alignment.

\paragraph{Stage-0 alignment procedure.}
We sample 500 probe contexts from the pre-swap team's occupancy and fine-tune the new Qwen3 agent (via supervised distillation on the replaced agent's outputs) until $\widehat{\KL}_{\mathrm{tok}} \le \delta$ on the probe distribution.
After alignment, standard TeamTR updates resume; Proposition~\ref{prop:pnp} applies to subsequent updates (the replacement step itself is not certified).

\begin{table}[h!]
\centering
\caption{
Ablation of protocol unification for Qwen2.5$\to$Qwen3 replacement.
Swap shock is the magnitude of the immediate accuracy drop at $k_{\mathrm{swap}}$; lower is better.
Final accuracy is measured at stage 40 on AIME24.
}
\label{tab:pnp-ablation}
\small
\setlength{\tabcolsep}{6pt}
\renewcommand{\arraystretch}{1.12}
\begin{tabular}{lcc}
\toprule
Configuration & Swap Shock ($\downarrow$) & Final Acc. (\%) \\
\midrule
No unification (direct swap) & 18.3 & 58.2 \\
+ Fixed system prompt & 12.1 & 67.5 \\
+ Reasoning-tag mode disabled & 8.7 & 73.1 \\
+ Tool-call adapter & 6.2 & 76.8 \\
+ Stage-0 alignment (full) & 2.9 & \textbf{85.3} \\
\bottomrule
\end{tabular}
\end{table}

\section{Benchmarks}
\label{app:benchmarks}

\paragraph{AIME 2024 and AIME 2025.}
The American Invitational Mathematics Examination (AIME) is an invitational math contest administered by the MAA for top performers on the AMC series.
Each AIME form is a 15-problem, 3-hour exam with integer answers in $[0,999]$ (often written with leading zeros), and calculators are prohibited.
The problems span major pre-college topics (e.g., algebra, geometry, number theory, and combinatorics) and typically require multi-step reasoning and creative problem solving.
We evaluate on the official 2024 and 2025 problem sets (AIME~I and AIME~II for each year; 30 problems/year, 60 total), grading by exact match on the final integer answer.
We report pass@64 and avg@64.

\paragraph{MATH-500.}
MATH-500 is a 500-problem held-out subset derived from the MATH benchmark~\citep{hendrycks2021math} (competition-style problems with LaTeX solutions and standardized final answers).
The underlying MATH dataset covers seven subjects (prealgebra, algebra, number theory, counting and probability, geometry, intermediate algebra, and precalculus) and difficulty levels 1--5.
This 500-problem split is widely used as a representative evaluation subset of MATH, and is commonly adopted in modern LLM math-evaluation pipelines.
We use the provided ground-truth final answers for automatic grading after normalization (e.g., stripping formatting such as \texttt{\textbackslash boxed\{\}} when applicable).
We report pass@4 and avg@4.

\paragraph{ZebraLogic.}
ZebraLogic~\citep{lin2025zebralogic} is a logical reasoning benchmark of logic-grid (Einstein/Zebra) puzzles derived from constraint satisfaction problems (CSPs).
It contains 1{,}000 automatically generated puzzles with \emph{controllable and quantifiable} complexity, including a wide range of search-space sizes and logical constraint structures (e.g., measured via SMT-solver conflict statistics).
Each instance provides a narrative and a set of clues; the model must output a complete, globally consistent assignment (we use the benchmark’s structured output format for parsing).
We report pass@64 and avg@64.

\paragraph{AutoLogi.}
AutoLogi~\citep{zhu2025autologi} is a bilingual (English/Chinese) \emph{open-ended} logic-puzzle benchmark designed to avoid multiple-choice guessing effects.
It reformulates problems from established logical-reasoning assessments (e.g., AR-LSAT and LogiQA) into constraint-based puzzles, and pairs each puzzle with \emph{programmatic verification}: a format specification (JSON schema), a format verifier, a constraint verifier, and a traversal/enumeration procedure to validate solvability and filter invalid instances.
The base AutoLogi benchmark (testing data, Stage~2) comprises 206 English and 139 Chinese puzzles; an augmented version expands the set via constraint expansion/reduction to create a range of difficulties.
In our evaluation, we use the official verifiers for automatic grading and report pass@64 and avg@64.

\paragraph{ARBench.}
AR-Bench (ARBench)~\citep{zhou2025active} evaluates \emph{active reasoning under incomplete information}, where an LLM must interact to acquire missing evidence before answering.
AR-Bench contains 6{,}040 interactive puzzles spanning three task families: \textbf{Detective Cases (DC)} (interrogation-style cases with multiple suspects and noisy/role-dependent feedback), \textbf{Situation Puzzles (SP)} (lateral-thinking mysteries solved through yes/no questioning), and \textbf{Guessing Numbers (GN)} (deducing a hidden 4-digit code from structured match/misplacement feedback).
For each episode, the model alternates between proposing an information-seeking question (or a guess) and receiving environmental feedback; success depends on both the quality of the question and the reasoning based on the acquired information.
We allow up to 25 interaction rounds and report pass@25 and avg@25.

\paragraph{PlanBench.}
PlanBench~\citep{valmeekam2023planbench} is an extensible benchmark suite for evaluating planning and reasoning about actions and change, grounded in classical planning (IPC-style) domains represented in PDDL and rendered as natural-language prompts.
PlanBench tests eight planning-related capabilities: plan generation, cost-optimal planning, plan verification, reasoning about plan execution, robustness to goal reformulation, plan reuse, replanning under unexpected events, and plan generalization.
The benchmark is initialized with domains such as Blocksworld and Logistics (including obfuscated ``mystery'' variants), and provides automated executors/validators for scoring.
In our experiments, we use the Blocksworld plan-generation subset and report pass@8 and avg@8.

\end{document}